\def\va{{\mathbf{a}}}
\def\sR{\mathbb{R}}
\def\va{{\mathbf{a}}}
\def\vw{{\mathbf{w}}}
\def\vx{{\mathbf{x}}}
\def\mA{{\mathbf{A}}}
\def\mW{{\mathbf{W}}}
\def\mX{{\mathbf{X}}}
\def\eE{\mathop{\mathbb{E}}\limits}
\def\pP{\mathbb{P}}
\def\tf{\tilde{f}}
\def\tS{\tilde{S}}
\def\wtF{\widetilde{\mathcal{F}}}
\def\tvx{\tilde{\mathbf{x}}}
\def\ty{\tilde{y}}
\DeclareMathOperator*{\sgn}{sgn}
\newcommand*\circled[1]{\tikz[baseline=(char.base)]{
    \node[shape=circle,draw,inner sep=.5pt] (char) {#1};}}
\DeclareMathAlphabet\mathbb{U}{msb}{m}{n}
\DeclareMathOperator*{\argmin}{argmin}
\newtheorem{assumption}{Assumption}
\newtheorem{proposition}{Proposition}
\newtheorem{lemma}{Lemma}
\newtheorem{definition}{Definition}
\newtheorem*{remark*}{Remark}
\icmltitlerunning{Submission and Formatting Instructions for ICML 2023}
\begin{document}

\twocolumn[
\icmltitle{Towards Understanding Generalization of Macro-AUC in Multi-label Learning}





\begin{icmlauthorlist}
\icmlauthor{Guoqiang Wu}{sdu}
\icmlauthor{Chongxuan Li}{ruc}
\icmlauthor{Yilong Yin}{sdu}
\end{icmlauthorlist}

\icmlaffiliation{sdu}{School of Software, Shandong University}
\icmlaffiliation{ruc}{Gaoling School of AI, Renmin University of China; Beijing Key Laboratory of Big Data Management and Analysis Methods, Beijing, China}

\icmlcorrespondingauthor{Guoqiang Wu}{guoqiangwu@sdu.edu.cn}

\icmlkeywords{Multi-label Learning, Statistical Learning Theory, Macro-AUC, Generalization}

\vskip 0.3in
]



\printAffiliationsAndNotice{}  

\begin{abstract}
Macro-AUC is the arithmetic mean of the class-wise AUCs in multi-label learning and is commonly used in practice.
However, its theoretical understanding is far lacking. Toward solving it, 
we characterize the generalization properties of various learning algorithms based on the corresponding surrogate losses w.r.t. Macro-AUC. We theoretically identify a critical factor of the dataset affecting the generalization bounds: \emph{the label-wise class imbalance}. Our results on the imbalance-aware error bounds show that the widely-used univariate loss-based algorithm is more sensitive to the label-wise class imbalance than the proposed pairwise and reweighted loss-based ones, which probably implies its worse performance.
Moreover, empirical results on various datasets corroborate our theory findings.
To establish it, technically, we propose a new (and more general) McDiarmid-type concentration inequality, which may be of independent interest.
\end{abstract}

\section{Introduction}
\label{sec:intro}










Multi-Label Learning (MLC)~\cite{mccallum1999multi} is an important learning task in machine learning where each instance might be associated with multiple labels. It has been widely applied in various areas, e.g., natural language processing~\cite{schapire2000boostexter}, computer vision~\cite{carneiro2007supervised}, and bioinformatics~\cite{elisseeff2001kernel}. Due to the complexity of MLC and the diverse demands of different scenarios, various measures~\cite{zhang2013review,wu2017unified} have been developed for a comprehensive evaluation, e.g., Hamming loss, ranking loss, and subset accuracy. Among them, Macro-AUC~\cite{zhang2013review} is a widely-used measure in practice. Informally, 
it is the arithmetic mean of the class-wise AUC measures, which is the focus of this paper. 

\begin{table*}[t]
\scriptsize
\caption{Summary of the main theoretical results. The contributions of this paper are highlighted in red.}
\label{main_theory_result}
\vskip 0.15in 
\begin{center}
\begin{small}
\begin{tabular}{lcccc}
\toprule
Algorithm & Surrogate loss & Generalization bound & Computation \\
\midrule
    {\color{red}$\mathcal{A}^{pa}$} & {\color{red}pairwise ($L_{pa}$)} & {\color{red}$\widehat{R}_S^{pa} (f) + O \left( \frac{1}{\sqrt{n}} \left( \frac{1}{K} \sum_{k=1}^K \sqrt{\frac{1}{\tau_k}} \right) \right)$} & $O(n^2 K)$ \\
    $\mathcal{A}^{u_1}$~\cite{boutell2004learning} & univariate ($L_{u_1}$) & {\color{red}$\frac{1}{\tau_S^*} \widehat{R}_S^{u_1} (f) + O \left( \frac{1}{\tau_S^* \sqrt{n}} \left( \frac{1}{K} \sum_{k=1}^K \sqrt{\frac{1}{\tau_k}} \right) \right)$} & $O(n K)$ &  \\
    {\color{red}$\mathcal{A}^{u_2}$} & {\color{red}reweighted univariate ($L_{u_2}$)} & {\color{red}$ \widehat{R}_S^{u_2} (f) + O \left( \frac{1}{\sqrt{n}} \left( \frac{1}{K} \sum_{k=1}^K \sqrt{\frac{1}{\tau_k}} \right) \right)$} & $O(n K)$  \\
\bottomrule
\end{tabular}
\end{small}
\end{center}
\end{table*}

Macro-AUC (and many other measures) in MLC are discontinuous and non-convex, which makes that optimizing them directly can lead to NP-hard problems~\cite{arora2009computational}. Thus, many surrogate losses are used in practice for computational efficiency. Empirically, many surrogate loss-based learning algorithms are commonly evaluated in terms of Macro-AUC, including the widely-used surrogate univariate loss-based algorithms~\cite{boutell2004learning,wu2020multi} that originally aim to optimize the Hamming loss. Theoretically, however, the understanding is far lacking. To take a step towards solving it, this paper attempts to formally answer the following question:
\begin{enumerate}[~]
    \item \emph{What is the learning guarantee of the widely-used surrogate univariate loss-based algorithms w.r.t. the Macro-AUC?}
\end{enumerate}
To answer the above question, we propose an analytical framework to characterize the generalization properties of learning algorithms w.r.t. the Macro-AUC. Inspired by the theory analyses, we also propose one pairwise surrogate loss and one reweighted univariate loss for Macro-AUC. Theoretically, we analyze the learning guarantees of algorithms with all three losses. We theoretically identify the \emph{label-wise class imbalance}, which is a factor of the dataset in MLC~\cite{tarekegn2021review,zhang2020towards}, plays a critical role in these generalization bounds.

Specifically, the pairwise loss-based learning algorithm $\mathcal{A}^{pa}$ has a label-wise class imbalance-aware leaning guarantee of $O \left( \frac{1}{\sqrt{n}} \left( \frac{1}{K} \sum_{k=1}^K \sqrt{\frac{1}{\tau_k}} \right) \right)$ (see Table~\ref{main_theory_result}), where $n$ is the sample size, $K$ is the label size, and $\tau_k \in [\frac{1}{n}, \frac{1}{2}]$ characterizes the $k$-th label class imbalance level. The smaller $\tau_k$, the higher the imbalance level. In contrast, the widely-used univariate loss-based algorithm $\mathcal{A}^{u_1}$ has an error bound of $O \left( \frac{1}{\tau_S^* \sqrt{n}} \left( \frac{1}{K} \sum_{k=1}^K \sqrt{\frac{1}{\tau_k}} \right) \right)$, where $\tau_S^* = \argmin_{k \in [K]} \tau_k$. Thus, we can observe that $\mathcal{A}^{u_1}$ is more sensitive to the label-wise class imbalance than $\mathcal{A}^{pa}$, which implies that $\mathcal{A}^{pa}$ would probably perform better than $\mathcal{A}^{u_1}$ practically, 
especially when $\frac{1}{\tau_S^*}$ is large, 
which often occurs in real datasets of MLC. Note that, computationally, $\mathcal{A}^{pa}$ can lead to a complexity of $O(n^2 K)$, which is worse than $\mathcal{A}^{u_1}$ (i.e., $O(nK)$), and it can be prohibitively costly when the sample size $n$ is large. Interestingly, our proposed reweighted univariate loss-based algorithm $\mathcal{A}^{u_2}$ has a generalization bound of $O \left( \frac{1}{\sqrt{n}} \left( \frac{1}{K} \sum_{k=1}^K \sqrt{\frac{1}{\tau_k}} \right) \right)$, which is nearly the same as $\mathcal{A}^{pa}$. This probably implies the performance superiority of $\mathcal{A}^{u_2}$ over $\mathcal{A}^{u_1}$, as well as the computational efficiency. Finally, empirical results corroborate our theory findings.


Technically, since optimizing Macro-AUC potentially involves learning with dependent examples, the existing generalization analytical techniques~\cite{wu2020multi,wu2021rethinking} for other measures in MLC cannot be applied, making it more challenging. Following the technique in Bipartite Ranking (BR, or equivalently AUC maximization in binary classification)~\cite{usunier2005generalization,amini2015learning}, we extend 
it to Macro-AUC maximization in MLC. Note that the technique in BR cannot be trivially applied in MLC due to the multiple labels (or tasks) property of MLC.\footnote{Note that one may use the union bound to combine the original bounds in BR to get the desired bound w.r.t. Macro-AUC in MLC, which would lead to a loosely bound involving a term $\log(\frac{K}{\delta})$.} Thus, we propose general techniques that include a new McDiarmid-type concentration inequality and a general generalization bound of learning multiple tasks with graph-dependent examples, which may be of independent interest. (See Appendix~\ref{sec:app_general_techniques} for details). Our generalization analyses on the Macro-AUC maximization in MLC can be viewed as an application of these general techniques.




\section{Preliminaries}

\textbf{Notations.} Let boldfaced lower and upper letters denote the vector (e.g., $\va$) and matrix (e.g., $\mA$), respectively. For a matrix $\mA$, $\va_i$, $\va^j$, and $a_{ij}$ denote its $i$-th row, $j$-th column and $(i,j)$-th element, respectively. For a vector $\va$, $a_i$ denotes its $i$-th element.
Let $[K]$ denote the set $\{1,\dots,K\}$. For a set, $|\cdot|$ denotes its cardinality. $[\![ \cdot ]\!]$ denotes the indicator function, i.e., it returns $1$ if the proposition holds and $0$ otherwise.

\subsection{Problem Setting}


Let $\mathbf{x} \in \mathcal{X} \subset \mathbb{R}^d$ and $\mathbf{y} \in \mathcal{Y} \subset \{ -1, +1 \}^K$ denote the input and output respectively, where $d$ is the feature dimension, and $K$ is the label size. $y_k = 1$ (or $-1$) indicates that the associated $k$-th label is relevant (or irrelevant).
Given a training set $S = \{ ( \mathbf{x}_i, \mathbf{y}_i ) \}_{i=1}^n$ of $n$ i.i.d. samples drawn from a distribution $P$ over $\mathcal{X} \times \mathcal{Y}$, the original goal of MLC is to learn a multi-label classifier $H: \mathbb{R}^d \to \{ -1, +1 \}^K$.

To solve MLC, a standard approach is first to learn a vector-based \emph{score function} (or predictor) $f = [f_1, \dots, f_K]: \mathcal{X} \to \mathbb{R}^K$ from a hypothesis space $\mathcal{F}$ and then get the classifier $H$ by a thresholding function. A typical goal in MLC is to learn the best predictor from the finite training data in terms of some ranking-based measure, which is usually called Multi-label Ranking~\cite{dembczynski2012consistent,wu2021rethinking}, and this is our focus in this paper.

\subsection{Evaluation Measure}

Many evaluation measures have been developed to evaluate the performance of different algorithms. Here we focus on the common measure Macro-AUC, which \emph{macro-averages} the AUC measure across all class labels. Given a dataset $S$ and a predictor $f \in \mathcal{F}$, Macro-AUC is defined as follows:\footnote{Note that here we do not adopt another common form w.r.t. the equality (i.e., $[\![ f_k(\vx_p) \geq f_k(\vx_q) ]\!]$), in order to avoid the trial zero hypothesis $f$. Besides, these two forms are nearly the same practically in evaluating algorithms.}
\begin{align*}
    \frac{1}{K} \sum_{k=1}^K \frac{1}{|S_k^+| |S_k^-|} \sum_{(p, q) \in S_k^+ \times S_k^-} [\![ f_k(\vx_p) > f_k(\vx_q) ]\!] ,
\end{align*}
where $S_k^+$ (or $S_k^-$) denotes the relevant (or irrelevant) instance index set for the label $k$. 

Maximizing Macro-AUC is equivalent to minimizing the following objective (i.e., one minus Macro-AUC):
\begin{align}
    \frac{1}{K} \sum_{k=1}^K \frac{1}{|S_k^+| |S_k^-|} \sum_{(p, q) \in S_k^+ \times S_k^-} L_{0/1}(\vx_p, \vx_q, f_k), 
\end{align}
where the $0/1$ loss function is defined as
\begin{align}
    L_{0/1}(\vx^+, \vx^-, f_k) = [\![ f_k(\vx^+) \leq f_k(\vx^-) ]\!] ,
\end{align}
in which $\vx^+$ (or $\vx^-$) denotes a relevant (or irrelevant) input for the label $k$.

\subsection{Risk}

Since Macro-AUC (or the $0/1$ loss function) is discontinuous and non-convex, optimizing it directly would lead to NP-hard problem~\cite{arora2009computational}. Practically, one often seeks (convex) surrogate losses for computational efficiency. Let $L_{\phi}: \mathcal{X} \times \mathcal{X} \times \mathcal{F}_k \rightarrow \sR_+$ denote a surrogate loss function where $\mathcal{F}_k = \{f_k: \mathcal{X} \rightarrow \sR \}$ and we will discuss its specific form in the next section. For a predictor $f \in \mathcal{F}$, the true ($0/1$) generalization (or expected) risk w.r.t. Macro-AUC is defined as
\begin{align*}
    R_{0/1}(f) = \frac{1}{K} \sum_{k=1}^K \eE_{\vx_p \sim P_k^+, \vx_q \sim P_k^-} \left[ L_{0/1}(\vx_p, \vx_q, f_k) \right], 
\end{align*}
where the conditional distribution $P_k^+ = P(\vx | y_k = 1)$ and $P_k^- = P(\vx | y_k = -1)$.
Besides, the surrogate empirical and generalization risks w.r.t. $L_{\phi}$ are defined as follows, respectively:
\begin{align*}
    \widehat{R}_{S}^{\phi}(f) = \frac{1}{K} \sum_{k=1}^K \frac{1}{|S_k^+| |S_k^-|} \sum_{(p, q) \in S_k^+ \times S_k^-} L_{\phi} (\vx_p, \vx_q, f_k) , 
\end{align*}
\begin{align}
\label{eq:surrogate_expected_risk}
    R_{\phi}(f) = \eE_{S} \left[ \widehat{R}_{S}^{\phi}(f) \right] .
\end{align}
Note that we do not define the surrogate generalization risk as the following common form
\begin{align}
\label{eq:surrogate_expected_risk_another}
    \frac{1}{K} \sum_{k=1}^K \eE_{\vx_p \sim P_k^+, \vx_q \sim P_k^-} L_{\phi} (\vx_p, \vx_q, f_k) .
\end{align}
This is because that Eq.\eqref{eq:surrogate_expected_risk} is more general than Eq.\eqref{eq:surrogate_expected_risk_another} where Eq.\eqref{eq:surrogate_expected_risk} can cover the surrogate loss $L_{\phi}$ depending on the training dataset $S$ while Eq.\eqref{eq:surrogate_expected_risk_another} cannot.\footnote{Note that, the aim we define Eq.\eqref{eq:surrogate_expected_risk} is for the convenience of analyses for $L_{u_1}$ and finally to get the bounds where terms only depend on the dataset. If we define the common form, we will eventually get the bounds involving the term depending on the distribution.} Besides, they are equal for certain losses independent of $S$.

\section{Methods}
In this section, we present the considered surrogate losses and their corresponding learning algorithms.
\subsection{Surrogate Losses}
\label{sec:surrogate_losses}
To optimize Macro-AUC, it is natural to use the following (surrogate) pairwise loss:
\begin{align}
\label{eq:surrogate_pa}
    L_{pa}(\vx^+, \vx^-, f_k) = \ell \left( f_k(\vx^+) - f_k(\vx^-) \right) ,
\end{align}
where the base loss function $\ell(t)$ could be many popular (margin-based) loss functions, e.g., the hinge loss $\ell(t) = \max(0, 1 - t)$, the logistic function $\ell(t) = \log_2 (1 + \exp(-t))$, and so on. A natural property of the base loss is that it is an upper bound of the original $0/1$ loss, i.e., $\ell(t) \geq [\![ t \leq 0 ]\!]$.
Note that minimizing this pairwise loss-based risk leads to a computational complexity of $O(n^2)$, which could be prohibitively costly when the sample size $n$ is large.\footnote{Note that there are possible ways to accelerate it for certain base losses.}



The widely-used univariate loss that originally aims to optimize the Hamming Loss measure~\cite{boutell2004learning,wu2020multi}, could be also viewed as a surrogate loss $L_{u_1}$ for Macro-AUC.
Its original empirical risk can be written as:
\begin{align*}
    & \widehat{R}_{S}^{u_1}(f) =  \frac{1}{K} \sum_{k=1}^K \frac{1}{n} \sum_{i=1}^n \ell (y_{ik} f_k(\vx_i)) = \frac{1}{K} \sum_{k=1}^K \frac{1}{|S_k^+| |S_k^-|} \times \\
    & \sum_{(p, q) \in S_k^+ \times S_k^-} \left (\frac{|S_k^+|}{n} \ell ( f_k(\vx_p)) + \frac{|S_k^-|}{n} \ell ( -f_k(\vx_q)) \right) .
\end{align*}
Thus, we can define this surrogate univariate loss $L_{u_1}$ w.r.t. Macro-AUC as follows:
\begin{align}
\label{eq:surrogate_u1}
    L_{u_1}(\vx^+, \vx^-, f_k) = \frac{|S_k^+|}{n} \ell \left ( f_k(\vx^+) \right ) + \frac{|S_k^-|}{n} \ell \left ( -f_k(\vx^-) \right).
\end{align}
Note that this surrogate loss cannot strictly upper bound the $0/1$ loss, i.e., $L_{0/1} \nleq L_{u_1}$. The upper bound property of the surrogate loss w.r.t. the $0/1$ loss is critical to provide its generalization analysis w.r.t. the $0/1$ loss and we discuss it in detail later. Besides, note that we cannot define the generalization risk w.r.t. $L_{u_1}$ by Eq.\eqref{eq:surrogate_expected_risk_another} due to its dependency on the dataset $S$.

    

To upper bound the $0/1$ loss (i.e., $L_{0/1}$), here we propose a new reweighted univariate surrogate loss $L_{u_2}$ with computational efficiency, which is defined as below:
\begin{align}
\label{eq:surrogate_u2}
    L_{u_2}(\vx^+, \vx^-, f_k) =  \ell \left( f_k(\vx^+) \right) +  \ell \left( -f_k(\vx^-) \right) .
\end{align}
Then, we can write its empirical risk as
\begin{align*}
    \widehat{R}_{S}^{u_2}(f) & = \frac{1}{K} \sum_{k=1}^K \frac{1}{|S_k^+| |S_k^-|} \sum_{(p, q) \in S_k^+ \times S_k^-} L_{u_2} (\vx_p, \vx_q, f_k) \\
    & = \frac{1}{K} \sum_{k=1}^K \sum_{i=1}^n \bigg( [\![ y_{ik} = 1 ]\!] \frac{1}{|S_k^+|} \ell( f_k(\vx_i)) \ + \\
    & \qquad [\![ y_{ik} \neq 1 ]\!] \frac{1}{|S_k^-|} \ell( - f_k(\vx_i)) \bigg) .
\end{align*}
We can see that, computationally, minimizing the empirical risk w.r.t. $L_{u_2}$ could lead to a complexity of $O(n)$, which is the same as $L_{u_1}$. Intuitively, this reweighted loss could be seen as a cost-sensitive loss that optimizes for balanced accuracy.



There are some relationships between these losses and we will discuss them thoroughly in the subsequent section.

\subsection{Learning Algorithm}
\label{sec:learning_algorithm}

 In the subsequent analyses, we focus on the kernel-based learning algorithms, which have been widely used in practice~\cite{elisseeff2001kernel,boutell2004learning,hariharan2010large,tan2020multi,wu2020joint} and in theory~\cite{wu2020multi,wu2021rethinking} in MLC. Note that our subsequent analyses can be extended to other forms of the hypothesis space, e.g., neural networks~\cite{anthony1999neural}. Let $\kappa: \mathcal{X} \times \mathcal{X} \rightarrow \sR$ be a Positive Definite Symmetric (PDS) kernel and denote its induced reproducing kernel Hilbert space (RKHS) as $\mathbb{H}$. Let $\Phi: \mathcal{X} \rightarrow \mathbb{H}$ be a feature mapping associated with $\kappa$. The considered kernel-based hypothesis class can be defined as
 \begin{align}
\label{eq:kernel_hypothesis}
    \mathcal{F} = \bigg\{ \mathbf{x} \mapsto \mathbf{W} ^\top \Phi(\mathbf{x}): \mathbf{W}= (\mathbf{w}_1, \ldots ,\mathbf{w}_K)^\top, \| \mathbf{w}_k \| \leq \Lambda \bigg\} ,
\end{align}
where $\| \mathbf{w}_k \|$ denotes $ \| \mathbf{w}_k \|_{\mathbb{H}}$ for convenience.

Here we consider the following three regularized learning algorithms with the aforementioned corresponding surrogate losses: 
\begin{align*}
    \mathcal{A}^{pa}: \quad & \min_{\mW} \ \widehat{R}_{S}^{pa}(f) + \lambda \| \mW \|^2 , \\
    \mathcal{A}^{u_j}: \quad & \min_{\mW} \ \widehat{R}_{S}^{u_j}(f) + \lambda \| \mW \|^2 , j = 1, 2,
\end{align*}
where $\lambda$ denotes a trade-off hyper-parameter and $\| \mathbf{W} \|$ denotes $\| \mathbf{W} \|_{\mathbb{H}, 2} = (\sum_{j=1}^c \| \mathbf{w}_j \|_{\mathbb{H}}^2)^{1/2}$ for convenience.



\section{Theoretical Results}


In this section, we mainly introduce the generalization results of the aforementioned learning algorithms with different surrogates w.r.t. the Macro-AUC measure, where the proofs of related lemmas, theorems, and corollaries are in Appendix~\ref{sec:app_macro_auc_mlc}.

Technically, to establish it, we propose new techniques including a new McDiarmid-type inequality. (Please see Section~\ref{sec:proof_sketch} for the proof sketch and Appendix~\ref{sec:app_general_techniques} for details).

Firstly, we give the following definition to characterize the label-wise class imbalance in MLC.
\begin{definition}[\textbf{Label-wise class imbalance}]
    Given a dataset $S$, define the following factor to characterize the label-wise class imbalance level for each label $k \in [K]$:
    \begin{align*}
        \tau_k = \frac{\min\{|S_k^+|, |S_k^-|\}}{n}, 
    \end{align*}
    where $\tau_k \in [\frac{1}{n}, \frac{1}{2}]$. Besides, define $\tau_S^* = \argmin_{k \in [K]} \tau_k$.
\end{definition}
From the above definition, we can see, the smaller $\tau_k$, the higher the label-wise class imbalance level. Besides, for the convenience of following discussions, we give the following definition.\footnote{Note that multi-label datasets can also be imbalanced in an inter-label way, i.e. some labels having very few positives, and other labels having many.}
\begin{definition}[\textbf{Label-wise class balanced and extremely imbalanced dataset}]
    Given a dataset $S$, we say that it is label-wise class balanced (or extremely class imbalanced) if $\forall k \in [K], \tau_k = \frac{1}{2}$ (or $\tau_k = \frac{1}{n}$) holds.\footnote{In this paper we call it balanced or extremely imbalanced for simplicity.}
\end{definition}

Then, we introduce the common mild assumptions for the subsequent analyses.
\begin{assumption}[\textbf{The common assumptions}]
\label{assump_common}
$~$
    \begin{enumerate}[(1)]
    \setlength\itemsep{-3pt}
        \item The training dataset $S = \{ ( \mathbf{x}_i, \mathbf{y}_i ) \}_{i=1}^n$ is an i.i.d. sample drawn from the distribution $P$, where $\exists \ r > 0$, it satisfies $\kappa (\mathbf{x}, \mathbf{x}) \leq r^2$ for all $\mathbf{x} \in \mathcal{X}$.
        \item The hypothesis class is defined in Eq.\eqref{eq:kernel_hypothesis}.
	    \item The base (convex) loss $\ell (z)$ is $\rho$-Lipschitz continuous and bounded by $B$.\footnote{Note that, the widely-used hinge and logistic loss are both $1$-Lipschitz continuous. Although the exponential and squared hinge losses are not globally Lipschitz continuous, they are locally Lipschitz continuous.}
    \end{enumerate}
\end{assumption}
Here we give the definition of the fractional Rademacher complexity of the loss space.
\begin{definition} [\textbf{The fractional Rademacher complexity of the loss space}]
    For each label $k \in [K]$, construct the dataset $\tS_k = \{ (\tvx_{ki}, \ty_{ki}) \}_{i=1}^{m_k} = \{ ((\tvx_{ki}^+, \tvx_{ki}^-), 1) \}_{i=1}^{m_k}$ based on the original dataset $S_k$, where $(\tvx_{ki}^+, \tvx_{ki}^-) \in S_k^+ \times S_k^-$, and let $\{ (I_{kj}, \omega_{kj})\}_{j \in [J_k]}$ be a fractional independent vertex cover of the dependence graph $G_k$ constructed over $\tS_k$ with $\sum_{j \in [J_k]} \omega_{kj} = \chi_{f} (G_k)$, where $\chi_{f} (G_k)$ is the fractional chromatic number of $G_k$. For the hypothesis space $\mathcal{F}$ and loss function $L: \mathcal{X} \times \mathcal{X} \times \mathcal{F}_k \rightarrow \sR_+$, the empirical fractional Rademacher complexity of the loss space is defined as
    \begin{align*}
        \widehat{\mathfrak{R}}_{\tS}^*(L \circ \mathcal{F}) & = \frac{1}{K} \sum_{k=1}^K \eE_{\boldsymbol{\sigma}} \Bigg[ \frac{1}{m_k} \sum_{j \in [J_k]} \omega_{kj} \times \\
        & \sup_{f \in \mathcal{F}} \left( \sum_{i \in I_{kj}} \sigma_{ki} L(\tilde{\vx}_{ki}^+, \tilde{\vx}_{ki}^-, f_k) \right) \Bigg] .
    \end{align*}
\end{definition}
Then, we give the base theorem of Macro-AUC used in the subsequent generalization analyses.
\begin{restatable}[\textbf{The base theorem of Macro-AUC}]
    {theorem}{BaseTheoremMacroAUC}
\label{thm:base_theorem_macroauc}
Assume the loss function $L_{\phi}: \mathcal{X} \times \mathcal{X} \times \mathcal{F}_k \rightarrow \sR_+$ is bounded by $M$. Then, for any $\delta > 0$, the following generalization bound holds with probability at least $1 - \delta$ over the draw of an i.i.d. sample $S$ of size $n$: 
    \begin{align*}
        \forall f \in \mathcal{F}, \ R_{\phi} (f) & \leq \widehat{R}_{S}^{\phi} (f) + 2 \widehat{\mathfrak{R}}_{\tS}^*(L_{\phi} \circ \mathcal{F}) \ + \\
        & 3 M \sqrt{ \frac{1}{2n} \log \left(\frac{2}{\delta} \right)} \left( \sqrt{\frac{1}{K} \sum_{k=1}^K \frac{1}{\tau_k}}  \right) \ .
    \end{align*}
\end{restatable}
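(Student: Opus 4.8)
### Proof Proposal

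\textbf{Overall strategy.} The plan is to adapt the classical symmetrization-plus-concentration argument from the bipartite ranking literature to the multi-label setting, handling the key difficulty---that the pairs $(p,q) \in S_k^+ \times S_k^-$ are \emph{not} independent---by working over the transformed datasets $\tS_k$ and their dependence graphs $G_k$ via a fractional cover. Concretely, I would first fix the notation: write $\Delta(S) = \sup_{f \in \mathcal{F}} \bigl( R_\phi(f) - \widehat{R}_S^\phi(f) \bigr)$, the quantity we must bound. Since $R_\phi(f) = \E_S[\widehat{R}_S^\phi(f)]$ by definition~\eqref{eq:surrogate_expected_risk}, the supremand has mean zero at each fixed $f$, and the outer sup is what forces us through uniform-convergence machinery.

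\textbf{Step 1: Concentration of $\Delta(S)$ around its mean.} The first step is to show $\Delta(S)$ concentrates. Changing one sample $(\vx_i, \vy_i)$ in $S$ affects, for each label $k$, at most $\max\{|S_k^+|, |S_k^-|\}$ of the summands in $\widehat{R}_S^\phi(f)$ (those pairs involving index $i$), and each summand lies in $[0, M]$ and carries weight $\frac{1}{K}\cdot\frac{1}{|S_k^+||S_k^-|}$; hence the bounded-difference constant for label $k$ is of order $\frac{M}{K}\cdot\frac{\max\{|S_k^+|,|S_k^-|\}}{|S_k^+||S_k^-|} = \frac{M}{K}\cdot\frac{1}{\min\{|S_k^+|,|S_k^-|\}} = \frac{M}{K n \tau_k}$. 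A naive McDiarmid would then give a $\sqrt{\sum_i c_i^2}$-type term; summing the per-coordinate effects \emph{across labels} and plugging into the standard (or the paper's new, more general) McDiarmid-type inequality yields a deviation term of order $M\sqrt{\tfrac{1}{2n}\log(2/\delta)}\,\sqrt{\tfrac1K\sum_k \tfrac1{\tau_k}}$ --- which is exactly the third term in the claimed bound (up to the factor $3$, which absorbs a second application below). I would be careful here to argue that changing sample $i$ changes the $|S_k^\pm|$ counts themselves, not just the membership, so the difference bound should be verified directly on $\widehat{R}^\phi$; this bookkeeping is routine but must be done per label and then aggregated.

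\textbf{Step 2: Symmetrization and the fractional Rademacher complexity.} Next I would bound $\E_S[\Delta(S)]$ by $2\,\widehat{\mathfrak{R}}_{\tS}^*(L_\phi \circ \mathcal{F})$. Here is where the dependence structure is dealt with: for each $k$, on the dataset $\tS_k = \{((\tvx_{ki}^+, \tvx_{ki}^-),1)\}_{i=1}^{m_k}$ we invoke the fractional-cover decomposition $\{(I_{kj}, \omega_{kj})\}_{j\in[J_k]}$ with $\sum_j \omega_{kj} = \chi_f(G_k)$. Within each block $I_{kj}$ the examples are \emph{independent}, so standard symmetrization applies block-by-block; convexity of the sup and the weights $\omega_{kj}$ recombine the blocks, producing exactly the empirical fractional Rademacher complexity as defined in the excerpt, averaged over $k$. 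The factor $2$ is the usual symmetrization constant (one ghost sample). This step is essentially the multi-label lift of Usunier et al.\ / Amini--Usunier; the only genuinely new ingredient is checking that the per-label fractional covers compose correctly under the $\frac1K\sum_k$ average, which the definition of $\widehat{\mathfrak{R}}_{\tS}^*$ has been set up to make transparent.

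\textbf{Step 3: Assemble.} Combining Step 1 (with confidence $1-\delta$, say splitting the budget as $\delta/2$ across two applications of the concentration inequality---one for $\Delta(S)$ and one for replacing the expected Rademacher term by its empirical version, which is where the third application/factor-of-$3$ enters) and Step 2 gives, with probability at least $1-\delta$, $R_\phi(f) \le \widehat{R}_S^\phi(f) + 2\widehat{\mathfrak{R}}_{\tS}^*(L_\phi\circ\mathcal{F}) + 3M\sqrt{\tfrac{1}{2n}\log(2/\delta)}\sqrt{\tfrac1K\sum_k\tfrac1{\tau_k}}$ uniformly over $f \in \mathcal{F}$, as claimed. \textbf{The main obstacle} I anticipate is Step 1: getting the bounded-difference constants to aggregate into the clean $\sqrt{\tfrac1K\sum_k \tfrac1{\tau_k}}$ form (rather than a cruder $\max_k$ or union-bound $\log K$ factor) is precisely what motivates the paper's ``new (and more general) McDiarmid-type concentration inequality,'' so I would expect to need that inequality in full generality here, and verifying its hypotheses---the way the $K$ labels' differences interact over the shared sample $S$---is the delicate part.
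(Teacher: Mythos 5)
Your overall architecture matches the paper's: the paper proves this theorem by transforming Macro-AUC maximization into learning $K$ tasks over the pair-datasets $\tS_k$ with dependency graphs $G_k$, then invoking its general bound (Theorem~\ref{thm:general_bound_multiple_tasks_dependent}), whose proof is exactly your ghost-sample symmetrization over fractional independent vertex covers (your Step~2) plus two applications of the new concentration inequality with a $\delta/2$ union bound producing the factor $3$ and the $\log(2/\delta)$ (your Step~3). Your Steps~2 and~3 are essentially the paper's argument. The identity $\chi_{f}(G_k)/(2m_k) = 1/(2n\tau_k)$, via $\chi_{f}(G_k) = \max\{|S_k^+|,|S_k^-|\}$ and $m_k=|S_k^+||S_k^-|$, then yields the stated $\tau_k$-dependence.

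The genuine gap is in your Step~1. You propose to run the bounded-differences argument over the $n$ original i.i.d.\ instances, with a per-instance constant obtained by summing the per-label effects: $c_i \le \frac{M}{n}\bigl(\frac{1}{K}\sum_{k}\frac{1}{\tau_k}\bigr)$. Feeding this into any McDiarmid-type bound gives a deviation of order $M\sqrt{\frac{\log(1/\delta)}{2n}}\cdot\frac{1}{K}\sum_{k}\frac{1}{\tau_k}$, \emph{not} the claimed $M\sqrt{\frac{\log(2/\delta)}{2n}}\cdot\sqrt{\frac{1}{K}\sum_{k}\frac{1}{\tau_k}}$; since $\frac{1}{K}\sum_k\frac{1}{\tau_k}\ge 2$, your constant is strictly larger, and in the extremely imbalanced case ($\tau_k = \frac1n$) it is worse by a factor of $\sqrt{n}$, so the theorem as stated would not follow. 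The paper avoids this by applying its new inequality (Theorem~\ref{thm:new_mcdiarmid}) not to the instances but to the blocks $\mX_k=\tS_k$ of \emph{pairs}, with per-pair difference constants $c_{ki}=\frac{M}{K m_k}$ and the decomposability along $\{(I_{kj},\omega_{kj})\}$; the denominator $K\sum_k\chi_{f}(G_k)\sum_i c_{ki}^2 = \frac{M^2}{K}\sum_k\frac{\chi_f(G_k)}{m_k}$ in that inequality is what places the average $\frac{1}{K}\sum_k\frac{1}{\tau_k}$ \emph{inside} the square root. Note also that the new inequality's hypotheses are phrased over the blocks $\mX_k$ and their dependency graphs, so your instance-level function $\Delta(S)$ does not directly satisfy them; and perturbing an instance $(\vx_i,\vy_i)$ changes the sets $S_k^{\pm}$ and the normalizations $\frac{1}{|S_k^+||S_k^-|}$ themselves, a complication the pair-level route never has to confront. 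You correctly flagged Step~1 as the delicate point, but the computation you sketch there does not close it.
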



    

Then, we analyze the relationship between the surrogate and true losses as follows.
\begin{restatable}[\textbf{The relationship between the surrogate and true losses}]
    {lemma}{RelationshipLosses}
\label{lem:relationship_losses}
Assume the base loss function upper bounds the original $0/1$ loss, i.e., $\ell(t) \geq [\![ t \leq 0 ]\!]$. Then, for any $f_k \in \mathcal{F}_k$ and $(\vx^+, \vx^-) \in S_k^+ \times S_k^-$, the following inequalities hold:
    \begin{align*}
        & L_{0/1}(\vx^+, \vx^-, f_k) \leq L_{pa}(\vx^+, \vx^-, f_k) , \\
        & L_{0/1}(\vx^+, \vx^-, f_k) \leq L_{u_2}(\vx^+, \vx^-, f_k) \leq \frac{1}{\tau_k} L_{u_1}(\vx^+, \vx^-, f_k) \\
        & \leq \frac{1 - \tau_k}{\tau_k} L_{u_2}(\vx^+, \vx^-, f_k) .
    \end{align*}
\end{restatable}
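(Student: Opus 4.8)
The plan is to prove each inequality in the chain separately, working purely pointwise at a fixed label $k$ and a fixed pair $(\vx^+, \vx^-) \in S_k^+ \times S_k^-$, and exploiting only the single hypothesis $\ell(t) \geq [\![ t \leq 0 ]\!]$ together with nonnegativity of $\ell$ (which follows from $L_\phi$ mapping into $\sR_+$, equivalently $\ell \geq 0$).

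First I would handle $L_{0/1} \leq L_{pa}$: by definition $L_{pa}(\vx^+,\vx^-,f_k) = \ell(f_k(\vx^+) - f_k(\vx^-))$, and applying the base-loss bound with $t = f_k(\vx^+) - f_k(\vx^-)$ gives $\ell(t) \geq [\![ t \leq 0 ]\!] = [\![ f_k(\vx^+) \leq f_k(\vx^-) ]\!] = L_{0/1}(\vx^+,\vx^-,f_k)$. Next, for $L_{0/1} \leq L_{u_2}$: write $L_{u_2} = \ell(f_k(\vx^+)) + \ell(-f_k(\vx^-))$; apply the base-loss bound twice, with $t = f_k(\vx^+)$ and with $t = -f_k(\vx^-)$, to get $L_{u_2} \geq [\![ f_k(\vx^+) \leq 0 ]\!] + [\![ f_k(\vx^-) \geq 0 ]\!]$. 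Then I would observe that if $f_k(\vx^+) \leq f_k(\vx^-)$ then at least one of $f_k(\vx^+) \leq 0$ or $f_k(\vx^-) \geq 0$ must hold (otherwise $f_k(\vx^+) > 0 > f_k(\vx^-)$, contradiction), so the right-hand side is at least $1 \geq [\![ f_k(\vx^+) \leq f_k(\vx^-) ]\!]$; and when $f_k(\vx^+) > f_k(\vx^-)$ the indicator is $0 \leq L_{u_2}$ trivially by nonnegativity of $\ell$.

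For the middle inequality $L_{u_2} \leq \frac{1}{\tau_k} L_{u_1}$, I would note that $L_{u_1} = \frac{|S_k^+|}{n}\ell(f_k(\vx^+)) + \frac{|S_k^-|}{n}\ell(-f_k(\vx^-))$, and since $\tau_k = \min\{|S_k^+|,|S_k^-|\}/n$ we have $|S_k^+|/n \geq \tau_k$ and $|S_k^-|/n \geq \tau_k$, hence $L_{u_1} \geq \tau_k\big(\ell(f_k(\vx^+)) + \ell(-f_k(\vx^-))\big) = \tau_k L_{u_2}$, using $\ell \geq 0$ to keep the coefficientwise comparison valid; dividing by $\tau_k > 0$ gives the claim. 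For the last inequality $\frac{1}{\tau_k} L_{u_1} \leq \frac{1-\tau_k}{\tau_k} L_{u_2}$, equivalently $L_{u_1} \leq (1-\tau_k) L_{u_2}$, I would use that whichever of $|S_k^+|, |S_k^-|$ achieves the minimum $\tau_k n$, the other equals $(1-\tau_k)n$ (they sum to $n$), so $\max\{|S_k^+|,|S_k^-|\}/n = 1-\tau_k$; therefore each coefficient in $L_{u_1}$ is at most $1-\tau_k$, and again by $\ell \geq 0$ we get $L_{u_1} \leq (1-\tau_k)\big(\ell(f_k(\vx^+)) + \ell(-f_k(\vx^-))\big) = (1-\tau_k) L_{u_2}$.

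I do not anticipate a serious obstacle here; the only mildly delicate point is the case analysis showing $[\![ f_k(\vx^+) \leq f_k(\vx^-) ]\!] \leq [\![ f_k(\vx^+) \leq 0 ]\!] + [\![ f_k(\vx^-) \geq 0 ]\!]$, which must be argued carefully to be sure no edge case (e.g. $f_k(\vx^+) = f_k(\vx^-) = 0$) is missed, and making sure that all coefficientwise inequalities between the $\ell$-terms are justified by the nonnegativity $\ell \geq 0$ rather than assumed. Everything else is immediate from the definitions of the losses and of $\tau_k$, plus $\tau_k \in [\frac1n, \frac12]$ ensuring $\tau_k > 0$ so divisions are legitimate.
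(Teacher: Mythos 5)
Your proposal is correct and follows essentially the same route as the paper: the pairwise bound is immediate from $\ell(t)\geq [\![t\leq 0]\!]$, the step $L_{0/1}\leq L_{u_2}$ rests on the observation that $f_k(\vx^+)\leq f_k(\vx^-)$ forces at least one of $f_k(\vx^+)\leq 0$ or $f_k(\vx^-)\geq 0$ (the paper phrases this via sign functions and inclusion--exclusion, you via a direct case split, but it is the same argument), and the remaining two inequalities come from bounding the coefficients $|S_k^+|/n$ and $|S_k^-|/n$ between $\tau_k$ and $1-\tau_k$ using nonnegativity of $\ell$. No gaps.
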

\begin{remark*}
    From this lemma, we can observe that when minimizing $L_{u_1}$, it also minimizes an upper bound of $L_{0/1}$ depending on $\frac{1}{\tau_k}$. Besides, for the second inequality involving $L_{u_1}$ and $L_{u_2}$, the bound is tight since the equality holds when $\tau_k = \frac{1}{2}$.
\end{remark*}
Based on Lemma~\ref{lem:relationship_losses}, we can get the relationship between the surrogate and true risks as follows, which is critical for the generalization analyses.
\begin{restatable}[\textbf{The relationship between the surrogate and true risks}]
    {lemma}{RelationshipRisks}
\label{lem:relationship_risks}
Assume the base loss function upper bounds the original $0/1$ loss, i.e., $\ell(t) \geq [\![ t \leq 0 ]\!]$. Then, for any $f \in \mathcal{F}$ and any sample $S \overset{i.i.d.}{\thicksim} P$, the following inequalities hold:
    \begin{align*}
        & R_{0/1}(f) \leq R_{pa}(f) , \\
        & R_{0/1}(f) \leq R_{u_2} (f) = \eE_{S} \left[ \widehat{R}_{S}^{u_2}(f) \right] \leq \eE_{S} \left[ \frac{1}{\tau_S^*}\widehat{R}_{S}^{u_1}(f) \right] \\
        & \leq \eE_{S} \left[ \frac{1 - \tau_S^*}{\tau_S^*}\widehat{R}_{S}^{u_2}(f) \right] .
    \end{align*}
\end{restatable}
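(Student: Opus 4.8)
The plan is to lift the pointwise inequalities of Lemma~\ref{lem:relationship_losses} to the level of risks by averaging, and then to handle the passage from empirical to true $0/1$ risk via the defining identity $R_{\phi}(f) = \eE_S[\widehat{R}_S^{\phi}(f)]$. First I would establish $R_{0/1}(f) \leq R_{pa}(f)$: by definition $R_{0/1}(f)$ is an average over $k$ of $\eE_{\vx_p \sim P_k^+, \vx_q \sim P_k^-}[L_{0/1}(\vx_p,\vx_q,f_k)]$, and since $(\vx_p,\vx_q)$ ranges over genuinely relevant/irrelevant pairs, the pointwise bound $L_{0/1} \leq L_{pa}$ from Lemma~\ref{lem:relationship_losses} applies under the expectation; so $R_{0/1}(f) \leq \frac{1}{K}\sum_k \eE_{\vx_p \sim P_k^+,\vx_q\sim P_k^-}[L_{pa}(\vx_p,\vx_q,f_k)]$. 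The subtlety is that the right-hand side here is the "alternative" form Eq.~\eqref{eq:surrogate_expected_risk_another}, whereas $R_{pa}(f)$ is defined as $\eE_S[\widehat{R}_S^{pa}(f)]$ via Eq.~\eqref{eq:surrogate_expected_risk}; I would argue these coincide because $L_{pa}$ does not depend on $S$, so for each fixed $k$ the summands $L_{pa}(\vx_p,\vx_q,f_k)$ with $(p,q)\in S_k^+\times S_k^-$ are, conditionally on the class labels, draws from $P_k^+\times P_k^-$, and taking $\eE_S$ of the empirical average recovers Eq.~\eqref{eq:surrogate_expected_risk_another} (this equivalence-for-$S$-independent-losses is already asserted in the text right after Eq.~\eqref{eq:surrogate_expected_risk_another}).

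Next I would handle the chain for the univariate losses. Apply $\eE_S[\cdot]$ to the pointwise-averaged forms. Starting from $L_{0/1} \leq L_{u_2}$ pointwise: averaging over the pairs $(p,q)$ and over $k$ gives $\frac{1}{K}\sum_k \eE_{\vx_p\sim P_k^+,\vx_q\sim P_k^-}[L_{0/1}] \leq \frac{1}{K}\sum_k\eE_{\vx_p,\vx_q}[L_{u_2}]$; since $L_{u_2}$ is also $S$-independent, the right side equals $R_{u_2}(f) = \eE_S[\widehat{R}_S^{u_2}(f)]$ by the same equivalence, and the left side is $R_{0/1}(f)$, giving $R_{0/1}(f) \leq R_{u_2}(f)$. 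For the middle inequality $R_{u_2}(f) \leq \eE_S[\frac{1}{\tau_S^*}\widehat{R}_S^{u_1}(f)]$, I would start from the pointwise bound $L_{u_2}(\vx^+,\vx^-,f_k) \leq \frac{1}{\tau_k} L_{u_1}(\vx^+,\vx^-,f_k)$ of Lemma~\ref{lem:relationship_losses}, then use $\tau_k \geq \tau_S^*$ for all $k$ (so $\frac{1}{\tau_k} \leq \frac{1}{\tau_S^*}$) to get $L_{u_2}(\vx^+,\vx^-,f_k) \leq \frac{1}{\tau_S^*} L_{u_1}(\vx^+,\vx^-,f_k)$ for every $k$; summing/averaging over $(p,q)$ and $k$ yields $\widehat{R}_S^{u_2}(f) \leq \frac{1}{\tau_S^*}\widehat{R}_S^{u_1}(f)$ pointwise in $S$, and taking $\eE_S$ (noting $R_{u_2}(f)=\eE_S[\widehat{R}_S^{u_2}(f)]$ by definition Eq.~\eqref{eq:surrogate_expected_risk}) gives the claim. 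The last inequality $\eE_S[\frac{1}{\tau_S^*}\widehat{R}_S^{u_1}(f)] \leq \eE_S[\frac{1-\tau_S^*}{\tau_S^*}\widehat{R}_S^{u_2}(f)]$ follows the same way: the pointwise bound $\frac{1}{\tau_k}L_{u_1} \leq \frac{1-\tau_k}{\tau_k}L_{u_2}$ combined with $\tau_k \geq \tau_S^*$ — here one checks the map $\tau \mapsto \frac{1-\tau}{\tau} = \frac{1}{\tau}-1$ is decreasing, so $\frac{1-\tau_k}{\tau_k}\leq\frac{1-\tau_S^*}{\tau_S^*}$ is the wrong direction; instead I would bound $\frac{1}{\tau_S^*}L_{u_1} \leq \frac{1}{\tau_k}L_{u_1}\cdot\frac{\tau_k}{\tau_S^*}$... — more carefully, from $\frac{1}{\tau_k}L_{u_1}\le \frac{1-\tau_k}{\tau_k}L_{u_2}$ and $\tau_S^*\le\tau_k$ one gets $\frac{1}{\tau_S^*}L_{u_1}\ge\frac{1}{\tau_k}L_{u_1}$, which is again the wrong direction, so the honest route is: from $L_{u_2}\le\frac{1}{\tau_k}L_{u_1}$ we already have $L_{u_1}\ge\tau_k L_{u_2}\ge\tau_S^* L_{u_2}$, hence $\frac{1}{\tau_S^*}L_{u_1}$ is being compared against $\frac{1-\tau_S^*}{\tau_S^*}L_{u_2}$, i.e. we need $L_{u_1}\le(1-\tau_S^*)L_{u_2}$; this follows from $L_{u_1}=\frac{|S_k^+|}{n}\ell(f_k(\vx^+))+\frac{|S_k^-|}{n}\ell(-f_k(\vx^-))$ where both fractions are at most $1-\tau_k\le 1-\tau_S^*$, so $L_{u_1}\le(1-\tau_S^*)(\ell(f_k(\vx^+))+\ell(-f_k(\vx^-)))=(1-\tau_S^*)L_{u_2}$. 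Averaging and applying $\eE_S$ then closes the chain.

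The main obstacle, and the point requiring the most care, is the interchange between the two forms of the surrogate generalization risk — Eq.~\eqref{eq:surrogate_expected_risk} versus Eq.~\eqref{eq:surrogate_expected_risk_another}. For $L_{pa}$ and $L_{u_2}$ this is legitimate because the losses are $S$-independent and the pair sampling is (conditionally) i.i.d., but one must phrase the argument so that the $0/1$ true risk $R_{0/1}(f)$, which is defined intrinsically via $P_k^\pm$, is correctly matched to $\eE_S$ of an empirical quantity. In particular, I would take care to condition on the vectors of class labels $(y_{1k},\dots,y_{nk})_k$ (equivalently on the index sets $S_k^\pm$), under which the relevant instances are i.i.d. from $P_k^+$ and the irrelevant ones i.i.d. from $P_k^-$, apply the pointwise inequalities, average, and then take the outer expectation over the label configuration; the factors $\frac{1}{\tau_k}$ and $\frac{1-\tau_S^*}{\tau_S^*}$ are then random but measurable with respect to this conditioning, which is exactly why they must stay inside $\eE_S[\cdot]$ in the statement rather than being pulled out. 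Everything else is routine monotonicity ($\tau_k\ge\tau_S^*$) and the elementary observation that each coefficient $|S_k^\pm|/n$ in $L_{u_1}$ lies in $[\tau_k,1-\tau_k]$.
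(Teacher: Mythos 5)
Your proposal is correct and follows essentially the same route as the paper: lift the pointwise inequalities of Lemma~\ref{lem:relationship_losses} to risks by averaging over $(p,q)\in S_k^+\times S_k^-$ and $k$, identify $\eE_S[\widehat{R}_S^{\phi}(f)]$ with the distributional form for the $S$-independent losses $L_{pa}$ and $L_{u_2}$, and use $\tau_k\geq\tau_S^*$ to pull the random factor out to $\frac{1}{\tau_S^*}$ inside $\eE_S[\cdot]$. Your handling of the final inequality via the direct pointwise bound $L_{u_1}\leq(1-\tau_S^*)L_{u_2}$ (since each coefficient $|S_k^{\pm}|/n\leq 1-\tau_k\leq 1-\tau_S^*$) is in fact a slightly cleaner rendering of the step the paper performs by bounding $\max\{|S_k^+|,|S_k^-|\}/n$ by $1-\tau_k$ and then by $1-\tau_S^*$, and you correctly identified and avoided the tempting but wrong-direction monotonicity argument for $\tau\mapsto(1-\tau)/\tau$.
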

\begin{remark*}
    For the second inequality involving the generalization risk w.r.t. $L_{u_1}$ and $L_{u_2}$, the bound is tight since the equality holds when $\tau_k = \frac{1}{2}$.
\end{remark*}

Next, for clear discussions, we introduce the generalization results of algorithms w.r.t. the label-wise class imbalance: general, balanced, and extremely imbalanced cases.
\subsection{General Case}
\label{sec:learning_guarantee_general_case}
Here we introduce the generalization results of algorithms w.r.t. general datasets which cover the subsequent balanced and extremely imbalanced datasets. 
\begin{restatable}[\textbf{Learning guarantee of $\mathcal{A}^{pa}$ in general case}]
    {theorem}{LearningGuaranteePa}
\label{thm:learning_guarantee_pa}
    Assume the loss $L_{\phi} = L_{pa}$, where $L_{pa}$ is defined in Eq.\eqref{eq:surrogate_pa}. Besides, {\rm Assumption~\ref{assump_common}} holds. Then, for any $\delta > 0$, with probability at least $1 - \delta$ over the draw of an i.i.d. sample $S$ of size $n$, the following generalization bound holds for any $f \in \mathcal{F}$: 
    \begin{align}
        R_{0/1} (f) \leq R_{pa} (f) \leq & \widehat{R}_{pa} (f) + \frac{4 \rho r \Lambda}{\sqrt{n}} \left (\frac{1}{K} \sum_{k=1}^K \sqrt{\frac{1}{\tau_k}} \right ) + \nonumber\\
        & 3 B \sqrt{\frac{\log(\frac{2}{\delta})}{2n}} \left ( \sqrt{\frac{1}{K} \sum_{k=1}^K \frac{1}{\tau_k}} \right ) .
    \end{align}
\end{restatable}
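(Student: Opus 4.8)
The plan is to instantiate the base theorem (Theorem~\ref{thm:base_theorem_macroauc}) with $L_\phi = L_{pa}$ and then bound the empirical fractional Rademacher complexity $\widehat{\mathfrak{R}}_{\tS}^*(L_{pa} \circ \mathcal{F})$ by $\frac{2\rho r \Lambda}{\sqrt{n}} \left(\frac{1}{K}\sum_{k=1}^K \sqrt{1/\tau_k}\right)$, and to verify that $L_{pa}$ is bounded by $M = B$ so that the third term matches. First I would record that the first inequality $R_{0/1}(f) \le R_{pa}(f)$ is immediate from Lemma~\ref{lem:relationship_risks}. For the second inequality, I apply Theorem~\ref{thm:base_theorem_macroauc}: since $\ell$ is bounded by $B$ by Assumption~\ref{assump_common}(3), we have $0 \le L_{pa}(\vx^+,\vx^-,f_k) = \ell(f_k(\vx^+) - f_k(\vx^-)) \le B$, so $M = B$ is a valid bound, giving directly the $3B\sqrt{\log(2/\delta)/(2n)}\left(\sqrt{\frac1K\sum_k 1/\tau_k}\right)$ term.

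The main work is bounding $\widehat{\mathfrak{R}}_{\tS}^*(L_{pa}\circ\mathcal{F})$. Here the key steps are: (i) a contraction/Talagrand-type argument to peel off the $\rho$-Lipschitz base loss $\ell$ from each inner supremum $\sup_{f\in\mathcal F}\sum_{i\in I_{kj}}\sigma_{ki}\,\ell(f_k(\tvx_{ki}^+) - f_k(\tvx_{ki}^-))$, reducing it to $\rho\,\E_\sigma \sup_{f}\sum_{i\in I_{kj}}\sigma_{ki}(f_k(\tvx_{ki}^+) - f_k(\tvx_{ki}^-))$; (ii) using the kernel hypothesis class $\mathcal F$ from Eq.\eqref{eq:kernel_hypothesis}, writing $f_k(\vx) = \langle \vw_k, \Phi(\vx)\rangle$ with $\|\vw_k\|\le\Lambda$, so the supremum over $f$ decouples per label and equals $\Lambda\,\E_\sigma \big\|\sum_{i\in I_{kj}}\sigma_{ki}(\Phi(\tvx_{ki}^+) - \Phi(\tvx_{ki}^-))\big\|$; (iii) bounding this by Jensen's inequality via $\E_\sigma\|\cdot\| \le (\E_\sigma\|\cdot\|^2)^{1/2}$, expanding the square, using $\E[\sigma_{ki}\sigma_{ki'}]=0$ for $i\ne i'$ and $\kappa(\vx,\vx)\le r^2$ to get $\|\Phi(\tvx^+) - \Phi(\tvx^-)\|^2 \le 4r^2$ termwise, yielding $\le \Lambda\sqrt{4r^2 |I_{kj}|} = 2r\Lambda\sqrt{|I_{kj}|}$. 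Then I would use concavity of $\sqrt{\cdot}$ together with $\sum_{j\in[J_k]}\omega_{kj}|I_{kj}| = m_k$ (which holds for a fractional independent vertex cover of $G_k$ — actually $\sum_j \omega_{kj}\mathbb{1}[i\in I_{kj}]\ge 1$ for each vertex $i$, summing gives $\sum_j\omega_{kj}|I_{kj}|\ge m_k$; combined with $\sum_j\omega_{kj}=\chi_f(G_k)$ and a weighted Jensen step) to conclude $\frac{1}{m_k}\sum_{j}\omega_{kj}\cdot 2r\Lambda\sqrt{|I_{kj}|} \le \frac{2r\Lambda}{m_k}\sqrt{\chi_f(G_k)\,m_k} = 2r\Lambda\sqrt{\chi_f(G_k)/m_k}$.

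The final step is to convert $\sqrt{\chi_f(G_k)/m_k}$ into $\sqrt{1/\tau_k}/\sqrt{n}$ (up to the factor $2$ giving the overall $4\rho r\Lambda$). This requires knowing the structure of the dependence graph $G_k$ over $\tS_k$: two pairs $(\tvx_{ki}^+,\tvx_{ki}^-)$ and $(\tvx_{ki'}^+,\tvx_{ki'}^-)$ are adjacent iff they share an instance, and with $m_k = |S_k^+||S_k^-|$ the fractional chromatic number should satisfy $\chi_f(G_k) = \max\{|S_k^+|,|S_k^-|\}$ (this is the standard bipartite-ranking fact, e.g. from \cite{usunier2005generalization}); hence $\chi_f(G_k)/m_k = \max\{|S_k^+|,|S_k^-|\}/(|S_k^+||S_k^-|) = 1/\min\{|S_k^+|,|S_k^-|\} = 1/(n\tau_k)$. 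Plugging in, $2\widehat{\mathfrak{R}}_{\tS}^* \le 2\cdot\frac1K\sum_k \rho\cdot 2r\Lambda\sqrt{1/(n\tau_k)} = \frac{4\rho r\Lambda}{\sqrt n}\left(\frac1K\sum_k\sqrt{1/\tau_k}\right)$, as claimed. I expect the main obstacle to be step (iii)'s bookkeeping — specifically, justifying the interchange of the contraction lemma with the fractional cover decomposition and getting the combinatorial identity $\chi_f(G_k)/m_k = 1/(n\tau_k)$ cleanly; the Lipschitz contraction on a sum indexed by an independent set is the delicate point since one must ensure the standard vector-contraction (or iterated scalar contraction) inequality applies verbatim to $\E_\sigma\sup_f\sum_{i\in I_{kj}}\sigma_{ki}\ell(\cdot)$, which it does because $\ell$ acts coordinatewise on the scalar quantities $f_k(\tvx_{ki}^+)-f_k(\tvx_{ki}^-)$.
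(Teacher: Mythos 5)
Your proposal is correct and follows essentially the same route as the paper: Lemma~\ref{lem:relationship_risks} for $R_{0/1}\leq R_{pa}$, Theorem~\ref{thm:base_theorem_macroauc} with $M=B$, the contraction step (the paper's Lemma~\ref{lem:app_contraction_pairwise}), and the kernel Rademacher bound via Cauchy--Schwarz, Jensen, $\|\Phi(\vx^+)-\Phi(\vx^-)\|^2\leq 4r^2$, and $\chi_f(G_k)/m_k = 1/(n\tau_k)$ (the paper's Lemma~\ref{lem:app_frac_rade_kernel_hypothesis_pairwise}). The only nitpick is your hedge that $\sum_j \omega_{kj}|I_{kj}|\geq m_k$: under the paper's definition a fractional vertex cover satisfies $\sum_{j:\,i\in I_{kj}}\omega_{kj}=1$ exactly, so equality holds and the weighted Jensen step goes through in the direction you need.
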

From this theorem, we can observe that $\mathcal{A}^{pa}$ has a label-wise class imbalance-aware learning guarantee of $O \left( \frac{1}{\sqrt{n}} \left( \frac{1}{K} \sum_{k=1}^K \sqrt{\frac{1}{\tau_k}} \right) \right) \approx O \left( \frac{1}{\sqrt{n}} \left ( \sqrt{\frac{1}{K} \sum_{k=1}^K \frac{1}{\tau_k}} \right ) \right)$ w.r.t. Macro-AUC.  

\begin{restatable}[\textbf{Learning guarantee of $\mathcal{A}^{u_1}$ in general case}]
    {theorem}{LearningGuaranteeUone}
\label{thm:learning_guarantee_u1}
    Assume the loss $L_{\phi} = \frac{1}{\tau_S^*} L_{u_1}$, where $L_{u_1}$ is defined in Eq.\eqref{eq:surrogate_u1}. Besides, {\rm Assumption~\ref{assump_common}} holds. Then, for any $\delta > 0$, with probability at least $1 - \delta$ over the draw of an i.i.d. sample $S$ of size $n$, the following generalization bound holds for any $f \in \mathcal{F}$: 
    \begin{align}
        R_{0/1} (f) \leq & \frac{1}{\tau_S^*} \widehat{R}_{u_1} (f) + \frac{4 \rho r \Lambda}{\tau_S^* \sqrt{n}} \left (\frac{1}{K} \sum_{k=1}^K \sqrt{\frac{1}{\tau_k}} \right )  + \nonumber \\
        & \frac{3 B}{\tau_S^*} \sqrt{\frac{\log(\frac{2}{\delta})}{2n}} \left ( \sqrt{\frac{1}{K} \sum_{k=1}^K \frac{1}{\tau_k}} \right ) .
    \end{align}
\end{restatable}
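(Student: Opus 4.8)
\textbf{Proof proposal for Theorem~\ref{thm:learning_guarantee_u1}.}
The plan is to instantiate the base theorem (Theorem~\ref{thm:base_theorem_macroauc}) with the rescaled surrogate loss $L_{\phi} = \frac{1}{\tau_S^*} L_{u_1}$, and then combine with the loss/risk comparison in Lemma~\ref{lem:relationship_risks} to pass from $R_{\phi}$ down to the $0/1$ risk $R_{0/1}$. First I would verify the hypotheses of Theorem~\ref{thm:base_theorem_macroauc} for this choice of $L_{\phi}$: under Assumption~\ref{assump_common}, $\|f_k(\vx)\| = \|\vw_k^\top \Phi(\vx)\| \le \Lambda r$ by Cauchy--Schwarz, so $\ell(\pm f_k(\vx)) \le B$, and hence $L_{u_1}(\vx^+,\vx^-,f_k) = \frac{|S_k^+|}{n}\ell(f_k(\vx^+)) + \frac{|S_k^-|}{n}\ell(-f_k(\vx^-)) \le B$ since the two coefficients sum to $1$. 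Therefore $L_{\phi} = \frac{1}{\tau_S^*} L_{u_1}$ is bounded by $M = \frac{B}{\tau_S^*}$. Plugging $M = B/\tau_S^*$ into the last term of Theorem~\ref{thm:base_theorem_macroauc} produces exactly the claimed $\frac{3B}{\tau_S^*}\sqrt{\log(2/\delta)/(2n)}\left(\sqrt{\frac{1}{K}\sum_k \frac{1}{\tau_k}}\right)$ term.

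Next I would handle the Rademacher term $2\widehat{\mathfrak{R}}_{\tS}^*(L_{\phi}\circ\mathcal{F})$. Since the fractional Rademacher complexity is positively homogeneous in the loss, $\widehat{\mathfrak{R}}_{\tS}^*\!\left(\frac{1}{\tau_S^*} L_{u_1}\circ\mathcal{F}\right) = \frac{1}{\tau_S^*}\widehat{\mathfrak{R}}_{\tS}^*(L_{u_1}\circ\mathcal{F})$. It remains to bound $\widehat{\mathfrak{R}}_{\tS}^*(L_{u_1}\circ\mathcal{F})$. Here $L_{u_1}(\tvx_{ki}^+,\tvx_{ki}^-,f_k)$ is an affine combination, with fixed (data-dependent but $\sigma$-independent) weights $|S_k^+|/n$ and $|S_k^-|/n$, of $\ell(f_k(\tvx_{ki}^+))$ and $\ell(-f_k(\tvx_{ki}^-))$; splitting the supremum and using subadditivity of $\sup$, then a Talagrand-type contraction lemma for the $\rho$-Lipschitz $\ell$ (the fractional-Rademacher analogue, presumably established in Appendix~\ref{sec:app_general_techniques} or a standard ingredient), reduces the problem to the Rademacher complexity of the linear class $\{\vx \mapsto \vw_k^\top\Phi(\vx)\}$ with $\|\vw_k\|\le\Lambda$ over a sample of fractional-cover structure. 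The standard kernel bound gives $\eE_{\sigma}\sup_{\|\vw_k\|\le\Lambda}\sum_{i\in I_{kj}}\sigma_{ki}\vw_k^\top\Phi(\tvx_{ki}) \le \Lambda\sqrt{\sum_{i\in I_{kj}}\kappa(\tvx_{ki},\tvx_{ki})}\le \Lambda r\sqrt{|I_{kj}|}$; averaging with weights $\omega_{kj}$ and using $\sum_j \omega_{kj}|I_{kj}| \le \chi_f(G_k)\cdot(\text{something})$ — concretely, the fractional chromatic number of the dependence graph $G_k$ built on $m_k=|S_k^+||S_k^-|$ pairs is known (from the bipartite-ranking literature, e.g.\ Usunier et al.) to equal $\max\{|S_k^+|,|S_k^-|\}$, and $m_k/\chi_f(G_k) = \min\{|S_k^+|,|S_k^-|\}\ge n\tau_k$ — one arrives at a per-label contribution of order $\rho r \Lambda/\sqrt{n\tau_k}$, and after the $\frac{1}{K}\sum_k$ and the overall factor $\frac{1}{\tau_S^*}$ and the leading constant, exactly $\frac{4\rho r\Lambda}{\tau_S^*\sqrt{n}}\left(\frac{1}{K}\sum_k\sqrt{1/\tau_k}\right)$.

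Finally I would chain the inequalities: Theorem~\ref{thm:base_theorem_macroauc} gives $R_{\phi}(f) \le \widehat{R}_S^{\phi}(f) + (\text{Rademacher term}) + (\text{deviation term})$; here $\widehat{R}_S^{\phi}(f) = \frac{1}{\tau_S^*}\widehat{R}_S^{u_1}(f) = \frac{1}{\tau_S^*}\widehat{R}_{u_1}(f)$ and $R_{\phi}(f) = \eE_S[\widehat{R}_S^{\phi}(f)] = \eE_S\!\left[\frac{1}{\tau_S^*}\widehat{R}_S^{u_1}(f)\right]$. By the middle chain of Lemma~\ref{lem:relationship_risks}, $R_{0/1}(f) \le R_{u_2}(f) \le \eE_S\!\left[\frac{1}{\tau_S^*}\widehat{R}_S^{u_1}(f)\right] = R_{\phi}(f)$, so $R_{0/1}(f) \le R_{\phi}(f)$ and the bound transfers directly. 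The main obstacle I anticipate is the Rademacher-complexity computation — specifically getting the fractional chromatic number / fractional independent cover of the pair-dependence graph $G_k$ right and tracking how $\chi_f(G_k)$ and $m_k$ combine to yield the clean $1/\sqrt{n\tau_k}$ rate (and hence the $\sum_k\sqrt{1/\tau_k}$ rather than, say, $\sqrt{\sum_k 1/\tau_k}$); the contraction step and the kernel bound are routine, and the rescaling by $1/\tau_S^*$ is purely bookkeeping. One subtlety worth double-checking is that $\tau_S^*$ as defined is $\argmin_k\tau_k$, i.e.\ an index; throughout the statement it is used as $\tau_{\tau_S^*} = \min_k\tau_k$, and I would confirm the intended reading is the minimal value so that $L_{u_2}\le\frac{1}{\min_k\tau_k}L_{u_1}$ holds uniformly over labels (which is what Lemma~\ref{lem:relationship_losses}/\ref{lem:relationship_risks} deliver).
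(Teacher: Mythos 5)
Your proposal is correct and follows essentially the same route as the paper's proof: bound $L_{\phi}=\frac{1}{\tau_S^*}L_{u_1}$ by $M=B/\tau_S^*$ using $\frac{|S_k^+|}{n}+\frac{|S_k^-|}{n}=1$, apply Theorem~\ref{thm:base_theorem_macroauc}, control the Rademacher term via contraction plus the kernel bound with $\chi_f(G_k)=\max\{|S_k^+|,|S_k^-|\}$ and $m_k/\chi_f(G_k)=n\tau_k$ (the paper packages this as Lemma~\ref{lem:app_contraction_univariate} and Proposition~\ref{pro:app_frac_rade_kernel_hypothesis_univariate_specific}), and finish with Lemma~\ref{lem:relationship_risks}. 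The only cosmetic difference is that you split the supremum before contracting each univariate term (which would actually yield constant $2\rho$ rather than the paper's $4\rho$ from its joint two-term contraction), and your reading of $\tau_S^*$ as $\min_k\tau_k$ is indeed the intended one.
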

From this theorem, we can see that $\mathcal{A}^{u_1}$ has an imbalance-aware learning guarantee of $O \left( \frac{1}{\tau_S^* \sqrt{n}} \left( \frac{1}{K} \sum_{k=1}^K \sqrt{\frac{1}{\tau_k}} \right) \right)$ w.r.t. Macro-AUC.

\begin{restatable}[\textbf{Learning guarantee of $\mathcal{A}^{u_2}$ in general case}]
    {theorem}{LearningGuaranteeUtwo}
\label{thm:learning_guarantee_u2}
    Assume the loss $L_{\phi} = L_{u_2}$, where $L_{u_2}$ is defined in Eq.\eqref{eq:surrogate_u2}. Besides, {\rm Assumption~\ref{assump_common}} holds. Then, for any $\delta > 0$, with probability at least $1 - \delta$ over the draw of an i.i.d. sample $S$ of size $n$, the following generalization bound holds for any $f \in \mathcal{F}$: 
    \begin{align}
        R_{0/1} (f) \leq R_{u_2} (f) \leq & \widehat{R}_{u_2} (f) + \frac{8 \rho r \Lambda}{\sqrt{n}} \left (\frac{1}{K} \sum_{k=1}^K \sqrt{\frac{1}{\tau_k}} \right ) + \nonumber\\
        & 6 B \sqrt{\frac{\log(\frac{2}{\delta})}{2n}} \left ( \sqrt{\frac{1}{K} \sum_{k=1}^K \frac{1}{\tau_k}} \right ) .
    \end{align}
\end{restatable}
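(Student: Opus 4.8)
The plan is to apply Theorem~\ref{thm:base_theorem_macroauc} with $L_\phi = L_{u_2}$ and then control the two data-dependent quantities it produces. Since $\ell$ is bounded by $B$ (Assumption~\ref{assump_common}), we have $L_{u_2}(\vx^+,\vx^-,f_k) = \ell(f_k(\vx^+)) + \ell(-f_k(\vx^-)) \le 2B$, so we may take $M = 2B$ in the base theorem; this yields the last term $6B\sqrt{\log(2/\delta)/(2n)}\,\big(\sqrt{\tfrac{1}{K}\sum_{k=1}^K\tfrac{1}{\tau_k}}\big)$ at once. The leftmost inequality $R_{0/1}(f)\le R_{u_2}(f)$ is exactly the first chain of Lemma~\ref{lem:relationship_risks} (valid because $\ell$ upper-bounds the $0/1$ loss), and $R_{u_2}(f)=\E_S[\widehat R_S^{u_2}(f)]$ is the definition in Eq.~\eqref{eq:surrogate_expected_risk}. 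Hence the whole proof reduces to showing $\widehat{\mathfrak{R}}^*_{\tS}(L_{u_2}\circ\mathcal{F}) \le \tfrac{4\rho r\Lambda}{\sqrt n}\,\big(\tfrac1K\sum_{k=1}^K\sqrt{1/\tau_k}\big)$.

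To bound this fractional Rademacher complexity, fix a label $k$ and one independent set $I_{kj}$ of the fractional independent vertex cover of the dependence graph $G_k$ over the pair-dataset $\tS_k$. Splitting $L_{u_2}$ additively and using $\sup_f(A(f)+B(f))\le\sup_f A(f)+\sup_f B(f)$, the contribution of $I_{kj}$ is at most $\E_{\boldsymbol\sigma}\sup_{f}\sum_{i\in I_{kj}}\sigma_{ki}\,\ell(f_k(\tvx_{ki}^+)) + \E_{\boldsymbol\sigma}\sup_{f}\sum_{i\in I_{kj}}\sigma_{ki}\,\ell(-f_k(\tvx_{ki}^-))$. Because $I_{kj}$ is independent in $G_k$, the index pairs it contains are vertex-disjoint, so within $I_{kj}$ the instances $\{\tvx_{ki}^+\}$ are pairwise distinct and so are $\{\tvx_{ki}^-\}$; this makes the Ledoux--Talagrand contraction lemma applicable, which --- using the $\rho$-Lipschitzness of $z\mapsto\ell(z)$ and $z\mapsto\ell(-z)$ --- strips off $\ell$ up to an $O(\rho)$ factor, leaving the (un-normalized) Rademacher complexity of the linear class $\{\vx\mapsto\langle\vw_k,\Phi(\vx)\rangle : \|\vw_k\|\le\Lambda\}$ over $|I_{kj}|$ points, which is at most $\Lambda\sqrt{\sum_{i\in I_{kj}}\kappa(\tvx_{ki}^+,\tvx_{ki}^+)}\le\Lambda r\sqrt{|I_{kj}|}$ since $\kappa(\vx,\vx)\le r^2$ (and likewise for the minus part). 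Adding the two parts bounds the $I_{kj}$-contribution by $O\big(\rho\Lambda r\sqrt{|I_{kj}|}\big)$.

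Finally we aggregate over the cover and over labels. The graph $G_k$ has vertex set $S_k^+\times S_k^-$ with two pairs adjacent iff they share a coordinate; this is the rook's graph on the $|S_k^+|\times|S_k^-|$ grid, whose fractional chromatic number equals $\max\{|S_k^+|,|S_k^-|\}$ --- it lies between the clique number and the chromatic number, which here coincide, the latter realized by a Latin-rectangle colouring. Choosing that colouring as the cover gives $\omega_{kj}=1$, each colour class a partial permutation matrix of size $|I_{kj}| = \min\{|S_k^+|,|S_k^-|\} = n\tau_k$, $\sum_j\omega_{kj} = \chi_f(G_k) = \max\{|S_k^+|,|S_k^-|\}$, and $m_k = |S_k^+|\,|S_k^-|$, so the $k$-th summand of $\widehat{\mathfrak{R}}^*_{\tS}$ is $\tfrac1{m_k}\sum_j\omega_{kj}\cdot O(\rho\Lambda r\sqrt{n\tau_k}) = O\big(\tfrac{\rho\Lambda r}{\sqrt n}\sqrt{1/\tau_k}\big)$, using $\tfrac{\max\{|S_k^+|,|S_k^-|\}}{|S_k^+|\,|S_k^-|} = \tfrac1{\min\{|S_k^+|,|S_k^-|\}} = \tfrac1{n\tau_k}$. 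Averaging over $k\in[K]$ gives the target $O\big(\tfrac{\rho\Lambda r}{\sqrt n}\cdot\tfrac1K\sum_{k}\sqrt{1/\tau_k}\big)$, and carrying the explicit constants through the additive split --- which also doubles the boundedness constant relative to the single-term loss $L_{pa}$ --- produces the stated coefficients $8\rho r\Lambda$ and $6B$ once the factors $2$ and $3$ of Theorem~\ref{thm:base_theorem_macroauc} are included. The hardest part will be this combinatorial step: pinning down an optimal fractional cover of $G_k$ and the normalization $\chi_f(G_k)/m_k = 1/(n\tau_k)$ that converts the dependence structure into the imbalance factor $\tau_k$; checking that the contraction lemma is legitimately applicable inside each independent set is a further, easily-overlooked point, while the rest is bookkeeping on top of Theorem~\ref{thm:base_theorem_macroauc} and Lemma~\ref{lem:relationship_risks}.
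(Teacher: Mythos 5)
Your proposal is correct and follows essentially the same route as the paper: take $M=2B$ in Theorem~\ref{thm:base_theorem_macroauc}, invoke Lemma~\ref{lem:relationship_risks} for $R_{0/1}(f)\le R_{u_2}(f)$, and reduce everything to bounding $\widehat{\mathfrak{R}}_{\tS}^*(L_{u_2}\circ\mathcal{F})$ via a contraction step, the kernel Cauchy--Schwarz/Jensen bound, and the identity $\chi_f(G_k)/m_k=1/(n\tau_k)$ for the rook's-graph dependency structure (the paper packages these as Lemma~\ref{lem:app_contraction_univariate}, Lemma~\ref{lem:app_frac_rade_kernel_hypothesis_univariate_general}, and Proposition~\ref{pro:app_frac_rade_kernel_hypothesis_univariate_specific}). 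Two harmless deviations: you split the loss additively \emph{before} contracting, keeping the same Rademacher variable on both univariate pieces, whereas the paper contracts first and pays a factor $2\rho$ to desymmetrize into two independent variables $\sigma^+,\sigma^-$; carried through, your route gives a complexity term with constant $4\rho r\Lambda$ rather than $8\rho r\Lambda$, i.e., a slightly stronger bound that still implies the stated one. Likewise, your explicit Latin-rectangle cover (all $\omega_{kj}=1$, $|I_{kj}|=n\tau_k$) replaces the paper's general-cover-plus-Jensen argument but lands on the same quantity $\sqrt{\chi_f(G_k)/m_k}=1/\sqrt{n\tau_k}$; and the distinctness of instances inside an independent set, while true, is not actually needed for the contraction lemma to apply.
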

From this theorem, we can see that $\mathcal{A}^{u_2}$ has an imbalance-aware learning guarantee of $O \left( \frac{1}{\sqrt{n}} \left( \frac{1}{K} \sum_{k=1}^K \sqrt{\frac{1}{\tau_k}} \right) \right)$ w.r.t. Macro-AUC, which is nearly the same as $\mathcal{A}^{pa}$.

\subsection{Balanced Case}

Here we consider the balanced dataset. Note that in this case, algorithms $\mathcal{A}^{u_1}$ and $\mathcal{A}^{u_2}$ are exactly the same, which should share the same learning guarantee and it is confirmed by the following corollary.
\begin{restatable}[\textbf{Learning guarantee of $\mathcal{A}^{u_1}$ and $\mathcal{A}^{u_2}$ in balanced case}]
    {corollary}{LearningGuaranteeUBalanced}
\label{cor:learning_guarantee_u_balanced}
    Assume the loss $L_{\phi} = 2 L_{u_1} = L_{u_2}$, where $L_{u_1}$ and $L_{u_2}$ are defined in Eq.\eqref{eq:surrogate_u1} and Eq.\eqref{eq:surrogate_u2}, respectively. Besides, {\rm Assumption~\ref{assump_common}} holds and suppose $S$ is balanced. Then, for any $\delta > 0$, with probability at least $1 - \delta$ over the draw of an i.i.d. sample $S$ of size $n$, the following generalization bound holds for any $f \in \mathcal{F}$: 
    \begin{align}
        R_{0/1} (f) \leq R_{u_2} (f) = 2 R_{u_1} (f) \leq & \widehat{R}_{u_2} (f) + \frac{8 \sqrt{2} \rho r \Lambda}{\sqrt{n}} \ + \nonumber\\
        & 6 \sqrt{2} B \sqrt{\frac{\log(\frac{2}{\delta})}{2n}} , 
    \end{align}
    where $\widehat{R}_{u_2} (f) = 2 \widehat{R}_{u_1} (f)$.
\end{restatable}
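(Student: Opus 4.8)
The plan is to obtain the corollary as a direct specialization of Theorem~\ref{thm:learning_guarantee_u2} (the general-case guarantee for $\mathcal{A}^{u_2}$) to the balanced regime, combined with an elementary loss-level identity. First I would record that ``balanced'' means $\tau_k = \frac{1}{2}$ for every $k \in [K]$, which by the definition of label-wise class imbalance is equivalent to $|S_k^+| = |S_k^-| = n/2$ for all $k$. Substituting $|S_k^+|/n = |S_k^-|/n = \frac{1}{2}$ into the definition of $L_{u_1}$ in Eq.\eqref{eq:surrogate_u1} gives
\begin{align*}
    L_{u_1}(\vx^+, \vx^-, f_k) = \tfrac{1}{2}\ell(f_k(\vx^+)) + \tfrac{1}{2}\ell(-f_k(\vx^-)) = \tfrac{1}{2} L_{u_2}(\vx^+, \vx^-, f_k),
\end{align*}
so $2 L_{u_1} = L_{u_2}$ pointwise on $S$, which justifies taking $L_\phi = 2 L_{u_1} = L_{u_2}$ in the statement. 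Since $\widehat{R}_S^{\phi}$ is linear in the per-pair loss values, this identity propagates immediately to $\widehat{R}_{u_2}(f) = 2\widehat{R}_{u_1}(f)$; and, $S$ being balanced across the draws defining $R_\phi$, also $R_{u_2}(f) = \eE_S[\widehat{R}_S^{u_2}(f)] = \eE_S[2\widehat{R}_S^{u_1}(f)] = 2 R_{u_1}(f)$.

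Next I would invoke Theorem~\ref{thm:learning_guarantee_u2} with the loss $L_{u_2}$ and evaluate its two imbalance-dependent factors at $\tau_k = \frac{1}{2}$: both $\frac{1}{K}\sum_{k=1}^K \sqrt{1/\tau_k}$ and $\sqrt{\frac{1}{K}\sum_{k=1}^K 1/\tau_k}$ collapse to $\sqrt{2}$. Plugging these in, the second term $\frac{8\rho r\Lambda}{\sqrt{n}}\big(\frac{1}{K}\sum_k\sqrt{1/\tau_k}\big)$ of that bound becomes $\frac{8\sqrt{2}\,\rho r\Lambda}{\sqrt{n}}$ and the third term $6B\sqrt{\log(2/\delta)/(2n)}\big(\sqrt{\frac{1}{K}\sum_k 1/\tau_k}\big)$ becomes $6\sqrt{2}\,B\sqrt{\log(2/\delta)/(2n)}$, which is exactly the claimed right-hand side. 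The leading inequality $R_{0/1}(f) \le R_{u_2}(f)$ is inherited verbatim from Lemma~\ref{lem:relationship_risks}, and the chain equality $R_{u_2}(f) = 2 R_{u_1}(f)$ together with $\widehat{R}_{u_2}(f) = 2\widehat{R}_{u_1}(f)$ is the identity from the previous paragraph.

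There is essentially no hard step: the only point needing a word of care is lifting the pointwise identity $L_{u_2} = 2 L_{u_1}$ to the empirical and expected risks, which is immediate because both risk functionals are linear in the loss and balancedness is assumed for every label simultaneously, so no per-label bookkeeping is required. As a sanity check (and to match the remark preceding the corollary that $\mathcal{A}^{u_1}$ and $\mathcal{A}^{u_2}$ coincide in this case), one could alternatively derive the same bound from Theorem~\ref{thm:learning_guarantee_u1} using $L_{u_2} = 2 L_{u_1}$ and $\tau_S^* = \frac{1}{2}$; both routes yield identical constants.
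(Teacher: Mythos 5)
Your proposal is correct and matches the paper's own proof, which simply specializes Theorem~\ref{thm:learning_guarantee_u1} and Theorem~\ref{thm:learning_guarantee_u2} by plugging in $\tau_k = \tfrac{1}{2}$; your evaluation of the imbalance factors to $\sqrt{2}$ and the resulting constants are exactly right. The additional detail you supply on the pointwise identity $2L_{u_1}=L_{u_2}$ and its propagation to $\widehat{R}_{u_2}=2\widehat{R}_{u_1}$ and $R_{u_2}=2R_{u_1}$ is left implicit in the paper but is the intended justification for the equalities in the corollary statement.
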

Note that in this case, the same error bound of $\mathcal{A}^{u_1}$ and $\mathcal{A}^{u_2}$ confirms the validity of our analyses. From this corollary, we can see $\mathcal{A}^{u_1}$ (or $\mathcal{A}^{u_2}$) has an error bound of $O(\sqrt{\frac{1}{n}})$, which is the same as $\mathcal{A}^{pa}$ (see~Corollary~\ref{cor:learning_guarantee_pa_balanced} in Appendix~\ref{sec:app_learning_guarantee_pa_balanced}).

\subsection{Extremely Imbalanced Case}

Here we consider the extremely imbalanced datasets. In this case, the generalization results are as follows.
\begin{restatable}[\textbf{Learning guarantee of $\mathcal{A}^{pa}$ in extremely imbalanced case}]
    {corollary}{LearningGuaranteePaExtremeImbalanced}
\label{cor:learning_guarantee_pa_extreme_imbalanced}
    Assume the loss $L_{\phi} = L_{pa}$, where $L_{pa}$ is defined in Eq.\eqref{eq:surrogate_pa}. Besides, {\rm Assumption~\ref{assump_common}} holds and suppose $S$ is extremely imbalanced. Then, for any $\delta > 0$, with probability at least $1 - \delta$ over the draw of an i.i.d. sample $S$ of size $n$, the following generalization bound holds for any $f \in \mathcal{F}$: 
    \begin{align*}
        R_{0/1} (f) \leq R_{pa} (f) \leq \widehat{R}_{pa} (f) + 4 \rho r \Lambda + 3 B \sqrt{\log(\frac{2}{\delta})} .
    \end{align*}
\end{restatable}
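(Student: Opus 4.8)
The plan is to obtain this as an immediate specialization of Theorem~\ref{thm:learning_guarantee_pa} to the extremely imbalanced regime, where by Definition~2 we have $\tau_k = \frac{1}{n}$ for every $k \in [K]$. The general-case bound already gives, with probability at least $1-\delta$,
\[
R_{0/1}(f) \le R_{pa}(f) \le \widehat{R}_{pa}(f) + \frac{4\rho r\Lambda}{\sqrt n}\Bigl(\tfrac{1}{K}\sum_{k=1}^K\sqrt{\tfrac{1}{\tau_k}}\Bigr) + 3B\sqrt{\tfrac{\log(2/\delta)}{2n}}\Bigl(\sqrt{\tfrac{1}{K}\sum_{k=1}^K\tfrac{1}{\tau_k}}\Bigr),
\]
so the only work is to evaluate the two $\tau$-dependent factors under $\tau_k \equiv \tfrac{1}{n}$, which will collapse the explicit $n$-dependence.

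First I would substitute $\tau_k = \tfrac{1}{n}$: then $\tfrac{1}{K}\sum_{k=1}^K\sqrt{1/\tau_k} = \tfrac{1}{K}\sum_{k=1}^K\sqrt n = \sqrt n$, and likewise $\sqrt{\tfrac{1}{K}\sum_{k=1}^K 1/\tau_k} = \sqrt{\tfrac{1}{K}\cdot Kn} = \sqrt n$. Plugging these in, the factor $\sqrt n$ cancels the $\tfrac{1}{\sqrt n}$ in the complexity term and the $\tfrac{1}{\sqrt{2n}}$ collapses to $\tfrac{1}{\sqrt 2}$, leaving $\widehat{R}_{pa}(f) + 4\rho r\Lambda + \tfrac{3B}{\sqrt 2}\sqrt{\log(2/\delta)}$.

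Finally, since $1/\sqrt 2 < 1$, I would relax $\tfrac{3B}{\sqrt 2}\sqrt{\log(2/\delta)} \le 3B\sqrt{\log(2/\delta)}$ to recover the stated constant, and note that the initial inequality $R_{0/1}(f) \le R_{pa}(f)$ (from Lemma~\ref{lem:relationship_risks}), the validity of Assumption~\ref{assump_common}, and the choice $L_\phi = L_{pa}$ all carry over verbatim from Theorem~\ref{thm:learning_guarantee_pa}. There is essentially no obstacle here: the corollary is pure arithmetic bookkeeping on the two fractional sums, and the only point worth a remark is whether one prefers the slightly tighter constant $3B/\sqrt 2$ over the cleaner $3B$ — I would keep $3B$, consistent with the statement.
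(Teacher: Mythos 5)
Your proposal is correct and matches the paper's own proof, which likewise obtains the corollary by plugging $\tau_k = \tfrac{1}{n}$ into Theorem~\ref{thm:learning_guarantee_pa}; your extra observation that the substitution actually yields the slightly tighter constant $3B\sqrt{\log(2/\delta)/2}$, which is then relaxed to the stated $3B\sqrt{\log(2/\delta)}$, is a valid and worthwhile clarification of a detail the paper leaves implicit.
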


\begin{restatable}[\textbf{Learning guarantee of $\mathcal{A}^{u_1}$ in extremely imbalanced case}]
    {corollary}{LearningGuaranteeUOneExtremeImbalanced}
\label{cor:learning_guarantee_u1_extreme_imbalanced}
    Assume the loss $L_{\phi} = n L_{u_1}$, where $L_{u_1}$ is defined in Eq.\eqref{eq:surrogate_u1}. Besides, {\rm Assumption~\ref{assump_common}} holds and suppose $S$ is extremely imbalanced. Then, for any $\delta > 0$, with probability at least $1 - \delta$ over the draw of an i.i.d. sample $S$ of size $n$, the following generalization bound holds for any $f \in \mathcal{F}$: 
    \begin{align*}
        R_{0/1} (f) \leq & n \widehat{R}_{u_1} (f) + 4 n \rho r \Lambda + 3 B n \sqrt{\frac{\log(\frac{2}{\delta})}{2}}.
    \end{align*}
\end{restatable}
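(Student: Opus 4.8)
The plan is to specialize Theorem~\ref{thm:learning_guarantee_u1} to the extremely imbalanced case, where by definition $\tau_k = \tfrac{1}{n}$ for every $k \in [K]$, hence $\tau_S^* = \tfrac{1}{n}$ as well. First I would substitute $\tau_k = \tau_S^* = \tfrac{1}{n}$ into the bound of Theorem~\ref{thm:learning_guarantee_u1}. This turns the prefactor $\tfrac{1}{\tau_S^*}$ into $n$, turns each $\sqrt{1/\tau_k}$ into $\sqrt{n}$, so that $\tfrac{1}{K}\sum_{k=1}^K \sqrt{1/\tau_k} = \sqrt{n}$ and likewise $\sqrt{\tfrac{1}{K}\sum_{k=1}^K 1/\tau_k} = \sqrt{n}$. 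The second term $\tfrac{4\rho r\Lambda}{\tau_S^*\sqrt n}\cdot\sqrt n = 4\rho r\Lambda n / \sqrt n \cdot \sqrt{n}$... more carefully: $\tfrac{4\rho r\Lambda}{\tau_S^*\sqrt n}\big(\tfrac1K\sum_k\sqrt{1/\tau_k}\big) = \tfrac{4\rho r\Lambda n}{\sqrt n}\cdot \sqrt n = 4n\rho r\Lambda$, and the third term $\tfrac{3B}{\tau_S^*}\sqrt{\tfrac{\log(2/\delta)}{2n}}\cdot\sqrt n = 3Bn\sqrt{\tfrac{\log(2/\delta)}{2n}}\cdot\sqrt n = 3Bn\sqrt{\tfrac{\log(2/\delta)}{2}}$. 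I should also note that here $L_\phi = \tfrac{1}{\tau_S^*}L_{u_1} = n L_{u_1}$, which matches the hypothesis of the corollary, and that $\tfrac{1}{\tau_S^*}\widehat R_{u_1}(f) = n\widehat R_{u_1}(f)$.

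Concretely, the argument is: invoke Theorem~\ref{thm:learning_guarantee_u1} (whose hypotheses — Assumption~\ref{assump_common} and $L_\phi = \tfrac{1}{\tau_S^*}L_{u_1}$ — are exactly those assumed here, since ``extremely imbalanced'' is an extra constraint on $S$), then plug in $\tau_k \equiv 1/n$. The only thing to verify is the arithmetic simplification of the two imbalance-dependent factors and the prefactor, which I sketched above; everything else carries over verbatim. Since the boundedness constant $M$ used in the base theorem was already absorbed into $B$ via the $\rho$-Lipschitz-and-$B$-bounded assumption on $\ell$ (as in the proof of Theorem~\ref{thm:learning_guarantee_u1}), no new constant tracking is needed.

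There is essentially no obstacle here — this is a direct corollary obtained by substitution. The only mild subtlety worth a sentence in the writeup is confirming that the extremely imbalanced assumption ($\tau_k = 1/n$ for all $k$) is consistent (it requires $\min\{|S_k^+|,|S_k^-|\} = 1$ for every label, which is an admissible configuration) and that under it $\tau_S^* = \min_k \tau_k = 1/n$ so the replacement of both $\tau_k$ and $\tau_S^*$ by $1/n$ is simultaneous and exact. With that noted, the bound $R_{0/1}(f) \le n\widehat R_{u_1}(f) + 4n\rho r\Lambda + 3Bn\sqrt{\log(2/\delta)/2}$ follows immediately, completing the proof.
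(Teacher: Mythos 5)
Your proposal is correct and is exactly the paper's argument: the paper's proof of this corollary is a one-line invocation of Theorem~\ref{thm:learning_guarantee_u1} with $\tau_k = \frac{1}{n}$ (hence $\tau_S^* = \frac{1}{n}$) plugged in, and your arithmetic for the resulting three terms checks out.
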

From the above corollaries, we can see that $\mathcal{A}^{pa}$ has an error bound of $O(1)$ w.r.t. $n$, while $\mathcal{A}^{u_1}$ depends on $O(n)$. Besides, $\mathcal{A}^{u_2}$ has a similar error bound to $\mathcal{A}^{pa}$ (see Corollary~\ref{cor:learning_guarantee_u2_extreme_imbalanced} in Appendix~\ref{sec:app_learning_guarantee_u2_extreme_imbalanced}). One may notice that these bounds all diverge when $n \rightarrow \infty$. This may be due to the following two reasons. On one hand, learning in the extremely imbalanced case is indeed difficult. On the other hand, our analysis techniques might not be optimal w.r.t. $n$, and advanced techniques (e.g., local Rademacher-type complexity~\cite{bartlett2005local}) might be used to improve it. However, this is not our focus in this paper and we mainly focus on the generalization effect of label-wise class imbalance factors under the same framework, and the orders of different algorithms can still provide valuable insights.

\subsection{Comparison and Discussion}
 




For generalization analyses, a tighter upper bound usually implies probably better performance~\cite{mohri2018foundations}.\footnote{Note that when comparing bounds, it is usually more reasonable to compare the order of dependent variables rather than the absolute values.} In this paper, all algorithms are analyzed under the same framework and inequalities between the surrogate and true risks (or losses) are tight. Therefore, it is relatively safe to evaluate the performance of the algorithms theoretically by comparing their upper bounds. We now compare these algorithms as follows.
\begin{itemize}
\setlength\itemsep{-2pt}
\vspace{-.2cm}
    \item \textbf{$\mathcal{A}^{pa}$ vs $\mathcal{A}^{u_1}$}.  $\mathcal{A}^{pa}$ usually has a tighter bound than $\mathcal{A}^{u_1}$. Specifically, given the same hypothesis space, it is usually easier to train $\widehat{R}^{pa}_S$ than other risks, making $\widehat{R}^{pa}_S$ smaller than $\frac{1}{\tau_S^*}\widehat{R}^{u_1}_S$.\footnote{Although we can not formally express this claim, we empirically observed it in experiments.} 
    Besides, for the model complexity terms (i.e., the last two terms), $\mathcal{A}^{pa}$ has an error bound of $O \left( \frac{1}{K} \sum_{k=1}^K \sqrt{\frac{1}{\tau_k}} \right)$ while $\mathcal{A}^{u_1}$ depends on $O \left( \frac{1}{\tau_S^*} \left( \frac{1}{K} \sum_{k=1}^K \sqrt{\frac{1}{\tau_k}} \right) \right)$. 
    \item \textbf{$\mathcal{A}^{u_2}$ vs $\mathcal{A}^{u_1}$}. 
    Similarly, we argue that $\mathcal{A}^{u_2}$ usually has a tighter bound than $\mathcal{A}^{u_1}$.
    For the first risk term, $\frac{1}{\tau_S^*}\widehat{R}^{u_1}_S$ is usually comparable or even larger than $\widehat{R}^{u_2}_S$.\footnote{In some cases, the first risk term may be bigger than $1$ but we can still take insights from the error bound through the dependent variables of the model complexity.}
    For the model complexity term, $\mathcal{A}^{u_2}$ has an error bound of $O \left( \frac{1}{K} \sum_{k=1}^K \sqrt{\frac{1}{\tau_k}} \right)$ while $\mathcal{A}^{u_1}$ depends on $O \left( \frac{1}{\tau_S^*} \left( \frac{1}{K} \sum_{k=1}^K \sqrt{\frac{1}{\tau_k}} \right) \right)$.
    \item \textbf{$\mathcal{A}^{pa}$ vs $\mathcal{A}^{u_2}$}.
    $\mathcal{A}^{pa}$ and $\mathcal{A}^{u_2}$ have similar or comparable learning guarantees. Specifically, for the first risk term, $\widehat{R}^{pa}_S$ is usually comparable to $\widehat{R}^{u_2}_S$.\footnote{Note that although we cannot formally express this, the experiments verify it.} For the model term, $\mathcal{A}^{pa}$ and $\mathcal{A}^{u_2}$ are nearly the same (with only different constant) w.r.t. the label-wise class imbalance. 
    \vspace{-.2cm}
\end{itemize}
Overall, the tighter bound level of $\mathcal{A}^{pa}$ (and $\mathcal{A}^{u_2}$) over $\mathcal{A}^{u_1}$ heavily depend on $\frac{1}{\tau_S^*}$. Thus, when $\frac{1}{\tau_S^*}$ is large, $\mathcal{A}^{pa}$ and $\mathcal{A}^{u_2}$ would probably perform significantly better than $\mathcal{A}^{u_1}$. In contrast, when $\frac{1}{\tau_S^*}$ is small, $\mathcal{A}^{pa}$ and $\mathcal{A}^{u_2}$ would probably perform slightly better than or nearly comparably to $\mathcal{A}^{u_1}$. Besides, $\mathcal{A}^{pa}$ and $\mathcal{A}^{u_2}$ have nearly the same learning guarantee, thus they would probably perform comparably. Experimental results on benchmark datasets in Table~\ref{tab:benchmark_results} confirm our analyses.

Note that there may be another way to analyze the learning guarantees of $\mathcal{A}^{u_1}$ and $\mathcal{A}^{u_2}$ w.r.t. the Macro-AUC under the analytical framework of prior work~\cite{wu2020multi,wu2021rethinking} and here we focus on analyzing three algorithms under the same framework, leaving it as future work.

\textbf{Implications of the theory in real-world applications.} While the MLC datasets are usually highly (label-wise) class imbalanced in real-world applications, our theory can have valuable implications in practice. Specifically, our theoretical results on the imbalance-aware bounds show that the imbalance-aware loss-based algorithm $\mathcal{A}^{u_2}$ has a better learning guarantee w.r.t. the label-wise class imbalance than the algorithm $\mathcal{A}^{u_1}$ with the original univariate loss (e.g., cross-entropy loss), which probably implies its performance superiority. This can provide valuable insights to explain why the existing imbalance-aware reweighting losses~\cite{ridnik2021asymmetric, wu2020distribution} can have promising performance w.r.t. ranking-based measures (e.g., mean average precision (mAP)) similar to Macro-AUC in practice. Further, how to design more effective imbalance-aware loss w.r.t. specific measures and how to make these bounds tighter would inspire more effective algorithms.


\subsection{Proof Sketch}
\label{sec:proof_sketch}

Here we mainly summarize the proof sketch of the generalization results of the general case in Section~\ref{sec:learning_guarantee_general_case} as follows.

\textbf{Proof sketch:} Overall, the proof can be mainly divided into the following two steps.

\emph{Step 1:} Construct general techniques in need (see Appendix~\ref{sec:app_general_techniques} for details). First, we propose a new (and more general) McDiarmid-type inequality (i.e., Theorem~\ref{thm:new_mcdiarmid}). Then, based on it, we propose a general generalization bound (i.e., Theorem~\ref{thm:general_bound_multiple_tasks_dependent}) for the problem setting of learning multiple tasks with graph-dependent examples, which involves the fractional Rademacher complexity of the loss space.

\emph{Step 2:} Get the results of the Macro-AUC maximization (MaAUCM) in MLC by applying the generalization bound obtained in Step~1 (see Appendix~\ref{sec:app_macro_auc_mlc} for details). Firstly we transform the MaAUCM problem into the problem setting of learning multiple tasks with graph-dependent examples in Step 1 and then we get the base error bound of Macro-AUC (i.e., Theorem~\ref{thm:base_theorem_macroauc}). Next, we analyze the relationships between surrogate and true risks. Then, for each algorithm, define its specific fractional Rademacher complexity of the hypothesis space, upper bound the kernel-based one (e.g., Lemma~\ref{lem:app_frac_rade_kernel_hypothesis_pairwise}), and get the specific contraction inequality (e.g., Lemma~\ref{lem:app_contraction_pairwise}) to connect the complexity of the loss space with the complexity of the hypothesis space. Finally, we can get the desired results based on the above intermediate results.



\subsection{Consistency of Surrogate Losses}

Except for the finite-sample generalization guarantee, the consistency of surrogates is also important. Following~\cite{gao2015consistency, kotlowski2011bipartite}, here we consider the consistency of $L_{pa}$, $L_{u_1}$ and $L_{u_2}$ w.r.t. the Macro-AUC with $L_{0/1}(\mathbf{x}^+, \mathbf{x}^-, f_k) = [\![ f_k(\mathbf{x}^+) < f_k(\mathbf{x}^-) ]\!] + \frac{1}{2}[\![ f_k(\mathbf{x}^+) = f_k(\mathbf{x}^-) ]\!]$. 
The Macro-AUC maximization task of MLC can be decomposed into $K$ AUC maximization tasks of binary classification. Thus, we can investigate the consistency of these surrogates based on the previous well-studied consistency results of AUC in binary classification (or, equivalently bipartite ranking)~\cite{gao2015consistency, kotlowski2011bipartite}. Specifically, we can get the following results about these surrogates:
\begin{itemize}
\setlength\itemsep{-2pt}
\vspace{-.2cm}
    \item $L_{pa}$: Based on the previous result in binary classification with AUC maximization~\cite{gao2015consistency} (i.e., Corollary 1 on Page 4), we can get that $L_{pa}$ is consistent w.r.t. Macro-AUC with the (base) logistic loss and exponential loss. 
    Besides, based on the previous result~\cite{gao2015consistency} (i.e., Lemma 3 on Page 3), we can get that $L_{pa}$ is inconsistent w.r.t. Macro-AUC with the (base) hinge loss and absolute loss. 
    \item $L_{u_1}$: Based on the previous result~\cite{gao2015consistency} (i.e., Theorem 7 on Page 6), we can get that $L_{u_1}$ is consistent with the (base) exponential loss.
    \item $L_{u_2}$: As a reweighting univariate loss, $L_{u_2}$ involves reweighting factors depending on the dataset (i.e., $\frac{1}{|S_k^+|}$, $\frac{1}{|S_k^-|}$ for the positive and negative instances, respectively). Thus, in the infinite-sample (i.e., population) setting, it could be regarded to be dependent on the distribution, where the reweighting factors of positive and negative instances are propositional to $\frac{1}{P(y_k = 1)}$ and $\frac{1}{1 - P(y_k = 1)}$, respectively. In this case, based on the previous result of bipartite ranking~\cite{kotlowski2011bipartite} (i.e., Theorem 4.1 on Page 4), we can get that $L_{u_2}$ is consistent w.r.t. Macro-AUC with the (base) logistic loss and exponential loss.
\end{itemize}
As for the surrogates $L_{u_1}$ and $L_{u_2}$ with other base losses, we left it as future work.

\section{Related Work}




\textbf{Consistency.} \citet{gao2013consistency} studied the consistency of surrogate losses w.r.t. the Hamming and (partial) ranking measures in general. Besides, \citet{dembczynski2012consistent} presented an explicit regret bound for a surrogate univariate loss under the partial ranking measure. Notably, for the F-measure in binary classification, \citet{ye2012optimizing} justified and connected the empirical utility maximization (EUM) framework and the decision-theoretic approach (DTA), with applications to optimizing the macro-F measure in MLC.\footnote{Note that our generalization analyses of learning algorithms w.r.t. Macro-AUC is in the EUM framework.} For these two approach frameworks, \citet{dembczynski2017consistency} revisited the consistency analysis for binary classification with complex metrics, where they chose the more descriptive names Population Utility (PU) and Expected Test Utility (ETU). Further, for the F-measure in MLC, \citet{waegeman2014bayes,zhang2020convex} studied the consistency in the perspective of DTA via estimating the conditional distribution $P(\mathbf{y}|\mathbf{x})$ differently. Further, \citet{koyejo2015consistent} studied  consistent MLC approaches w.r.t. various measures in the EUM framework and 
\citet{menon2019multilabel} investigated the multi-label consistency of various reduction methods w.r.t. precision@$k$ and recall@$k$ measures.

\textbf{Generalization.} \citet{wu2020multi} studied the generalization of learning algorithms with surrogates aiming to optimize Hamming loss and subset accuracy w.r.t. these two measures, and found that the label size played an important role in the generalization bounds, which explains the empirical phenomena that when the label size is not large, optimizing Hamming loss with its surrogate can have promising performance w.r.t. subset accuracy. Further, \citet{wu2021rethinking} revisited the consistency and generalization of many surrogate loss-based algorithms w.r.t. the ranking loss measure and identified the \emph{instance-wise class imbalance} of the dataset (or distribution) plays a critical role in the generalization bounds, which could explain the empirical phenomena better than consistency.

We mention that~\citet{wu2017unified} also proposed a pairwise loss (similar to Eq.~\eqref{eq:surrogate_pa}), which omits the reweighting factor $\frac{1}{|S_k^+| |S_k^-|}$ and lacks formal generalization analyses. Besides, please see Appendix~\ref{sec:app_addtional_related_work} for detailed discussions about comparisons between a recent McDiarmid-type concentration inequality~\cite{zhang2019mcdiarmid} and ours for data with graph dependence.

\section{Experiments}


\begin{table*}[ht]
\scriptsize
\caption{Basic statistics of the benchmark datasets. Denote the label-wise class imbalance-related factors $\text{Imb}_1 = \frac{1}{K} \sum_{k=1}^K \sqrt{\frac{1}{\tau_k}},~\text{Imb}_2 = \sqrt{\frac{1}{K} \sum_{k=1}^K \frac{1}{\tau_k}},~\text{Imb}_3 = \frac{1}{\tau_S^*}$ and $\text{Imb}_4 = \frac{1}{\tau_S^*}\left (\frac{1}{K} \sum_{k=1}^K \sqrt{\frac{1}{\tau_k}} \right )$, respectively.}
\label{tab:datasets}
\vskip 0.15in
\begin{center}
\begin{small}
\begin{tabular}{lcccccccc}
\toprule
Dataset & \#Instance & \#Feature & \#Label & Domain & $\text{Imb}_1$ & $\text{Imb}_2$ & $\text{Imb}_3$ & $\text{Imb}_4$ \\
\midrule
    CAL500 & $502$ & $68$ & $174$ & music & $4.2$ & $4.8$ & $100.4$ & $421.1$ \\
    emotions & $593$ & $72$ & $6$ & music & $1.8$ & $1.8$ & $4.0$ & $7.3$ \\
    image & $2000$ & $294$ & $5$ & images & $2.0$ & $2.0$ & $4.9$ & $9.9$ \\
    scene & $2407$ & $294$ & $6$ & images & $2.4$ & $2.4$ & $6.6$ & $15.7$ \\
    yeast & $2417$ & $103$ & $14$ & biology & $2.6$ & $3.2$ & $71.1$ & $188.4$ \\
    enron & $1702$ & $1001$ & $53$ & text & $9.1$ & $11.7$ & $1702$ & $15566$ \\
    rcv1-s1 & $6000$ & $944$ & $101$ & text & $11.8$ & $15.4$ & $3000$ & $35267$ \\
    bibtex & $7395$ & $1836$ & $159$ & text & $9.2$ & $9.4$ & $7395$ & $1332$ \\
    corel5k & $5000$ & $499$ & $374$ & images & $23.4$ & $29.1$ & $5000$ & $117000$ \\
    delicious & $16105$ & $500$ & $983$ & text(web) & $12.2$ & $13.3$ & $766.9$ & $9344$ \\
\bottomrule
\end{tabular}
\end{small}
\end{center}
\vskip -0.1in
\end{table*}

\begin{figure*}[ht]
    \centering
    \begin{subfigure}[CAL500]
        {\includegraphics[scale=0.35]{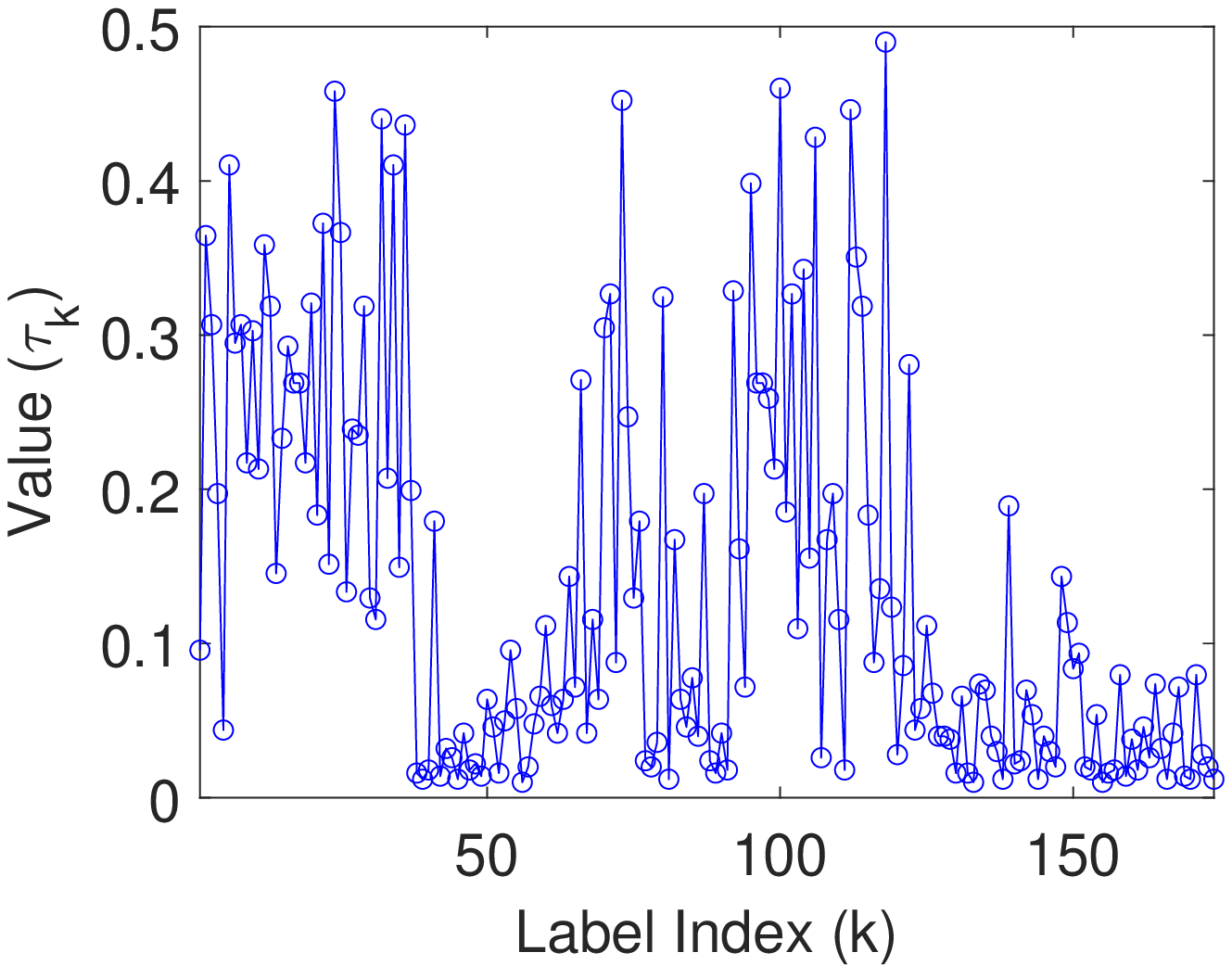}}
    \end{subfigure}
    \begin{subfigure}[image]
        {\includegraphics[scale=0.35]{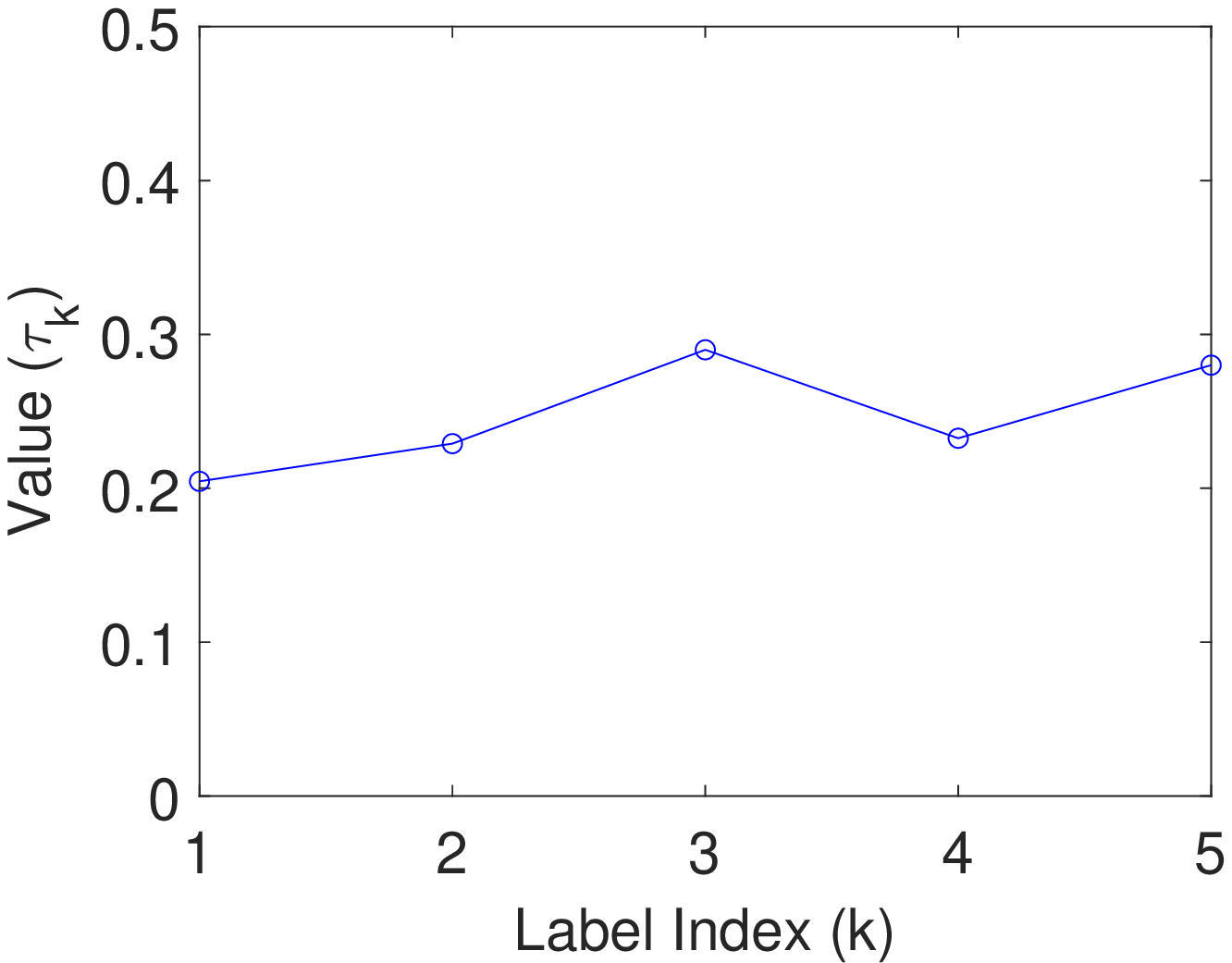}}
    \end{subfigure}
    \begin{subfigure}[delicious]
        {\includegraphics[scale=0.35]{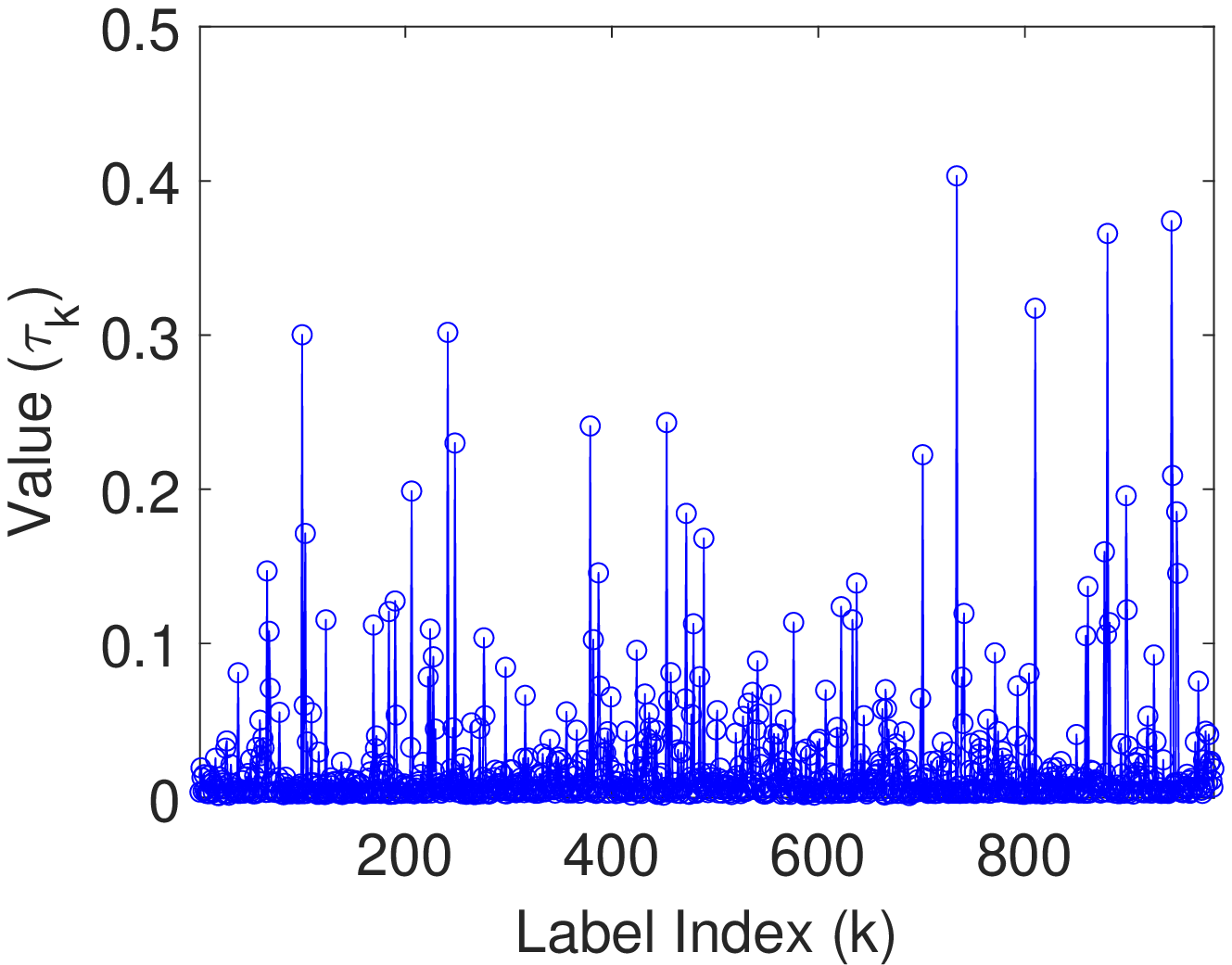}}
    \end{subfigure}
    \vspace{-.3cm}
    \caption{Illustration of the label-wise class imbalance of three representative datasets.}
    \label{fig:imbalance_benchmarks_part}
\end{figure*}

As a theoretical work, the primary goal of experiments is to verify our theory findings rather than illustrate the superior performance of the proposed method. Therefore, we evaluate the aforementioned three learning algorithms in Section~\ref{sec:learning_algorithm} in terms of Macro-AUC on $10$ widely-used benchmark datasets with various domains and sizes of labels and data. The detailed statistics of the datasets are summarized in Table~\ref{tab:datasets}, including four label-wise class imbalance-related factors.\footnote{These datasets can be downloaded from~\url{http://mulan.sourceforge.net/datasets-mlc.html} and~\url{http://palm.seu.edu.cn/zhangml/}.} Besides, the label-wise class imbalance levels of three representative datasets are illustrated in Figure~\ref{fig:imbalance_benchmarks_part}. (See Figure~\ref{fig:imbalance_benchmarks_full} in Appendix~\ref{sec:app_label_wise_class_imbalance_illustration} for all datasets).
For all algorithms, we take linear models with the base logistic loss for simplicity and fair comparison. Besides, we utilize the same efficient stochastic optimization algorithm (i.e., SVRG-BB~\cite{tan2016barzilai}) to solve these convex optimization problems. Moreover, we search the hyper-parameter $\lambda$ for all algorithms on all datasets in a wide range of $\{10^{-6}, 10^{-5}, \dots, 10^2\}$ using $3$-fold cross-validation.\footnote{Our code is available at~\url{https://github.com/GuoqiangWoodrowWu/Macro-AUC-Theory}}

The experimental results are summarized in Table~\ref{tab:benchmark_results}. Overall, we can observe that algorithms $\mathcal{A}^{pa}$ and $\mathcal{A}^{u_2}$ performs better than the algorithm $\mathcal{A}^{u_1}$, which confirms our theoretical results that $\mathcal{A}^{pa}$ and $\mathcal{A}^{u_2}$ have better learning guarantees w.r.t. the label-wise class imbalance than $\mathcal{A}^{u_1}$. Besides, $\mathcal{A}^{pa}$ performs comparably to $\mathcal{A}^{u_2}$, which also verifies our theoretical results that they share the learning guarantee w.r.t. the label-wise class imbalance.

\begin{table}[h!]
\renewcommand\tabcolsep{3.5pt}
\scriptsize
\caption{Macro-AUC ($\text{mean~} \pm \text{~std}$, the symbol $.$ means $0.$) of all three algorithms on benchmark datasets. On each dataset, the top two algorithms are highlighted in bold and the top one is labeled with $^{\dagger}$. Besides, ``-'' means that $\mathcal{A}^{pa}$ takes more than one week by using a $16$-core CPU server on the corresponding datasets.}
\label{tab:benchmark_results}
\begin{center}
\begin{small}
\begin{tabular}{lccc}
\toprule
Dataset & $\mathcal{A}^{pa}$ & $\mathcal{A}^{u_1}$ & $\mathcal{A}^{u_2}$ \\
\midrule
CAL500 & $\bf .5735 \pm .0186^{\dagger}$ & $.5571 \pm .0102$ & $\bf .5717 \pm .0177$ \\
emotions & $\bf .8372 \pm .0172^{\dagger}$ & $.8346 \pm .0223$ & $\bf .8348 \pm .0189$ \\
image & $\bf .8383 \pm .0073^{\dagger}$ & $\bf .8359 \pm .0121$ & $.8314 \pm .0094$ \\
scene & $\bf .9319 \pm .0013^{\dagger}$ & $.9271 \pm .0067$ & $\bf .9285 \pm .0035$ \\
yeast & $\bf .6872 \pm .0100$ & $.6862 \pm .0064$ & $\bf .6892 \pm .0088^{\dagger}$ \\
enron & $\bf .7211 \pm .0320$ & $.6908 \pm .0105$ & $\bf .7356 \pm .0121^{\dagger}$ \\
rcv1-s1 & - & $.8585 \pm .0204$ & $\bf .9097 \pm .0068^{\dagger}$ \\
bibtex & - & $.8693 \pm .0156$ & $\bf .9299 \pm .0034^{\dagger}$ \\
corel5k & - & $.5703 \pm .0092$ & $\bf .6645 \pm .0253^{\dagger}$ \\
delicious & - & $.7633 \pm .0020$ & $\bf .8044 \pm .0040^{\dagger}$ \\
\bottomrule
\end{tabular}
\end{small}
\end{center}
\end{table}

Further, from Table~\ref{tab:datasets} and~\ref{tab:benchmark_results}, we can carefully study the effects of the label-wise class imbalance on the performance. Recall that the learning guarantees of $\mathcal{A}^{pa}$ and $\mathcal{A}^{u_2}$ both depends on the factor $\text{Imb}_1$, while the one of $\mathcal{A}^{u_1}$ depends on the factor $\text{Imb}_4$. For the datasets CAL500, enron, rcv-s1, bibtex, corel5k and delicious, factors $\text{Imb}_1$ and $\text{Imb}_4$ have a large order gap (or equivalently $\text{Imb}_3$ is large), and $\mathcal{A}^{u_2}$ (or $\mathcal{A}^{pa}$) performs significantly better than $\mathcal{A}^{u_1}$. In contrast, for the datasets emotions, image, and scene, factors $\text{Imb}_1$ and $\text{Imb}_4$ have a small gap (or equivalently $\text{Imb}_3$ is small), and $\mathcal{A}^{u_2}$ (or $\mathcal{A}^{pa}$) performs slightly better than or is nearly comparable to $\mathcal{A}^{u_1}$. This also confirms our theoretical findings of these algorithms on the label-wise class imbalance.

Furthermore, similarly to previous theoretical results~\cite{wu2021rethinking} for the Ranking Loss measure in MLC, our generalization upper bound absolute values might not reflect the true generalization error reasonably well (i.e., bigger than $1$). However, they can still offer valuable insight into these learning algorithms under the same analytical framework. (See Table~\ref{tab:empirical_bounds} in Appendix~\ref{sec:app_absolute_value_bounds} for details). Advanced techniques (e.g., local Radermacher-type complexity) can refine the results, left as future work.

\section{Conclusion}

Towards understanding the generalization of Macro-AUC in MLC, this paper takes an initial step by analyzing the generalization bounds of the algorithms with various surrogates including the widely-used univariate one. Our results show that the label-wise class imbalance of the dataset plays a critical role in these bounds. The algorithms with the proposed pairwise and reweighted univariate loss have better learning guarantees than the original univariate-based algorithm, which probably implies their superior performance. Experimental results also confirm our theoretical findings. 

\textbf{Social Impact:} As a theoretical research, this work will help understand and potentially develop better algorithms for multi-label learning, while without explicit negative consequences to society.

\section*{Acknowledgements}

This work was supported by NSF of China (Nos. 62206159, 62076145); Shandong Provincial Natural Science 
Foundation (Nos. ZR2022QF117, ZR2021ZD15); Beijing Outstanding Young Scientist Program (NO. BJJWZYJH012019100020098); the Fundamental Research Funds of Shandong University; Major Innovation \& Planning Interdisciplinary Platform for the ``Double-First Class" Initiative, Renmin University of China; the Fundamental Research Funds for the Central Universities, and the Research Funds of Renmin University of China (22XNKJ13). C. Li was also sponsored by Beijing Nova Program.


\bibliography{reference}
\bibliographystyle{icml2023}

\newpage
\appendix
\onecolumn

\renewcommand{\contentsname}{Contents of Appendix}
\tableofcontents
\addtocontents{toc}{\protect\setcounter{tocdepth}{3}} 

\clearpage


\section{General Techniques}
\label{sec:app_general_techniques}


In this section, we introduce the general techniques, which mainly consist of a new McDiarmid-type concentration inequality and a general generalization bound of learning multiple tasks with graph-dependent examples.

\subsection{A new McDiarmid-type concentration inequality}

\subsubsection{Backgrounds}

First, we introduce the bounded differences property and a lemma for the proof of the subsequent theorem.
\begin{definition}[\textbf{The bounded differences property}~\cite{mcdiarmid1989method}]
    Let $x_1, x_2, \dots, x_m \in \mathcal{X}$, and function $f: \mathcal{X}^m \rightarrow \sR$. Then, $f$ is said to have bounded differences property if there exist $c_1,\dots,c_m > 0$ such that
    \begin{align*}
        |f(x_1,\dots,x_i,\dots,x_m) - f(x_1,\dots,x_i^{\prime},\dots,x_m)| \leq c_i,
    \end{align*}
    for all $i \in [m]$ and any points $x_1,\dots,x_m, x_i^{\prime} \in \mathcal{X}$.
\end{definition}

\begin{lemma}[\cite{mcdiarmid1989method}]
\label{lemma_mcdiarmid1}
    Let $\mX = (X_1,\dots, X_m) \in \mathcal{X}^m$ be a vector of $m$ independent random variables and function $f: \mathcal{X}^m \rightarrow \sR$ satisfies the bounded differences property with $c_i$ ($i \in [m]$), then for any $s > 0$,
    \begin{align*}
        \eE[\exp (s(f(\mX) - \eE [f(\mX)]))] \leq \exp \left( \frac{s^2 \sum_{i \in [m]} c_i^2}{8} \right) .
    \end{align*}
\end{lemma}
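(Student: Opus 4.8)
\textbf{Proof plan for Lemma~\ref{lemma_mcdiarmid1}.}

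The plan is to reduce the statement to a martingale (Doob) decomposition and then bound each increment with Hoeffding's lemma. First I would introduce the filtration $\mathcal{F}_i = \sigma(X_1, \dots, X_i)$ for $i \in \{0, 1, \dots, m\}$, with $\mathcal{F}_0$ the trivial $\sigma$-algebra, and define the conditional expectations $V_i = \eE[f(\mX) \mid \mathcal{F}_i]$. Then $V_m = f(\mX)$, $V_0 = \eE[f(\mX)]$, and the differences $D_i = V_i - V_{i-1}$ form a martingale difference sequence, so that $f(\mX) - \eE[f(\mX)] = \sum_{i \in [m]} D_i$. The key structural observation is that, by independence of the $X_j$, we can write $D_i = g_i(X_1, \dots, X_i)$ for an explicit function and, crucially, that conditionally on $\mathcal{F}_{i-1}$ the random variable $D_i$ takes values in an interval of length at most $c_i$: indeed $D_i = \eE[f \mid X_1, \dots, X_i] - \eE[f \mid X_1, \dots, X_{i-1}]$, and by the bounded differences property one shows $\sup D_i - \inf D_i \le c_i$ where sup/inf are over the value of $X_i$ with $X_1, \dots, X_{i-1}$ frozen.

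Next I would peel off the conditional expectations one at a time. Using the tower property, write
\begin{align*}
    \eE\!\left[\exp\!\left(s \sum_{i \in [m]} D_i\right)\right]
    = \eE\!\left[\exp\!\left(s \sum_{i \in [m-1]} D_i\right) \eE\!\left[\exp(s D_m) \,\middle|\, \mathcal{F}_{m-1}\right]\right].
\end{align*}
Since $D_m$ is $\mathcal{F}_m$-measurable with $\eE[D_m \mid \mathcal{F}_{m-1}] = 0$ and, conditionally on $\mathcal{F}_{m-1}$, lies in an interval of length at most $c_m$, Hoeffding's lemma gives $\eE[\exp(s D_m) \mid \mathcal{F}_{m-1}] \le \exp(s^2 c_m^2 / 8)$ almost surely. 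Substituting this bound and iterating the argument down through $i = m-1, m-2, \dots, 1$ yields
\begin{align*}
    \eE\!\left[\exp\!\left(s \sum_{i \in [m]} D_i\right)\right] \le \prod_{i \in [m]} \exp\!\left(\frac{s^2 c_i^2}{8}\right) = \exp\!\left(\frac{s^2 \sum_{i \in [m]} c_i^2}{8}\right),
\end{align*}
which is exactly the claimed inequality.

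I expect the main obstacle to be the careful justification that $D_i$, conditioned on $\mathcal{F}_{i-1}$, is bounded in a range of width at most $c_i$. One writes $D_i = h(X_1, \dots, X_i)$ where $h(x_1, \dots, x_i) = \eE_{X_{i+1}, \dots, X_m}[f] - \eE_{X_i, \dots, X_m}[f]$ (using independence to drop the conditioning on the integrated-out coordinates), and then needs to check that for fixed $x_1, \dots, x_{i-1}$, varying $x_i$ over two values $x_i, x_i'$ changes $h$ by at most $c_i$; this follows by bringing both terms under a common expectation over $X_{i+1}, \dots, X_m$ and applying the bounded differences property pointwise, then taking $\sup$ and $\inf$ over $x_i$ and invoking Hoeffding's lemma for a mean-zero random variable supported on an interval of that width. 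The remaining steps (tower property, Hoeffding's lemma, the telescoping product) are standard and routine.
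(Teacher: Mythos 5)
The paper does not prove this lemma at all: it is imported verbatim from \citet{mcdiarmid1989method} as a known building block for Theorem~\ref{thm:new_mcdiarmid}. Your argument --- the Doob martingale $V_i = \eE[f(\mX)\mid X_1,\dots,X_i]$, the observation that independence lets you show each increment $D_i$ is conditionally supported on an interval of width at most $c_i$, and the conditional Hoeffding's lemma telescoped via the tower property --- is correct and is precisely the classical proof in the cited reference, so there is nothing to fault.
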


Here we introduce some necessary notions of graph theory in this paper, and we refer readers to~\citep{janson2004large, amini2015learning} and recent survey~\cite{zhang2022generalization}.

Given a graph $G = (V, E)$, we introduce the following notions.
\begin{definition}[\textbf{Fractional independent vertex cover, and fractional chromatic number}~\cite{zhang2022generalization}]
~
    \begin{enumerate}[(1)]
        \item A family $\{ (F_j, \omega_j) \}_j$ of pairs $(F_j, \omega_j)$, where $F_j \subseteq V(G)$ and $\omega_j \in (0,1]$ is a \textbf{fractional vertex cover} of $G$ if $\sum_{j:v \in F_j} \omega_j = 1$ for every $v \in V(G)$.
        \item An \textbf{independent set} of $G$ is a set of vertices in $G$ such that no two them are adjacent. The set of independent sets of $G$ is denoted by $\mathcal{I}(G)$.
        \item A \textbf{fractional independent vertex cover} $\{ (I_j, \omega_j) \}_j$ of $G$ is fractional vertex cover such that $I_j \in \mathcal{I}(G)$ for every $j$.
        \item A \textbf{fractional coloring} of a graph $G$ is a mapping $g$ from $\mathcal{I}(G)$ to $(0,1]$ such that $\sum_{I \in \mathcal{I}(G): v \in I} g(I) \geq 1$ for every vertex $v \in V(G)$. The \textbf{fractional chromatic number} $\chi_f(G)$ is the minimum of the value $\sum_{I \in \mathcal{I}(G)} g(I)$ over fractional colorings of $G$.
        
        Note that the fractional chromatic number $\chi_f(G)$ of a graph $G$ is the minimum of $\sum_{j} \omega_j$ over all fractional independent vertex covers $\{ (I_j, \omega_j) \}_j$ of $G$.
    \end{enumerate}
\end{definition}

Next, we introduce the notion of dependency graph as follows.
\begin{definition}[\textbf{Dependency graph}~\cite{janson2004large}]
    An undirected graph $G = (V, E)$ is called a dependency graph of a random vector $\mX = (X_1,\dots, X_m)$ if \begin{enumerate}[(1)]
        \item $V(G) = [m]$.
        \item For all disjoint $I, J \in [m]$, if $I, J$ are not adjacent in $G$, then random variables $\{X_i\}_{i \in I}$ and $\{X_j\}_{j \in J}$ are independent.
    \end{enumerate}
    Then, we say that random vector $\mX$ is $G$-dependent with a dependency graph $G$.
\end{definition}
An important property of the dependency graph, combined with the notion of fractional independent covers, is Janson's decomposition property~\cite{janson2004large}. Specifically, suppose interdependent random variables $(X_i)_{i \in [m]}$ is $G$-dependent with a dependency graph $G$, and $\{ (I_j, \omega_j) \}_{j \in [J]}$ is a fractional independent vertex cover of $G$. Then, we can decompose the sum of interdependent variables into a weighted sum of sums of independent variables, i.e.,
\begin{align}
\label{eq:app_jansen_decomposition}
    \sum_{i=1}^m X_i = \sum_{i=1}^m \sum_{j=1}^J \omega_j [\![ i \in I_j ]\!] X_i = \sum_{j=1}^J \omega_j \sum_{i \in I_j} X_i .
\end{align}

\subsubsection{Proof of the new McDiarmid-type concentration inequality}
Here we propose a new and more general McDiarmid-type inequality as follows, which mainly follows the work~\cite{janson2004large, usunier2005generalization,amini2015learning} and we refer to a recent related survey~\cite{zhang2022generalization}.

\begin{restatable}[\textbf{A new and more general McDiarmid-type inequality}]
    {theorem}{NewMcDiarmid}
\label{thm:new_mcdiarmid}
    Let $\mX_1 = (\vx_{11},\dots,\vx_{1m_1}) \in \mathcal{X}^{m_1}$, \dots, $\mX_K = (\vx_{K1},\dots,\vx_{Km_K}) \in \mathcal{X}^{m_K}$ be vectors of random variables and $\mX$ denote $(\mX_1,\dots, \mX_K) = (\vx_{11},\dots, \vx_{Km_K})$ for convenience. Let $f_1: \mathcal{X}^{m_1} \rightarrow \sR$, \dots, $f_K: \mathcal{X}^{m_K} \rightarrow \sR$ and $f: \mathcal{X}^{m} \rightarrow \sR$ be functions with $\sum_{k=1}^K m_k = m$. Assume each $\mX_k$ ($k \in [K]$) is $G_k$-dependent with a dependency graph $G_k$.\footnote{Note that here we only make the dependency assumptions within each $\mathbf{X}_k$ but have no assumptions between different $\mathbf{X}_k$s, where $\mathbf{X}_k$s can be independent or dependent, regardless of independence.} Besides, assume the function $f$ satisfies the following constraints: 
    \begin{enumerate}[(1)]
    \setlength\itemsep{-3pt}
        \item $f(\mX) = \sum_{k \in [K]} f_k(\mX_k)$;
        \item $f_k(\mX_k)$ has the decomposability constraint with the bounded difference property w.r.t. the graph $G_k$, i.e., for all $\vx_k \in \mathcal{X}^{m_k}$ and the minimal fractional independent vertex covers $\{ (I_{kj}, \omega_{kj})\}_{j \in [J_k]}$ of $G_k$, there exists functions $\{ f_{kj}: \mathcal{X}^{|I_{kj}|} \rightarrow \sR\}_{j \in [J_k]}$ such that $f_{kj}$ satisfies the bounded difference property with $c_{ki}$ ($i \in I_{kj}$) and 
        \begin{align*}
            f_k(\vx_k) = \sum_{j \in [J_k]} \omega_{kj} f_{kj} (\vx_{I_{kj}}) ,
        \end{align*}
        where $\vx_{I_{kj}}$ denotes $(\vx_{ki})_{i \in I_{kj}}$.
    \end{enumerate}
    Then, for any $t > 0$,
    \begin{align*}
        \pP (f(\mX) - & \eE[f(\mX)] \geq t)  \leq \\
        & \exp \left(- \frac{2 t^2}{K \sum_{k \in [K]} \left( \chi_{f}(G_k) \sum_{i\in [m_k]} c_{ki}^2 \right) } \right) , 
    \end{align*}
    where $\chi_{f}(G_k)$ is the fractional chromatic number of $G_k$.
    
\end{restatable}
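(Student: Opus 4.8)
The plan is to run the classical Chernoff argument on the upper tail: bound $\pP(f(\mX)-\E f(\mX)\ge t)$ by $\inf_{s>0} e^{-st}\,\E\bigl[e^{s(f(\mX)-\E f(\mX))}\bigr]$, control the moment generating function (MGF) of the centered sum, and optimize over $s$. Writing $Y_k = f_k(\mX_k)-\E[f_k(\mX_k)]$ we have $f(\mX)-\E f(\mX) = \sum_{k\in[K]} Y_k$, but because no independence is assumed \emph{across} the blocks $\mX_1,\dots,\mX_K$, the MGF does not factorize over $k$. The first key step is therefore to decouple the blocks by the generalized H\"older inequality (which requires no independence at all): taking all exponents equal to $K$,
\[
\E\bigl[e^{s\sum_k Y_k}\bigr] = \E\Bigl[\textstyle\prod_{k}\bigl(e^{sKY_k}\bigr)^{1/K}\Bigr] \le \prod_{k}\bigl(\E[e^{sKY_k}]\bigr)^{1/K},
\]
so it suffices to bound each single-block MGF $\E[e^{s'Y_k}]$ with $s' = sK$.

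For a fixed block $k$, the second key step handles the \emph{within}-block dependence through the dependency graph $G_k$, following the single-task analyses of \citet{usunier2005generalization, amini2015learning}. By the decomposability hypothesis (the natural assumption suggested by Janson's decomposition, cf.\ Eq.~\eqref{eq:app_jansen_decomposition}), $Y_k = \sum_{j\in[J_k]}\omega_{kj} Z_{kj}$ with $Z_{kj} = f_{kj}(\mX_{k,I_{kj}})-\E[f_{kj}(\mX_{k,I_{kj}})]$ and $\sum_j\omega_{kj} = \chi_f(G_k)$. I would apply the generalized H\"older inequality again, now with exponents $\chi_f(G_k)/\omega_{kj}$ (whose reciprocals $\omega_{kj}/\chi_f(G_k)$ sum to $1$), to obtain $\E[e^{s'Y_k}] \le \prod_j \bigl(\E[e^{s'\chi_f(G_k)Z_{kj}}]\bigr)^{\omega_{kj}/\chi_f(G_k)}$. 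Since $I_{kj}$ is an independent set of $G_k$, the coordinates of $\mX_{k,I_{kj}}$ are mutually independent, and $f_{kj}$ has the bounded differences property with constants $(c_{ki})_{i\in I_{kj}}$; hence Lemma~\ref{lemma_mcdiarmid1} gives $\E[e^{s'\chi_f(G_k)Z_{kj}}] \le \exp\bigl((s'\chi_f(G_k))^2\sum_{i\in I_{kj}}c_{ki}^2/8\bigr)$, which I would relax by enlarging $\sum_{i\in I_{kj}}$ to $\sum_{i\in[m_k]}$. Multiplying the per-$j$ bounds and using the fractional-cover identity $\sum_{j:i\in I_{kj}}\omega_{kj}=1$ to collapse $\sum_j\omega_{kj}\sum_{i\in I_{kj}}c_{ki}^2 = \sum_{i\in[m_k]}c_{ki}^2$, one arrives at the clean single-block estimate $\E[e^{s'Y_k}] \le \exp\bigl((s')^2\,\chi_f(G_k)\sum_{i\in[m_k]}c_{ki}^2/8\bigr)$.

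Substituting $s' = sK$ into the across-block product then yields $\E[e^{s(f-\E f)}] \le \exp\bigl(\tfrac{s^2 K}{8}\sum_{k\in[K]}\chi_f(G_k)\sum_{i\in[m_k]}c_{ki}^2\bigr)$, so the Chernoff bound becomes $\pP(f(\mX)-\E f(\mX)\ge t) \le \exp\bigl(-st+\tfrac{s^2 K V}{8}\bigr)$ with $V = \sum_k\chi_f(G_k)\sum_i c_{ki}^2$, and the optimal $s = 4t/(KV)$ produces the claimed exponent $-2t^2/(KV)$. I expect the main obstacle, and the genuine novelty over the single-task ($K=1$) case, to lie in the across-block decoupling: the tempting move is Jensen's inequality $e^{\frac1K\sum_k K a_k}\le \frac1K\sum_k e^{Ka_k}$, but that route inflates the per-block exponent by an extra $\chi_f(G_k)$ (turning it into $\chi_f(G_k)^2$) and destroys the additive-in-$k$ structure; obtaining the factor $K$ \emph{outside} the sum rather than $K^2$ or a maximum depends on using H\"older with the uniform exponent $K$, so that the induced scaling $s'=sK$ meshes correctly with the quadratic MGF bounds from Lemma~\ref{lemma_mcdiarmid1}. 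A routine point to verify along the way is finiteness of all the MGFs involved, which follows from the boundedness implied by the bounded differences property together with Lemma~\ref{lemma_mcdiarmid1}.
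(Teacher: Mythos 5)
Your proof is correct and arrives at exactly the claimed bound, but by a genuinely different route from the paper's. Where you decouple via the generalized H\"older inequality --- uniform exponent $K$ across blocks, exponents $\chi_f(G_k)/\omega_{kj}$ within each block --- the paper instead applies Jensen's inequality to the convex combinations $\sum_k p_k(\cdot)$ and $\sum_j q_{kj}(\cdot)$ inside the expectation, turning the exponential of the sum into a weighted sum of exponentials, bounding each term by Lemma~\ref{lemma_mcdiarmid1}, and then choosing $q_{kj}\propto\omega_{kj}\sqrt{\smash{\sum_{i\in I_{kj}}c_{ki}^2}}$ and $p_k\propto\sum_j\omega_{kj}\sqrt{\smash{\sum_{i\in I_{kj}}c_{ki}^2}}$ so that the weighted sum collapses back to a single exponential. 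The paper therefore passes through the strictly tighter intermediate MGF bound $\exp\bigl(\tfrac{s^2}{8}\bigl(\sum_k\sum_j\omega_{kj}\sqrt{\smash{\sum_{i\in I_{kj}}c_{ki}^2}}\bigr)^2\bigr)$ and only afterwards relaxes it to the stated constant via $(\sum_k a_k)^2\le K\sum_k a_k^2$, Cauchy--Schwarz, and the cover identity, whereas your uniform-exponent H\"older lands on the stated constant directly and is shorter; both arguments invoke Lemma~\ref{lemma_mcdiarmid1} only on the independent sets $I_{kj}$ and require no independence across the blocks $\mX_k$, consistent with the theorem's footnote. Two small points to tidy: the sentence proposing to ``relax $\sum_{i\in I_{kj}}$ to $\sum_{i\in[m_k]}$'' before taking the product over $j$ should be deleted --- doing that first would cost an extra factor of $\chi_f(G_k)$ --- and your actual chain correctly keeps $\sum_{i\in I_{kj}}$ and collapses it with $\sum_{j:i\in I_{kj}}\omega_{kj}=1$ only after multiplying the per-$j$ bounds; also, your closing remark that Jensen ``destroys the additive-in-$k$ structure'' applies to uniform-weight Jensen, not to the optimally weighted version the paper uses, which recovers the same (indeed a marginally better) constant.
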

\begin{remark*}
    The McDiarmid-type inequality in prior work~\cite{usunier2005generalization,amini2015learning} can be viewed as a special case of the above one by setting $K=1$. Thus, our proposed new McDiarmid-type inequality is more general.
\end{remark*}

\begin{proof}
    Following the Cram{\'e}r-Chernoff method~\cite{boucheron2013concentration}, we have for any $s > 0$ and $t > 0$,
    \begin{align}
    \label{eq:theorem_McDiarmid_tmp1}
        \pP (f(\mX) - \eE [f(\mX)] \geq t) \leq e^{-st} \eE[\exp (s(f(\mX) - \eE [f(\mX)]))] .
    \end{align}
    For the dependency graph, the $k$-th sub-graph $G_k$ has $m_k$ vertexes. Further,
    let $I_{k}$ be the vertex set of $G_k$ and $\{ (I_{kj}, \omega_{kj})\}_{j \in [J_k]}$ be a minimal fractional independent vertex cover of $G_k$ with $\sum_{j \in [J_k]} \omega_{kj} = \chi_{f} (G_k)$. Utilizing the decomposition property $f(\vx) = \sum_{k \in [K]} f_k(\vx_k) = \sum_{k \in [K]} \sum_{j \in [J_k]} \omega_{kj} f_{kj} (I_{kj})$ where $\vx_{I_{kj}}$ is denoted by $I_{kj}$ for notation simplicity, 
    we have for the expectation term on the right-hand side of the above inequality:
    \begin{align*}
        \eE[\exp (s(f(\mX) - \eE [f(\mX)]))] = \eE \left [ \exp \left(\sum_{k \in [K]} \sum_{j \in [J_k]} s \omega_{kj} \left( f_{kj} (I_{kj}) - \eE f_{kj} (I_{kj}) \right) \right) \right] .
    \end{align*}
    Let $\{p_1, p_2,..., p_K\}$ be any set of $K$ strictly positive real numbers that sum to $1$. Similarly, for each $k \in [K]$, let $\{q_{k1}, q_{k2},..., q_{k J_k}\}$ be any set of $J_k$ strictly positive real numbers that sum to $1$. Then, based on the convexity of the exponential function, we can have the following:
    \begin{align*}
        \eE[\exp (s(f(\mX) - \eE [f(\mX)]))] & = \eE \left [ \exp \left( \sum_{k \in [K]} \sum_{j \in [J_k]} s \omega_{kj} \left( f_{kj} (I_{kj}) - \eE f_{kj} (I_{kj}) \right) \right) \right] & \\
        & = \eE \left [ \exp \left(\sum_{k \in [K]} p_k \sum_{j \in [J_k]} \frac{s \omega_{kj}}{p_k} \left( f_{kj} (I_{kj}) - \eE f_{kj} (I_{kj}) \right) \right) \right] & (\text{definition of~} p_k)\\
        & \leq \eE \left [ \sum_{k \in [K]} p_k \exp \left( \sum_{j \in [J_k]} \frac{s \omega_{kj}}{p_k} \left( f_{kj} (I_{kj}) - \eE f_{kj} (I_{kj}) \right) \right) \right] & (\text{Jensen's inequality}) \\
        & = \eE \left [ \sum_{k \in [K]} p_k \exp \left( \sum_{j \in [J_k]} q_{kj} \frac{s \omega_{kj}}{p_k q_{kj}} \left( f_{kj} (I_{kj}) - \eE f_{kj} (I_{kj}) \right) \right) \right] & (\text{definition of~} p_{kj})\\
        & \leq \eE \left [ \sum_{k \in [K]} p_k \sum_{j \in [J_k]} q_{kj} \exp \left(  \frac{s \omega_{kj}}{p_k q_{kj}} \left( f_{kj} (I_{kj}) - \eE f_{kj} (I_{kj}) \right) \right) \right] & (\text{Jensen's inequality}) \\
        & = \sum_{k \in [K]} p_k \underbrace{\sum_{j \in [J_k]} q_{kj} \eE \left [ \exp \left( \frac{s \omega_{kj}}{p_k q_{kj}} \left( f_{kj} (I_{kj}) - \eE f_{kj} (I_{kj}) \right) \right) \right]}_{\overset{def}{=} \clubsuit_k} & (\text{linearity of expectation}) .
    \end{align*}
    Here we can observe that for the summation term $\clubsuit_k$, the random variables associated with each term $j \in [J_k]$ are independent. Thus, applying the Lemma~\ref{lemma_mcdiarmid1}, we can get
    \begin{align*}
        \clubsuit_k = \sum_{j \in [J_k]} q_{kj} \eE \left [ \exp \left( \frac{s \omega_{kj}}{p_k q_{kj}} \left( f_{kj} (I_{kj}) - \eE f_{kj} (I_{kj}) \right) \right) \right] \leq \sum_{j \in [J_k]} q_{kj} \exp \left( \frac{s^2 \omega_{kj}^2}{8 p_k^2 q_{kj}^2} \sum_{i \in I_{kj}} c_{ki}^2 \right) .
    \end{align*}
    By rearranging terms in the exponential of right hand side of the inequality above and by setting
    \begin{align*}
        q_{kj} = \frac{\omega_{kj} \sqrt{\sum_{i \in I_{kj}} c_{ki}^2}}{\sum_{j \in [J_k]} \left( \omega_{kj} \sqrt{\sum_{i \in I_{kj}} c_{ki}^2} \right)} ,
    \end{align*}
    we have:
    \begin{align*}
        \sum_{j \in [J_k]} q_{kj} \exp \left( \frac{s^2 \omega_{kj}^2}{8 p_k^2 q_{kj}^2} \sum_{i \in I_{kj}} c_{ki}^2 \right) & = \sum_{j \in [J_k]} q_{kj} \exp \left( \frac{s^2}{8 p_k^2} \left( \sum_{j \in [J_k]} \left( \omega_{kj} \sqrt{\sum_{i \in I_{kj}} c_{ki}^2} \right) \right)^2 \right) \\
        & = \exp \left( \frac{s^2}{8 p_k^2} \left( \sum_{j \in [J_k]} \left( \omega_{kj} \sqrt{\sum_{i \in I_{kj}} c_{ki}^2} \right) \right)^2 \right) \qquad (\sum_{j \in [m_k]} q_{kj} = 1) .
    \end{align*}
    
    Till now, we have the following:
    \begin{align*}
        \eE[\exp (s(f(\mX) - \eE [f(\mX)]))] \leq \sum_{k \in [K]} p_k \clubsuit_k \leq \sum_{k \in [K]} p_k \exp \left( \frac{s^2}{8 p_k^2} \left( \sum_{j \in [J_k]} \left( \omega_{kj} \sqrt{\sum_{i \in I_{kj}} c_{ki}^2} \right) \right)^2 \right) .
    \end{align*}
    
    Next, similarly to the above proof idea w.r.t. the $q_{kj}$, we set $p_k$ as follows:
    \begin{align*}
        p_k = \frac{\sum_{j \in [J_k]} \left( \omega_{kj} \sqrt{\sum_{i \in I_{kj}} c_{ki}^2} \right) }{\sum_{k \in [K]} \sum_{j \in [J_k]} \left( \omega_{kj} \sqrt{\sum_{i \in I_{kj}} c_{ki}^2} \right)} . 
    \end{align*}
    Then, it comes:
    \begin{align*}
        \sum_{k \in [K]} p_k \exp \left( \frac{s^2}{8 p_k^2} \left( \sum_{j \in [J_k]} \left( \omega_{kj} \sqrt{\sum_{i \in I_{kj}} c_{ki}^2} \right) \right)^2 \right) & = \sum_{k \in [K]} p_k \exp \left( \frac{s^2}{8} \left( \sum_{k \in [K]} \sum_{j \in [J_k]} \left( \omega_{kj} \sqrt{\sum_{i \in I_{kj}} c_{ki}^2} \right) \right)^2 \right) \\
        & = \exp \left( \frac{s^2}{8} \left( \sum_{k \in [K]} \sum_{j \in [J_k]} \left( \omega_{kj} \sqrt{\sum_{i \in I_{kj}} c_{ki}^2} \right) \right)^2 \right) \quad (\sum_{k \in [K]} p_k = 1) 
        \\
        & \overset{\circled{1}}{\leq} \exp \left( \frac{s^2 K}{8} \sum_{k \in [K]} \left( \sum_{j \in [J_k]}  \omega_{kj} \sqrt{\sum_{i \in I_{kj}} c_{ki}^2} \right)^2 \right) \\
        & = \exp \left( \frac{s^2 K}{8} \sum_{k \in [K]} \left( \sum_{j \in [J_k]} \left( \sqrt{\omega_{kj}} \sqrt{\omega_{kj} \sum_{i \in I_{kj}} c_{ki}^2} \right) \right)^2 \right)  \\
        & \overset{\circled{2}}{\leq} \exp \left( \frac{s^2 K}{8} \sum_{k \in [K]} \left( \sum_{j \in [J_k]} \omega_{kj} \right) \left( \sum_{j \in [J_k]} \omega_{kj} \sum_{i \in I_{kj}} c_{ki}^2 \right) \right) \\ 
        & \overset{\circled{3}}{=} \exp \left( \frac{s^2 K}{8} \sum_{k \in [K]} \chi_{f} (G_k) \left( \sum_{i \in [m_k]} c_{ki}^2 \right) \right) .
    \end{align*}
    For $\circled{1}$, it is based on the inequality $\left(\sum_{i=1}^n a_i \right)^2 \leq n \sum_{i=1}^n a_i^2$. For $\circled{2}$, it is due to the Cauchy-Schwarz inequality. For $\circled{3}$, it is due to the definition of the fractional chromatic number, i.e., $\sum_{j \in [J_k]} \omega_{kj} = \chi_{f} (G_k)$, 
    and the decomposition property of fractional independent vertex covers of dependency graph~\cite{janson2004large}, i.e., 
    for a fractional independent vertex cover $\{ (I_{kj}, \omega_{kj})\}_{j \in [J_k]}$ of $G_k$, then the sum of interdependent variables can be decomposed into a weighted sum of sums of independent variables as follows:
    \begin{align*}
        \sum_{i=1}^{m_k} \vx_{ki} = \sum_{i=1}^{m_k} \sum_{j=1}^{J_k} \omega_{kj} [\![ i \in I_{kj} ]\!] \vx_{ki} = \sum_{j=1}^{J_k} \omega_{kj} \sum_{i \in I_{kj}} \vx_{ki} .
    \end{align*}
    Since $\mathbf{X}_{k} = [\mathbf{x}_{k1}, ..., \mathbf{x}_{k m_k}]$ is a random vector, we can take the specific values to get the equation $\circled{3}$ based on the inequality $\circled{2}$. Specifically, if we take $\mathbf{x}_{ki} = c_{ki}^2$ for each $i \in [m_k]$, then we can get 
    \begin{align*}
        \sum_{i=1}^{m_k} c_{ki}^2 = \sum_{j=1}^{J_k} \omega_{kj} \sum_{i \in I_{kj}} c_{ki}^2 .
    \end{align*}
    Thus, we have obtained 
    \begin{align*}
        \eE[\exp (s(f(\mX) - \eE [f(\mX)]))] \leq \exp \left( \frac{s^2 K}{8} \sum_{k \in [K]} \chi_{f} (G_k) \left( \sum_{i \in [m_k]} c_{ki}^2 \right) \right).
    \end{align*}
    Combining the inequality~\eqref{eq:theorem_McDiarmid_tmp1}, we can get
    \begin{align*}
        \pP (f(\mX) - \eE [f(\mX)] \geq t) \leq \exp \left( -st + \frac{s^2 K}{8} \sum_{k \in [K]} \chi_{f} (G_k) \left( \sum_{i \in [m_k]} c_{ki}^2 \right) \right).
    \end{align*}
    We can obtain the final result by minimizing the right-hand side of the above inequality over $s$.
\end{proof}

\subsection{Learning multiple tasks with graph-dependent examples}

\subsubsection{Problem setting}
\label{sec:app_problem_setting}
    Here we consider learning with multiple tasks where each task might contain dependent training examples and the dependency relationship is characterized by a dependency graph. Formally, given a training dataset $\tilde{S} = \{(\tilde{\vx}, \ty)\}_{i=1}^{m}$ that is composed of $K$ blocks (or tasks), i.e., $\tilde{S} = (\tilde{S}_1,\dots,\tilde{S}_K)$ with each $\tilde{S}_k = \{(\tilde{\vx}_{ki}, \ty_{ki})\}_{i=1}^{m_k}$ drawn from the distribution $D_k$ ($k \in [K]$) over $\widetilde{\mathcal{X}} \times \widetilde{\mathcal{Y}}$ with a dependency graph $G_k$ and $\sum_{k \in [K]} m_k = m$. 
    The goal is to learn a mapping $\tilde{h} = (\tilde{h}_1,\dots,\tilde{h}_K)$, where $\tilde{h}_k: \widetilde{\mathcal{X}} \rightarrow \widetilde{\mathcal{Y}}$ for each $k \in [K]$.
    
    Let $\widetilde{\mathcal{F}} = \left \{ \tilde{f} = (\tilde{f}_1,\dots,\tilde{f}_K)~|~ \tilde{f}_k: \widetilde{\mathcal{X}} \rightarrow \widehat{\mathcal{Y}}, k \in [K] \right \}$ be the hypothesis space, and denote $\widetilde{\mathcal{F}}_k = \left \{\tf_k ~|~ \tilde{f}_k: \widetilde{\mathcal{X}} \rightarrow \widehat{\mathcal{Y}} \right \}$ for each $k \in [K]$. Consider a loss function $L: \widetilde{\mathcal{X}} \times \widetilde{\mathcal{Y}} \times \widetilde{\mathcal{F}}_k \rightarrow \sR_+$. For a hypothesis $\tf \in \wtF$ and a training set $\tS$, the empirical risk of $\tf$ is defined as
    \begin{align*}
        \widehat{R}_{\tS}(\tilde{f}) = \frac{1}{K} \sum_{k=1}^K \frac{1}{m_k} \sum_{i=1}^{m_k} L(\tilde{\vx}_{ki}, \ty_{ki}, \tilde{f}_k) ,
    \end{align*}
    and the generalization (or expected) risk is defined as
    \begin{align}
    \label{eq:app_general_risk1}
        R(\tilde{f}) = \eE_{\tS} \left[ \widehat{R}_{\tS}(\tilde{f}) \right].
    \end{align}
    Note that we do not define the generalization risk as the following usual form
    \begin{align}
    \label{eq:app_general_risk2}
        \frac{1}{K} \sum_{k=1}^K \eE_{(\tilde{\vx}, \tilde{y}) \sim D_k} \left[ L(\tilde{\vx}, \ty, \tilde{f}_k) \right] .
    \end{align}
    This is because the definition in Eq.\eqref{eq:app_general_risk1} is more general than Eq.\eqref{eq:app_general_risk2}. Specifically, Eq.\eqref{eq:app_general_risk1} can cover the loss function dependent on the training set $\tS$ while Eq.\eqref{eq:app_general_risk2} cannot. Besides, they are equal for certain losses independent of $\tS$.

\subsubsection{The fractional Rademacher complexity of the loss space}

Here we give the definition of the fractional Rademacher complexity of the loss space as follows.
\begin{definition}[\textbf{The fractional Rademacher complexity of the loss space}]
For each $k \in [K]$, let $\{ (I_{kj}, \omega_{kj})\}_{j \in [J_k]}$ be a fractional independent vertex cover of the dependence graph $G_k$ constructed over $\tS_k$ with $\sum_{j \in [J_k]} \omega_{kj} = \chi_{f} (G_k)$. Let $\widetilde{\mathcal{F}} = \left \{ \tilde{f} = (\tilde{f}_1,\dots,\tilde{f}_K)~|~ \tilde{f}_k: \widetilde{\mathcal{X}} \rightarrow \widehat{\mathcal{Y}}, k \in [K] \right \}$ be the hypothesis space. Then, the empirical fractional Rademacher complexity of $\wtF$ given $\tS$ is defined by
    \begin{align*}
        \widehat{\mathfrak{R}}_{\tS}^*(L \circ \widetilde{\mathcal{F}}) = \frac{1}{K} \sum_{k=1}^K \eE_{\boldsymbol{\sigma}} \left[ \frac{1}{m_k} \sum_{j \in [J_k]} \omega_{kj} \sup_{\tf \in \wtF} \left( \sum_{i \in I_{kj}} \sigma_{ki} L(\tilde{\vx}_{ki}, \ty_{ki}, \tilde{f}_k) \right) \right],
    \end{align*}
where $\boldsymbol{\sigma} = (\sigma_{ki})_{k \in [K], i \in [m_k]}$ denotes $m$ independent Rademacher variables, that is, $\pP(\sigma_{ki} = +1) = \pP(\sigma_{ki} = -1) = 1/2$ for all variables. Furthermore, the fractional Rademacher complexity of $\wtF$ over all samples of size $m$ is defined by
    \begin{align*}
        \mathfrak{R}_m^*(L \circ \wtF) = \eE_{\tS \sim D_{[K]}^m} \left[ \widehat{\mathfrak{R}}_{\tS}^*(L \circ \wtF)  \right],
    \end{align*}
where $\tS \sim D_{[K]}^m$ denotes $\tS_1 \sim D_1^{m_1},\dots, \tS_K \sim D_K^{m_K}$ for simplicity.
\end{definition}

\subsubsection{Proof of the general generalization bound of learning multiple tasks with graph-dependent examples}

Here we give a general generalization bound of learning multiple tasks with graph-dependent examples as follows.


\begin{restatable}[\textbf{A general generalization bound of learning multiple tasks with graph-dependent examples}]
    {theorem}{GeneralBoundMultiTaskGraphDepedent}
\label{thm:general_bound_multiple_tasks_dependent}
Give a sample $\tilde{S} = \{\tilde{S}_1, \dots, \tilde{S}_K\}$ where each $\tilde{S}_{k \in [K]}$ is of size $m_k$ with dependency graph $G_k$ and a loss function $L: \widetilde{\mathcal{X}} \times \widetilde{\mathcal{Y}} \times \widetilde{\mathcal{F}}_k \rightarrow [0, M]$. Then, for any $\delta \in (0, 1)$, with probability at least $1 - \delta$, we have 
    \begin{align}
    \label{eq:general_bound1}
        \forall \tf \in \wtF, \ R(\tf) \leq & \widehat{R}_{\tS} (\tf) + 2 \mathfrak{R}_m^*(L \circ \wtF) \ + \nonumber\\
        & M \sqrt{ \left(\frac{1}{K} \sum_{k=1}^K \frac{\chi_{f}(G_k)}{2 m_k} \right) \log \left(\frac{1}{\delta} \right)} \ ,
    \end{align}
and
    \begin{align}
    \label{eq:general_bound2}
        \forall \tf \in \wtF, \ R(\tf) \leq & \widehat{R}_{\tS} (\tilde{f}) + 2 \widehat{\mathfrak{R}}_{\tS}^*(L \circ \wtF) \ + \nonumber\\
        & 3 M \sqrt{ \left(\frac{1}{K} \sum_{k=1}^K \frac{\chi_{f}(G_k)}{2 m_k} \right) \log \left(\frac{2}{\delta} \right)} \ .
    \end{align}
\end{restatable}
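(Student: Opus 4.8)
The plan is to run the classical Rademacher route --- bounded-differences concentration of the uniform deviation, followed by symmetrization --- but to channel every step through the fractional independent vertex covers $\{(I_{kj},\omega_{kj})\}_{j\in[J_k]}$ of the $G_k$, so that the graph dependence is absorbed into the $\chi_{f}(G_k)$ factors, using Janson's decomposition \eqref{eq:app_jansen_decomposition} and the new inequality Theorem~\ref{thm:new_mcdiarmid}.

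First I would introduce the uniform deviation $\Phi(\tS)=\sup_{\tf\in\wtF}(R(\tf)-\widehat{R}_{\tS}(\tf))$, so that $R(\tf)\le\widehat{R}_{\tS}(\tf)+\Phi(\tS)$ for every $\tf$. Since $\wtF$ is the product $\wtF_1\times\cdots\times\wtF_K$ and both $R$ and $\widehat{R}_{\tS}$ average per-task terms, the supremum decouples across tasks: $\Phi(\tS)=\frac1K\sum_{k\in[K]}g_k(\mX_k)$, where $g_k(\mX_k)=\sup_{\tf_k\in\wtF_k}(\E_{D_k}[L(\cdot,\cdot,\tf_k)]-\frac1{m_k}\sum_{i=1}^{m_k}L(\tilde{\vx}_{ki},\ty_{ki},\tf_k))$ depends only on $\mX_k$, which carries the graph $G_k$. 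Applying Janson's decomposition to the empirical average inside $g_k$ and pulling the supremum through the nonnegative $\omega_{kj}$-weighted sum gives a majorant $\Phi(\tS)\le\bar\Phi(\tS):=\frac1K\sum_k\sum_j\omega_{kj}\,g_{kj}(\mX_{k,I_{kj}})$ with $g_{kj}(\mX_{k,I_{kj}})=\frac1{m_k}\sup_{\tf_k}\sum_{i\in I_{kj}}(\E_{D_k}[L(\cdot,\cdot,\tf_k)]-L(\tilde{\vx}_{ki},\ty_{ki},\tf_k))$. Each $g_{kj}$ is a function of $\mX_{k,I_{kj}}$ only and has the bounded-differences property with constant $M/m_k$; hence $\frac1K\bar g_k:=\frac1K\sum_j\omega_{kj}g_{kj}$ fits exactly the decomposability-with-bounded-differences hypothesis of Theorem~\ref{thm:new_mcdiarmid} with $c_{ki}=M/(Km_k)$, and $\bar\Phi=\sum_k\frac1K\bar g_k(\mX_k)$. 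Feeding $\sum_{i\in[m_k]}c_{ki}^2=M^2/(K^2m_k)$ into Theorem~\ref{thm:new_mcdiarmid} yields, with probability at least $1-\delta$, $\bar\Phi(\tS)\le\E[\bar\Phi(\tS)]+M\sqrt{(\frac1K\sum_k\frac{\chi_{f}(G_k)}{2m_k})\log(1/\delta)}$.

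Second I would bound $\E[\bar\Phi]=\frac1K\sum_k\sum_j\omega_{kj}\E[g_{kj}]$ by symmetrization applied one independent set at a time: for fixed $(k,j)$ the examples indexed by $I_{kj}$ are i.i.d.\ from $D_k$, so introducing a ghost copy of $\tS_k$ on $I_{kj}$, replacing matched pairs by Rademacher-signed differences, and using subadditivity of the supremum give $\E[g_{kj}]\le\frac2{m_k}\E_{\tS_k}\E_{\boldsymbol{\sigma}}[\sup_{\tf_k}\sum_{i\in I_{kj}}\sigma_{ki}L(\tilde{\vx}_{ki},\ty_{ki},\tf_k)]$. Summing with weights $\omega_{kj}$ over $j$ and then over $k$ reassembles the right-hand side into precisely $2\,\mathfrak{R}_m^*(L\circ\wtF)$, so $\E[\bar\Phi]\le 2\,\mathfrak{R}_m^*(L\circ\wtF)$; combined with $\Phi\le\bar\Phi$ and the concentration bound above, this is \eqref{eq:general_bound1}. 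For \eqref{eq:general_bound2} I would additionally show that $\widehat{\mathfrak{R}}_{\tS}^*(L\circ\wtF)$ concentrates around $\mathfrak{R}_m^*(L\circ\wtF)$: as a function of $\tS$ it again decomposes per task and, within a task, per independent set, and changing one $\tilde{\vx}_{ki}$ moves it by at most $M/(Km_k)$ because the $\omega_{kj}$ over the $j$ containing $i$ sum to one; Theorem~\ref{thm:new_mcdiarmid} then gives $\mathfrak{R}_m^*\le\widehat{\mathfrak{R}}_{\tS}^*+M\sqrt{(\frac1K\sum_k\frac{\chi_{f}(G_k)}{2m_k})\log(2/\delta)}$ with probability at least $1-\delta/2$, and a union bound with \eqref{eq:general_bound1} at confidence $1-\delta/2$ collects the three square-root terms into the stated $3M$ coefficient with $\log(2/\delta)$.

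The step I expect to be delicate is the first one: getting the uniform-deviation function into a form that literally satisfies the structural requirements of Theorem~\ref{thm:new_mcdiarmid} --- namely that after the Janson majorization each per-cover piece $g_{kj}$ is a function of $\mX_{k,I_{kj}}$ alone, that its bounded-differences constants are uniform over the remaining coordinates, and that the $\chi_{f}(G_k)$-weighted recombination is invoked where it is an identity rather than merely an inequality. Once that bookkeeping is in place, the symmetrization step is routine, provided one checks that the marginals within each independent set $I_{kj}$ are identical so that the ghost-sample swap is valid.
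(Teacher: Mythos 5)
Your proposal is correct and follows essentially the same three-step route as the paper's proof: Janson-decompose the uniform deviation over the fractional independent vertex covers, concentrate it with Theorem~\ref{thm:new_mcdiarmid} using $c_{ki}=M/(Km_k)$, symmetrize independent-set by independent-set to obtain $2\,\mathfrak{R}_m^*(L\circ\wtF)$, and finally concentrate $\widehat{\mathfrak{R}}_{\tS}^*(L\circ\wtF)$ around $\mathfrak{R}_m^*(L\circ\wtF)$ and union-bound to collect the $3M$ coefficient with $\log(2/\delta)$. The only cosmetic differences are that you decouple the supremum across tasks via the product structure of $\wtF$ (the paper only needs subadditivity of $\sup$) and that you place the population term $\E_{D_k}[L]$ inside the per-set supremum rather than working with the ghost-sample form $\E_{\tS'}[\widehat{R}_{\tS'}(\tf)]$ throughout, which is equivalent for a sample-independent loss as in the theorem statement.
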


\begin{proof}
    The proof can be divided into three major steps as follows.
    
    \textbf{Step 1: link the supremum of $R(\tf) - \widehat{R}_{\tS} (\tf)$ on $\wtF$ with its expectation.}
    
    For any $\tilde{f} \in \widetilde{\mathcal{F}}$, we have $\widehat{R}(\tilde{f})$ is an unbiased estimator of $R(\tilde{f})$ because the data points in the sample $\tilde{S}_{k}$ are assumed to be $G$-dependent and have the same marginal distribution. Hence considering an independent ghost sample $\tilde{S}^{\prime}$ with the same generation process as $\tilde{S}$, we have
    \begin{align*}
        \sup_{\tf \in \wtF} \left(R(\tf) - \widehat{R}_{\tS} (\tf) \right) = \sup_{\tf \in \wtF} \left( \eE_{\tS^{\prime}} [ \widehat{R}_{\tS^{\prime}} (\tf)] - \widehat{R}_{\tS} (\tf) \right) = \sup_{\tf \in \wtF} \left( \eE_{\tS^{\prime}} \left[ \widehat{R}_{\tS^{\prime}} (\tf) - \widehat{R}_{\tS} (\tf) \right] \right) .
    \end{align*}
    For each $k \in [K]$, let $\{ (I_{kj}, \omega_{kj})\}_{j \in [J_k]}$ be a fractional independent vertex cover of the dependence graph $G_k$ with $\sum_{j \in [J_k]} \omega_{kj} = \chi_{f} (G_k)$.
    Since the supremum of the expectation is lower than the expectation of the supremum, we can have
    \begin{align*}
        \sup_{\tf \in \wtF} \left( \eE_{\tS^{\prime}} \left[ \widehat{R}_{\tS^{\prime}} (\tf) - \widehat{R}_{\tS} (\tf) \right] \right) & \leq  \eE_{\tS^{\prime}} \left[ \sup_{\tf \in \wtF} \left(  \widehat{R}_{\tS^{\prime}} (\tf) - \widehat{R}_{\tS} (\tf) \right) \right] \\
        & = \eE_{\tS^{\prime}} \left[ \sup_{\tf \in \wtF} \left( \frac{1}{K} \sum_{k=1}^K \frac{1}{m_k} \sum_{i=1}^{m_k} \left( L(\tilde{\vx}_{ki}^{\prime}, \ty_{ki}^{\prime}, \tilde{f}_k) - L(\tilde{\vx}_{ki}, \ty_{ki}, \tilde{f}_k) \right) \right) \right] \\
        & \overset{\circled{1}}{=} \eE_{\tS^{\prime}} \left[ \sup_{\tf \in \wtF} \left( \frac{1}{K} \sum_{k=1}^K \sum_{j \in [J_k]} \frac{\omega_{kj}}{m_k} \sum_{i \in I_{kj}} \left( L(\tilde{\vx}_{ki}^{\prime}, \ty_{ki}^{\prime}, \tilde{f}_k) - L(\tilde{\vx}_{ki}, \ty_{ki}, \tilde{f}_k) \right) \right) \right] \\
        & \overset{\circled{2}}{\leq} \frac{1}{K} \sum_{k=1}^K \sum_{j \in [J_k]} \frac{\omega_{kj}}{m_k} \eE_{\tS_k^{\prime}} \left[ \sup_{\tf \in \wtF} \left( \sum_{i \in I_{kj}} \left( L(\tilde{\vx}_{ki}^{\prime}, \ty_{ki}^{\prime}, \tilde{f}_k) - L(\tilde{\vx}_{ki}, \ty_{ki}, \tilde{f}_k) \right) \right) \right] ,
    \end{align*}
    where the inequality $\circled{1}$ is due to the Janson's decomposition~\cite{janson2004large}, and $\circled{2}$ is due to the sub-additivity of the supremum function (i.e., $\sup(a + b) \leq \sup(a) + \sup(b)$) and the linearity of the expectation.
    
    By defining $g(\tS) = \sum_{k=1}^K g_k(\tS_k)$ with each $g_k: \tS_k \mapsto \sum_{j \in [J_k]} \omega_{kj} g_{kj} (I_{kj})$ where each
    \begin{align*}
        g_{kj}: I_{kj} \mapsto \frac{1}{K m_k} \eE_{\tS_k^{\prime}} \left[ \sup_{\tf \in \wtF} \left( \sum_{i \in I_{kj}} \left( L(\tilde{\vx}_{ki}^{\prime}, \ty_{ki}^{\prime}, \tilde{f}_k) - L(\tilde{\vx}_{ki}, \ty_{ki}, \tilde{f}_k) \right) \right) \right]
    \end{align*}
    have differences bounded by $\frac{M}{K m_k}$ in the sense of the condition of Theorem~\ref{thm:new_mcdiarmid}; then for any $\delta \in (0,1)$, with probability at least $1 - \delta$, we have
    \begin{align*}
        \sup_{\tf \in \wtF} & \left(R(\tf) - \widehat{R}_{\tS} (\tf) \right) \\
        & \leq \underbrace{\frac{1}{K} \sum_{k=1}^K \sum_{j \in [J_k]} \frac{\omega_{kj}}{m_k} \eE_{\tS_k,\tS_k^{\prime}} \left[ \sup_{\tf \in \wtF} \left( \sum_{i \in I_{kj}} \left( L(\tilde{\vx}_{ki}^{\prime}, \ty_{ki}^{\prime}, \tilde{f}_k) - L(\tilde{\vx}_{ki}, \ty_{ki}, \tilde{f}_k) \right) \right) \right]}_{\overset{def}{=}\bigstar} + M \sqrt{\left( \frac{1}{K} \sum_{k=1}^K \frac{\chi_{f}(G_k)}{m_k} \right) \log\left( \frac{1}{\delta} \right)} .
    \end{align*}
    
    \textbf{Step 2: bound $\bigstar$ with respect to the fractional Rademacher complexity.}
    
    Next, taking the symmetrization technique by introduction of Rademacher variables, we have 
    \begin{align*}
        \bigstar & = \frac{1}{K} \sum_{k=1}^K \sum_{j \in [J_k]} \frac{\omega_{kj}}{m_k} \eE_{\tS_k,\tS_k^{\prime}} \left[ \sup_{\tf \in \wtF} \left( \sum_{i \in I_{kj}} \left( L(\tilde{\vx}_{ki}^{\prime}, \ty_{ki}^{\prime}, \tilde{f}_k) - L(\tilde{\vx}_{ki}, \ty_{ki}, \tilde{f}_k) \right) \right) \right] \\
        & = \frac{1}{K} \sum_{k=1}^K \sum_{j \in [J_k]} \frac{\omega_{kj}}{m_k} \eE_{\tS_k,\tS_k^{\prime}} \eE_{\boldsymbol{\sigma}} \left[ \sup_{\tf \in \wtF} \left( \sum_{i \in I_{kj}} \sigma_{ki} \left( L(\tilde{\vx}_{ki}^{\prime}, \ty_{ki}^{\prime}, \tilde{f}_k) - L(\tilde{\vx}_{ki}, \ty_{ki}, \tilde{f}_k) \right) \right) \right] \\
        & \overset{\circled{3}}{\leq} \frac{2}{K} \sum_{k=1}^K \eE_{\tS_k} \eE_{\boldsymbol{\sigma}} \left[ \frac{1}{m_k} \sum_{j \in [J_k]} \omega_{kj} \sup_{\tf \in \wtF} \left( \sum_{i \in I_{kj}} \sigma_{ki} L(\tilde{\vx}_{ki}, \ty_{ki}, \tilde{f}_k) \right) \right] \\
        & = 2 \mathfrak{R}_m^*(L \circ \wtF) .
    \end{align*}
    For a fixed pair $(k,i)$, $\sigma_{ki} = 1$ does not change anything but $\sigma_{ki} = -1$ consists in swapping both examples $(\tilde{\vx}_{ki}^{\prime}, \ty_{ki}^{\prime})$ and $(\tilde{\vx}_{ki}, \ty_{ki})$. Thus, when taking the expectations over $\tS_k$ and $\tS_k^{\prime}$, the introduction of Rademacher variables does not change the value.
    For $\circled{3}$, it is due to the sub-additivity of the supremum function and the linearity of the expectation.

    Thus, we can obtain 
    \begin{align*}
        \sup_{\tf \in \wtF} \left(R(\tf) - \widehat{R}_{\tS} (\tf) \right) \leq 2 \mathfrak{R}_m^*(L \circ \wtF) + M \sqrt{\left( \frac{1}{K} \sum_{k=1}^K \frac{\chi_{f}(G_k)}{m_k} \right) \log\left( \frac{1}{\delta} \right)} .
    \end{align*}
    Besides, based on the definition of supremum of functions, we have
    \begin{align*}
        \forall \tf \in \wtF, \ R(\tf) - \widehat{R}_{\tS} (\tf) \leq \sup_{\tf \in \wtF} \left(R(\tf) - \widehat{R}_{\tS} (\tf) \right). 
    \end{align*}
    Then, we can obtain the desired first bound~\eqref{eq:general_bound1}.
    
    \textbf{Step 3: bound the fractional Rademacher complexity with the empirical one.}
    
    By defining $g(\tS) = \sum_{k=1}^K g_k(\tS_k)$ with each $g_k: \tS_k \mapsto \sum_{j \in [J_k]} \omega_{kj} g_{kj} (I_{kj})$ where each
    \begin{align*}
        g_{kj}: I_{kj} \mapsto \frac{1}{K m_k} \eE_{\boldsymbol{\sigma}} \left[ \sum_{j \in [J_k]} \omega_{kj} \sup_{\tf \in \wtF} \left( \sum_{i \in I_{kj}} \sigma_{ki} L(\tilde{\vx}_{ki}, \ty_{ki}, \tilde{f}_k) \right) \right]
    \end{align*}
    having differences bounded by $\frac{M}{K m_k}$ in the sense of the condition of Theorem~\ref{thm:new_mcdiarmid}; then for any $\delta \in (0,1)$, with probability at least $1 - \delta$, we have
    \begin{align*}
        \mathfrak{R}_m^*(L \circ \wtF) \leq \widehat{\mathfrak{R}}_{\tS}^*(L \circ \wtF) + M \sqrt{\left( \frac{1}{K} \sum_{k=1}^K \frac{\chi_{f}(G_k)}{m_k} \right) \log\left( \frac{1}{\delta} \right)}.
    \end{align*}
    Then, we can get the desired second bound~\eqref{eq:general_bound2} by using the union bound with the first bound~\eqref{eq:general_bound1}.
\end{proof}

\section{Macro-AUC Maximization in MLC}
\label{sec:app_macro_auc_mlc}

\subsection{Proof of Theorem~\ref{thm:base_theorem_macroauc}}

\subsubsection{Problem transformation}
\label{sec:app_problem_transformation}

For the Macro-AUC maximization problem in multi-label learning, we can transform it into the problem of learning multiple tasks with graph-dependent examples which is considered in Section~\ref{sec:app_problem_setting}.

Specifically, construct the training dataset $\tS$ based on the original training set $S$ as follows. 
For each label $k \in [K]$, based on the original dataset $S_k$, construct the dataset $\tS_k = \{ (\tvx_{ki}, \ty_{ki}) \}_{i=1}^{m_k}$, where $\tvx_{ki} = (\tvx_{ki}^+, \tvx_{ki}^-)$, $\ty_{ki} = 1$,  and $(\tvx_{ki}^+, \tvx_{ki}^-) \in S_k^+ \times S_k^-$, $m_k = |S_k^+| |S_k^-| = n^2 \tau_k (1 - \tau_k)$ and let $\{ (I_{kj}, \omega_{kj})\}_{j \in [J_k]}$ be a fractional independent vertex cover of the dependence graph $G_k$ constructed over $\tS_k$ with $\sum_{j \in [J_k]} \omega_{kj} = \chi_{f} (G_k)$, where $\chi_{f} (G_k)$ is the fractional chromatic number of $G_k$. From previous results in bipartite ranking~\cite{usunier2005generalization,amini2015learning}, we know that 
\begin{align*}
    \forall k \in [K], \ \chi_{f} (G_k) = \max\{ |S_k^+|, |S_k^-|\} = (1 - \tau_k) n. 
\end{align*}
Besides, $\tf_k(\tvx_i) = f_k(\vx_i^+) - f_k(\vx_i^-)$ for each label $k \in [K]$.

\subsubsection{Proof of Theorem~\ref{thm:base_theorem_macroauc}}

\BaseTheoremMacroAUC*
\begin{proof}
    Based on the problem transformation in Section~\ref{sec:app_problem_transformation}, we can straightforwardly get this theorem by applying Theorem~\ref{thm:general_bound_multiple_tasks_dependent}.
\end{proof}


\subsection{Proof of Lemma~\ref{lem:relationship_losses} and~\ref{lem:relationship_risks}}

\subsubsection{Proof of Lemma~\ref{lem:relationship_losses}}

\RelationshipLosses*
\begin{proof}
    For the first inequality, the following holds:
    \begin{align*}
        L_{0/1}(\vx^+, \vx^-, f_k) = [\![ f_k(\vx^+) \leq f_k(\vx^-) ]\!]
        \leq \ell(f_k(\vx_p) - f_k(\vx_q))
        = L_{pa}(\vx^+, \vx^-, f_k) .
    \end{align*}
    For the second inequality, the following holds:
    \begin{align*}
        L_{0/1}(\vx^+, \vx^-, f_k) & = [\![ f_k(\vx^+) \leq f_k(\vx^-) ]\!] \\
        & \leq [\![ \sgn(f_k(\vx^+)) \leq \sgn(f_k(\vx^-)) ]\!] \\
        & = [\![ \sgn(f_k(\vx^+)) \neq +1 ]\!] + [\![ \sgn(f_k(\vx^-)) \neq -1 ]\!] - [\![ \sgn(f_k(\vx^+)) \neq +1 ]\!] [\![ \sgn(f_k(\vx^-)) \neq -1 ]\!] \\
        & \leq [\![ \sgn(f_k(\vx^+)) \neq +1 ]\!] + [\![ \sgn(f_k(\vx^-)) \neq -1 ]\!] \\
        & \leq \ell ( f_k(\vx^+)) +  \ell ( -f_k(\vx^-)) \\
        & = L_{u_2}(\vx^+, \vx^-, f_k) \\
        & = \frac{n}{\min\{|S_k^+|, |S_k^-|\}} \left( \frac{\min\{|S_k^+|, |S_k^-|\}}{n} \ell ( f_k(\vx^+)) + \frac{\min\{|S_k^+|, |S_k^-|\}}{n} \ell ( -f_k(\vx^-))  \right) \\
        & \leq \frac{1}{\tau_k} \left( \frac{|S_k^+|}{n} \ell ( f_k(\vx^+)) + \frac{|S_k^-|}{n} \ell ( -f_k(\vx^-))  \right) \\
        & = \frac{1}{\tau_k} L_{u_1}(\vx^+, \vx^-, f_k) \\
        & \leq \frac{1}{\tau_k} \left( \frac{\max\{|S_k^+|, |S_k^-|\}}{n} \ell ( f_k(\vx^+)) + \frac{\max\{|S_k^+|, |S_k^-|\}}{n} \ell ( -f_k(\vx^-))  \right) \\
        & \leq \frac{\max\{|S_k^+|, |S_k^-|\}}{\min\{|S_k^+|, |S_k^-|\}} \left( \ell ( f_k(\vx^+)) +  \ell ( -f_k(\vx^-)) \right) \\
        & = \frac{1 - \tau_k}{\tau_k} L_{u_2}(\vx^+, \vx^-, f_k) .
    \end{align*}
    Thus, the inequalities hold.
\end{proof}

\subsubsection{Proof of Lemma~\ref{lem:relationship_risks}}

\RelationshipRisks*
\begin{proof}
    For the first inequality, the following holds:
    \begin{align*}
        R_{0/1}(f) & = \frac{1}{K} \sum_{k=1}^K \eE_{\vx_p \sim P_k^+, \vx_q \sim P_k^-} [\![ f_k(\vx_p) \leq f_k(\vx_q) ]\!] \\
        & = \frac{1}{K} \sum_{k=1}^K \eE_{\vx_p \sim P_k^+, \vx_q \sim P_k^-} \left[ L_{0/1}(\vx_p, \vx_q, f_k) \right] \\
        & \leq \frac{1}{K} \sum_{k=1}^K \eE_{\vx_p \sim P_k^+, \vx_q \sim P_k^-} \left[ L_{pa}(\vx_p, \vx_q, f_k) \right] \\
        & = \frac{1}{K} \sum_{k=1}^K \eE_{\vx_p \sim P_k^+, \vx_q \sim P_k^-} \left[ \ell(f_k(\vx_p) - f_k(\vx_q)) \right] \\
        & = \eE_{S} \left[ \widehat{R}_{S}^{pa}(f) \right] \\
        & = R_{pa}(f)
    \end{align*}
    
    For the second inequality, we first have the following
    \begin{align*}
        R_{u_1} (f) & = \eE_{S} \left[ \widehat{R}_{S}^{u_1}(f) \right] \\
        & = \eE_{S} \left[ \frac{1}{K} \sum_{k=1}^K \frac{1}{|S_k^+| |S_k^-|} \sum_{(p, q) \in S_k^+ \times S_k^-} L_{u_1} (\vx_p, \vx_q, f_k) \right] \\
        & = \eE_{S} \left[ \frac{1}{K} \sum_{k=1}^K \frac{1}{|S_k^+| |S_k^-|} \sum_{(p, q) \in S_k^+ \times S_k^-} \left (\frac{|S_k^+|}{n} \ell ( f_k(\vx_p)) + \frac{|S_k^-|}{n} \ell ( -f_k(\vx_q)) \right) \right] \\
        & \geq \eE_{S} \left[ \frac{1}{K} \sum_{k=1}^K \frac{\tau_k}{|S_k^+| |S_k^-| } \sum_{(p, q) \in S_k^+ \times S_k^-} \left (\ell ( f_k(\vx_p)) + \ell ( -f_k(\vx_q)) \right) \right] .
    \end{align*} 
    Then, we can get
    \begin{align*}
        R_{0/1}(f) & = \frac{1}{K} \sum_{k=1}^K \eE_{\vx_p \sim P_k^+, \vx_q \sim P_k^-} \left[ L_{0/1}(\vx_p, \vx_q, f_k) \right] \\
        & \leq \frac{1}{K} \sum_{k=1}^K \eE_{\vx_p \sim P_k^+, \vx_q \sim P_k^-} \left[ L_{u_2}(\vx_p, \vx_q, f_k) \right] \\
        & = \frac{1}{K} \sum_{k=1}^K \eE_{\vx_p \sim P_k^+, \vx_q \sim P_k^-} \left[ \ell (f_k(\vx_p)) + \ell (-f_k(\vx_q)) \right] \\
        & = \eE_{S} \left[ \frac{1}{K} \sum_{k=1}^K \frac{1}{|S_k^+| |S_k^-|} \sum_{(p, q) \in S_k^+ \times S_k^-} \left( \ell (f_k(\vx_p)) + \ell (-f_k(\vx_q)) \right) \right] \\
        & = \eE_{S} \left[ \widehat{R}_{S}^{u_2}(f) \right] \\
        & = R_{u_2}(f) \\
        & \leq \eE_{S} \left[ \frac{1}{K} \sum_{k=1}^K \frac{1}{\tau_k |S_k^+| |S_k^-|} \sum_{(p, q) \in S_k^+ \times S_k^-} \left (\frac{|S_k^+|}{n} \ell ( f_k(\vx_p)) + \frac{|S_k^-|}{n} \ell ( -f_k(\vx_q)) \right) \right] \\
        & \leq \eE_{S} \left[ \frac{1}{K \tau_S^*} \sum_{k=1}^K \frac{1}{|S_k^+| |S_k^-|} \sum_{(p, q) \in S_k^+ \times S_k^-} \left (\frac{|S_k^+|}{n} \ell ( f_k(\vx_p)) + \frac{|S_k^-|}{n} \ell ( -f_k(\vx_q)) \right) \right] \\
        & = \eE_{S} \left[ \frac{1}{\tau_S^*} \widehat{R}_{S}^{u_1}(f) \right] \\
        & \leq \eE_{S} \left[ \frac{1}{K \tau_S^*} \sum_{k=1}^K \frac{\max\{S_k^+, S_k^-\}}{n |S_k^+| |S_k^-|} \sum_{(p, q) \in S_k^+ \times S_k^-} \left ( \ell ( f_k(\vx_p)) + \ell ( -f_k(\vx_q)) \right) \right] \\
        & \leq \eE_{S} \left[ \frac{1}{K \tau_S^*} \sum_{k=1}^K \frac{1 - \tau_k}{|S_k^+| |S_k^-|} \sum_{(p, q) \in S_k^+ \times S_k^-} \left ( \ell ( f_k(\vx_p)) + \ell ( -f_k(\vx_q)) \right) \right] \\
        & \leq \eE_{S} \left[ \frac{1 - \tau_S^*}{K \tau_S^*} \sum_{k=1}^K \frac{1 - \tau_k}{|S_k^+| |S_k^-|} \sum_{(p, q) \in S_k^+ \times S_k^-} \left ( \ell ( f_k(\vx_p)) + \ell ( -f_k(\vx_q)) \right) \right] \\
        & = \eE_{S} \left[ \frac{1 - \tau_S^*}{\tau_S^*} \widehat{R}_{S}^{u_2}(f) \right] .
    \end{align*}
    Thus, the second inequality holds.
\end{proof}

\subsection{Proof of Theorem~\ref{thm:learning_guarantee_pa}, Corollary~\ref{cor:learning_guarantee_pa_balanced} and~\ref{cor:learning_guarantee_pa_extreme_imbalanced}}

\subsubsection{The fractional Rademacher complexity of the hypothesis space}

\begin{definition}[\textbf{The fractional Rademacher complexity of the hypothesis space for $L_{pa}$}]
    Given a dataset $S$ (and its corresponding constructed dataset $\tS$), 
    define the empirical fractional Rademacher complexity of the hypothesis space $\wtF$ w.r.t. $\tS$ as follows:
    \begin{align*}
        \widehat{\mathfrak{R}}_{\tS}^*(\wtF) & = \frac{1}{K} \sum_{k=1}^K \eE_{\boldsymbol{\sigma}} \left[ \frac{1}{m_k} \sum_{j \in [J_k]} \omega_{kj} \sup_{\tf \in \wtF} \left( \sum_{i \in I_{kj}} \sigma_{ki} \tf_k(\tvx_{ki}) \right) \right] \\
        & = \frac{1}{K} \sum_{k=1}^K \eE_{\boldsymbol{\sigma}} \left[ \frac{1}{m_k} \sum_{j \in [J_k]} \omega_{kj} \sup_{\tf \in \wtF} \left( \sum_{i \in I_{kj}} \sigma_{ki} (f_k(\tvx_{ki}^+ - f_k(\tvx_{ki}^-)) \right) \right] .
    \end{align*}
\end{definition}

\begin{lemma}[\textbf{The fractional Rademacher complexity of the kernel-based hypothesis space for $L_{pa}$}]
\label{lem:app_frac_rade_kernel_hypothesis_pairwise}
Suppose (1) and (2) in {\rm Assumption~\ref{assump_common}} hold. Then, for the kernel-based hypothesis space~\eqref{eq:kernel_hypothesis}, its empirical fractional Rademacher complexity w.r.t. the dataset $\tS$, can be bounded as bellow:
    \begin{align*}
        \widehat{\mathfrak{R}}_{\tS}^*(\wtF) \leq \frac{2 B r}{\sqrt{n}} \left( \frac{1}{K} \sum_{k=1}^K \sqrt{\frac{1}{\tau_k}} \right)
    \end{align*}

\end{lemma}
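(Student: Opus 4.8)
The plan is to reduce the computation, one label at a time, to the textbook bound on the empirical Rademacher complexity of a ball in an RKHS, exploiting both the linear form of the hypotheses in $\wtF$ and the fact that each index set $I_{kj}$ appearing in the fractional cover is an \emph{independent} set of the conflict graph $G_k$, hence a partial matching of $S_k^+$ against $S_k^-$ of size at most $\min\{|S_k^+|,|S_k^-|\}=n\tau_k$.

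First I would fix $k\in[K]$ and $j\in[J_k]$. Every relevant term has the form $\tf_k(\tvx_{ki})=f_k(\tvx_{ki}^+)-f_k(\tvx_{ki}^-)=\langle\mathbf{w}_k,\Phi(\tvx_{ki}^+)-\Phi(\tvx_{ki}^-)\rangle$ with $\|\mathbf{w}_k\|\le\Lambda$, so, since the quantity inside the supremum depends on $\tf$ only through $\mathbf{w}_k$, linearity of the inner product gives
$$\sup_{\tf\in\wtF}\sum_{i\in I_{kj}}\sigma_{ki}\,\tf_k(\tvx_{ki})=\sup_{\|\mathbf{w}_k\|\le\Lambda}\Big\langle\mathbf{w}_k,\sum_{i\in I_{kj}}\sigma_{ki}\big(\Phi(\tvx_{ki}^+)-\Phi(\tvx_{ki}^-)\big)\Big\rangle=\Lambda\,\Big\|\sum_{i\in I_{kj}}\sigma_{ki}\big(\Phi(\tvx_{ki}^+)-\Phi(\tvx_{ki}^-)\big)\Big\|.$$
Then I would take $\eE_{\boldsymbol{\sigma}}$, push it inside the norm with Jensen's inequality, and expand the square; because the $\sigma_{ki}$ are independent and mean zero, all cross terms vanish, leaving
$$\eE_{\boldsymbol{\sigma}}\Big\|\sum_{i\in I_{kj}}\sigma_{ki}\big(\Phi(\tvx_{ki}^+)-\Phi(\tvx_{ki}^-)\big)\Big\|\le\sqrt{\sum_{i\in I_{kj}}\big\|\Phi(\tvx_{ki}^+)-\Phi(\tvx_{ki}^-)\big\|^2}\le 2r\sqrt{|I_{kj}|},$$
using $\|\Phi(\vx)\|^2=\kappa(\vx,\vx)\le r^2$ from Assumption~\ref{assump_common}. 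Hence $\eE_{\boldsymbol{\sigma}}\sup_{\tf\in\wtF}\sum_{i\in I_{kj}}\sigma_{ki}\tf_k(\tvx_{ki})\le 2r\Lambda\sqrt{|I_{kj}|}$.

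Second, I would plug this into the definition of $\widehat{\mathfrak{R}}_{\tS}^*(\wtF)$ and use the structural facts recalled in Section~\ref{sec:app_problem_transformation}: each $I_{kj}$ is a partial matching, so $|I_{kj}|\le\min\{|S_k^+|,|S_k^-|\}=n\tau_k$; $\{(I_{kj},\omega_{kj})\}_{j\in[J_k]}$ is a minimal fractional independent vertex cover, so $\sum_{j\in[J_k]}\omega_{kj}=\chi_f(G_k)=\max\{|S_k^+|,|S_k^-|\}=(1-\tau_k)n$; and $m_k=|S_k^+||S_k^-|=n^2\tau_k(1-\tau_k)$. Therefore
$$\frac{1}{m_k}\sum_{j\in[J_k]}\omega_{kj}\,\eE_{\boldsymbol{\sigma}}\sup_{\tf\in\wtF}\sum_{i\in I_{kj}}\sigma_{ki}\tf_k(\tvx_{ki})\le\frac{2r\Lambda\sqrt{n\tau_k}}{n^2\tau_k(1-\tau_k)}\,(1-\tau_k)n=\frac{2r\Lambda}{\sqrt{n\tau_k}}=\frac{2r\Lambda}{\sqrt{n}}\sqrt{\frac{1}{\tau_k}},$$
and averaging over $k\in[K]$ yields $\widehat{\mathfrak{R}}_{\tS}^*(\wtF)\le\frac{2r\Lambda}{\sqrt{n}}\big(\frac{1}{K}\sum_{k=1}^K\sqrt{1/\tau_k}\big)$, which is the stated bound.

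I do not anticipate a genuine obstacle here: the argument is just the standard RKHS-ball Rademacher estimate wrapped around the fractional-cover bookkeeping. The two places to stay careful are (i) invoking the correct combinatorial facts about $G_k$ — that its independent sets are partial matchings and that $\chi_f(G_k)=\max\{|S_k^+|,|S_k^-|\}$ while $m_k=|S_k^+||S_k^-|$, all already established in the problem transformation — and (ii) noting that the cross-term cancellation in the second display requires only that the Rademacher variables be independent and centered, so no appeal to the examples' dependency structure is needed at this step, that structure having been absorbed entirely into $\chi_f(G_k)$.
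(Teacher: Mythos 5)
Your proof is correct and follows essentially the same route as the paper's: reduce each block $I_{kj}$ to the standard RKHS-ball Rademacher estimate (dual norm of the ball, Jensen, cross-term cancellation from independent centered $\sigma_{ki}$, $\|\Phi(\vx^+)-\Phi(\vx^-)\|\le 2r$), then aggregate over the fractional cover. The one step where you genuinely diverge is the aggregation of $\sum_{j}\omega_{kj}\sqrt{|I_{kj}|}$: the paper applies Jensen's inequality to the concave square root with weights $\omega_{kj}/\chi_f(G_k)$, obtaining $\sum_j\omega_{kj}\sqrt{|I_{kj}|}\le\sqrt{\chi_f(G_k)\,m_k}$ without ever needing the sizes of the individual independent sets, whereas you invoke the combinatorial fact that every independent set of $G_k$ is a partial matching of $S_k^+$ against $S_k^-$ and hence $|I_{kj}|\le n\tau_k$. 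Both give exactly $\sqrt{\chi_f(G_k)/m_k}=\sqrt{1/(n\tau_k)}$ here, since for the bipartite dependency graph the optimal cover consists of matchings of size $\min\{|S_k^+|,|S_k^-|\}$; the paper's Jensen step is the more portable argument (it works for an arbitrary dependency graph, which is why the same manipulation reappears in their Lemma~\ref{lem:app_frac_rade_kernel_hypothesis_univariate_general}), while yours is more transparent but tied to the specific structure of $G_k$. One small point in your favor: you write the radius of the weight ball as $\Lambda$, consistent with Eq.~\eqref{eq:kernel_hypothesis} and with the $4\rho r\Lambda$ constant in Theorem~\ref{thm:learning_guarantee_pa}, whereas the lemma statement and the paper's proof write $B$ (which elsewhere denotes the bound on the base loss); your notation is the intended one.
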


\begin{proof}
    By the definition of $\widehat{\mathfrak{R}}_{\tS}^*(\wtF)$, we have
    \begin{align*}
        \widehat{\mathfrak{R}}_{\tS}^*(\wtF) & = \frac{1}{K} \sum_{k=1}^K \eE_{\boldsymbol{\sigma}} \left[ \frac{1}{m_k} \sum_{j \in [J_k]} \omega_{kj} \sup_{\tf \in \wtF} \left( \sum_{i \in I_{kj}} \sigma_{ki} \tf_k(\tvx_{ki}) \right) \right]  \\
        & = \frac{1}{K} \sum_{k=1}^K \eE_{\boldsymbol{\sigma}} \left[ \frac{1}{m_k} \sum_{j \in [J_k]} \omega_{kj} \sup_{\| \vw_k \| \leq B} \left( \sum_{i \in I_{kj}} \sigma_{ki} \langle \vw_{k}, \Phi(\tvx_{ki}) \rangle \right) \right]  \\
        & \leq \frac{1}{K} \sum_{k=1}^K \eE_{\boldsymbol{\sigma}} \left[ \frac{1}{m_k} \sum_{j \in [J_k]} \omega_{kj} \| \sup_{\| \vw_k \| \leq B} \| \vw_{k} \| \left \| \sum_{i \in I_{kj}} \sigma_{ki} \Phi(\tvx_{ki}) \right \|  \right] \qquad (\text{Cauchy–Schwarz inequality}) \\
        & = \frac{1}{K} \sum_{k=1}^K \eE_{\boldsymbol{\sigma}} \left[ \frac{B}{m_k} \sum_{j \in [J_k]} \omega_{kj}  \left \| \sum_{i \in I_{kj}} \sigma_{ki} \Phi(\tvx_{ki}) \right \|  \right] \qquad (\text{the definition of sup}) \\
        & = \frac{1}{K} \sum_{k=1}^K \frac{B}{m_k} \sum_{j \in [J_k]} \omega_{kj} \eE_{\boldsymbol{\sigma}} \left[ \left \| \sum_{i \in I_{kj}} \sigma_{ki} \Phi(\tvx_{ki}) \right \|  \right] \qquad (\text{linearity of expectation}) \\
        & \leq \frac{1}{K} \sum_{k=1}^K \frac{B}{m_k} \sum_{j \in [J_k]} \omega_{kj}  \left ( \eE_{\boldsymbol{\sigma}} \left[ \left \| \sum_{i \in I_{kj}} \sigma_{ki} \Phi(\tvx_{ki}) \right \|^2 \right] \right )^{\frac{1}{2}} \qquad (\text{Jensen's inequality}) \\
        & = \frac{1}{K} \sum_{k=1}^K \frac{B}{m_k} \sum_{j \in [J_k]} \omega_{kj}  \left ( \eE_{\boldsymbol{\sigma}} \left[ \sum_{p \in I_{kj}, q \in I_{kj}} \sigma_{kp} \sigma_{kq} \left \langle \Phi(\tvx_{kp}), \Phi(\tvx_{kq}) \right \rangle \right] \right )^{\frac{1}{2}} \qquad  \\
        & = \frac{1}{K} \sum_{k=1}^K \frac{B}{m_k} \sum_{j \in [J_k]} \omega_{kj}  \left ( \sum_{i \in I_{kj}} \left \langle \Phi(\tvx_{ki}), \Phi(\tvx_{ki}) \right \rangle \right )^{\frac{1}{2}} \qquad (\forall p \neq q, \eE[\sigma_{kp}\sigma_{kq}] = 0 \text{~and~} \eE[\sigma_{ki}\sigma_{ki}] = 1) \\
        & = \frac{1}{K} \sum_{k=1}^K \frac{B}{m_k} \sum_{j \in [J_k]} \omega_{kj}  \left ( \sum_{i \in I_{kj}} \left \langle \Phi(\vx_{ki}^+) - \Phi(\vx_{ki}^-), \Phi(\vx_{ki}^+) - \Phi(\vx_{ki}^-) \right \rangle \right )^{\frac{1}{2}}  \qquad  (\Phi(\tvx) = \Phi(\vx^+) - \Phi(\vx^-)) \\
        & \leq \frac{1}{K} \sum_{k=1}^K \frac{B}{m_k} \sum_{j \in [J_k]} \omega_{kj}  \left ( \sum_{i \in I_{kj}} \left( \left \langle \Phi(\vx_{ki}^+), \Phi(\vx_{ki}^+) \right \rangle + \left \langle \Phi(\vx_{ki}^-), \Phi(\vx_{ki}^-) \right \rangle \right) \right )^{\frac{1}{2}}  \qquad (\| a + b \|^2 \leq 2 \| a \|^2 + 2 \| b \|^2) \\
        & = \frac{1}{K} \sum_{k=1}^K \frac{B}{m_k} \sum_{j \in [J_k]} \omega_{kj}  \left ( \sum_{i \in I_{kj}} 2 \left( \kappa(\vx_{ki}^+, \vx_{ki}^+) + \kappa(\vx_{ki}^-, \vx_{ki}^-) \right) \right )^{\frac{1}{2}} \qquad (\kappa (\vx, \vx) = \left \langle \Phi(\vx), \Phi(\vx) \right \rangle)   \\
        & \leq \frac{1}{K} \sum_{k=1}^K \frac{2 B r}{m_k} \sum_{j \in [J_k]} \omega_{kj} \sqrt{m_{kj}} \qquad (\kappa (\vx, \vx) \leq r^2 \text{~and let~} |I_{kj}| = m_{kj}) \\
        & = \frac{1}{K} \sum_{k=1}^K \frac{2 B r \chi_{f}(G_k)}{m_k} \sum_{j \in [J_k]} \frac{\omega_{kj}}{\chi_{f}(G_k)} \sqrt{m_{kj}} \qquad  \\
        & \leq \frac{1}{K} \sum_{k=1}^K \frac{2 B r \sqrt{\chi_{f}(G_k)}}{m_k} \sqrt{\sum_{j \in [J_k]} \omega_{kj} m_{kj}} \qquad  (\sum_{j \in [J_k]} \frac{\omega_{kj}}{\chi_{f}(G_k)} = 1 \text{~and Jensen's inequality}) \\
        & = \frac{1}{K} \sum_{k=1}^K 2 B r \sqrt{\frac{\chi_{f}(G_k)}{m_k}}  \qquad (\sum_{j \in [J_k]} \omega_{kj} m_{kj} = m_k)
    \end{align*}
    Since for Macro-AUC optimization in multi-label learning, $\chi_{f}(G_k) = \max\{|S_k^+|, |S_k^-|\}$, $m_k = |S_k^+| |S_k^-|$ and $\tau_k = \min\{|S_k^+|, |S_k^-|\}$ hold, then we can get 
    \begin{align*}
        \frac{1}{K} \sum_{k=1}^K 2 B r \sqrt{\frac{\chi_{f}(G_k)}{m_k}} = \frac{2 B r}{\sqrt{n}} \left( \frac{1}{K} \sum_{k=1}^K \sqrt{\frac{1}{\tau_k}} \right) .
    \end{align*}
    Thus, we can get the desired result by combining the above equality and previous inequality.
\end{proof}

\subsubsection{The contraction inequality}

\begin{lemma}[\textbf{Contraction inequality for $L_{pa}$}]
\label{lem:app_contraction_pairwise}
    Assume the loss function 
    $L_{\phi} = L_1(y, \tf_k(\tvx))$
    is $\mu$-Lipschitz continuous w.r.t. the second argument where $L_1$ denotes the loss function of two inputs for convenience. Then, the following holds:
    \begin{align*}
        \widehat{\mathfrak{R}}_{\tS}^*(L_{\phi} \circ \mathcal{F}) \leq \mu \widehat{\mathfrak{R}}^*(\wtF) .
    \end{align*}
\end{lemma}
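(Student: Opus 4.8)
The plan is to reduce the claim to the classical one-sided contraction (Ledoux--Talagrand / Talagrand's lemma, see e.g.\ \cite{mohri2018foundations}) applied separately inside each independent-set block $I_{kj}$ of the dependency graph, and then to reassemble the blockwise inequalities using the nonnegativity of the weights $\omega_{kj}$, $1/m_k$ and $1/K$. Write $\psi(t) = L_1(1,t)$; since in the constructed dataset $\tS$ every label equals $\ty_{ki}=1$, the first argument of $L_1$ is frozen, and the hypothesis that $L_\phi$ is $\mu$-Lipschitz in its second argument says exactly that $\psi:\R\to\R$ is $\mu$-Lipschitz. By definition,
\begin{align*}
    \widehat{\mathfrak{R}}_{\tS}^*(L_{\phi} \circ \mathcal{F}) = \frac{1}{K} \sum_{k=1}^K \frac{1}{m_k} \sum_{j \in [J_k]} \omega_{kj}\, \eE_{\boldsymbol{\sigma}}\!\left[ \sup_{\tf \in \wtF} \sum_{i \in I_{kj}} \sigma_{ki}\, \psi\!\left(\tf_k(\tvx_{ki})\right) \right].
\end{align*}
Only the coordinate $\tf_k$ of $\tf$ appears in the inner expression, so the supremum is effectively over the single-task class $\wtF_k$, and $\{\sigma_{ki}\}_{i\in I_{kj}}$ are i.i.d.\ Rademacher variables; thus each inner term is a standard single-task empirical Rademacher average composed with the scalar Lipschitz map $\psi$.

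First I would fix a pair $(k,j)$ and apply the contraction lemma to that inner term, peeling off the indices $i\in I_{kj}$ one at a time: for any $\mu$-Lipschitz $\psi$,
\begin{align*}
    \eE_{\boldsymbol{\sigma}}\!\left[ \sup_{\tf \in \wtF} \sum_{i \in I_{kj}} \sigma_{ki}\, \psi\!\left(\tf_k(\tvx_{ki})\right) \right] \leq \mu\, \eE_{\boldsymbol{\sigma}}\!\left[ \sup_{\tf \in \wtF} \sum_{i \in I_{kj}} \sigma_{ki}\, \tf_k(\tvx_{ki}) \right].
\end{align*}
I would then multiply this by $\omega_{kj}/m_k \ge 0$, sum over $j\in[J_k]$, and average $\tfrac1K\sum_{k=1}^K$. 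Because $\widehat{\mathfrak{R}}_{\tS}^*(\wtF)$ (the fractional Rademacher complexity of the hypothesis space for $L_{pa}$ defined just above) carries exactly the same weights $\omega_{kj}/(K m_k)$ in front of $\eE_{\boldsymbol\sigma}\sup_{\tf}\sum_{i\in I_{kj}}\sigma_{ki}\tf_k(\tvx_{ki})$, the right-hand side collapses to $\mu\,\widehat{\mathfrak{R}}_{\tS}^*(\wtF)$, which is the claim (up to the harmless subscript abuse between $\widehat{\mathfrak{R}}^*$ and $\widehat{\mathfrak{R}}_{\tS}^*$ in the statement).

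The only delicate point, and it is a matter of invoking the right version rather than a genuine difficulty, is that the definition of $\widehat{\mathfrak{R}}_{\tS}^*$ uses a \emph{signed} supremum (no absolute value around $\sum_{i\in I_{kj}}\sigma_{ki}(\cdot)$); this is precisely the form for which the contraction holds with no extra requirement such as $\psi(0)=0$, the symmetrization over $\sigma_{ki}$ absorbing the constant. One also only uses that the $\sigma_{ki}$ are independent \emph{within} each block $I_{kj}$, which holds since the whole vector $\boldsymbol\sigma$ is i.i.d.\ If one insisted on treating $L_1$ as a true two-argument loss, one would instead use the coordinatewise contraction with $\psi_i(t)=L_1(\ty_{ki},t)$; here all $\psi_i$ coincide because $\ty_{ki}\equiv1$, so the elementary scalar statement suffices. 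No further estimates are needed.
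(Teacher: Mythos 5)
Your proposal is correct and follows essentially the same route as the paper's proof: decompose $\widehat{\mathfrak{R}}_{\tS}^*(L_{\phi}\circ\mathcal{F})$ into the per-block terms indexed by $(k,j)$, apply the one-sided (signed-supremum) contraction within each independent set $I_{kj}$ where the $\sigma_{ki}$ are i.i.d., and recombine using the nonnegativity of the weights $\omega_{kj}/(K m_k)$. The only difference is that the paper inlines the proof of the contraction step (the $\epsilon$-supremum peeling argument following Lemma~5.7 of \citet{mohri2018foundations}) rather than invoking it as a black box, and your remarks on the frozen label $\ty_{ki}\equiv 1$ and on the signed form of the supremum correctly address the points where a careless invocation could go wrong.
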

\begin{proof}
    Since $\widehat{\mathfrak{R}}_{\tS}^*(L_{\phi} \circ \mathcal{F}) = \frac{1}{K} \sum_{k=1}^K \widehat{\mathfrak{R}}_{\tS_k}^*(L_{\phi} \circ \mathcal{F}_k)$ and $\widehat{\mathfrak{R}}_{\tS}^*( \mathcal{F}) = \frac{1}{K} \sum_{k=1}^K \widehat{\mathfrak{R}}_{\tS_k}^*(\mathcal{F}_k)$, we first prove $\widehat{\mathfrak{R}}_{\tS_k}^*(L_{\phi} \circ \mathcal{F}_k) \leq \mu \widehat{\mathfrak{R}}_{\tS_k}^*(\mathcal{F}_k)$ and then can get the desired result.
    
    Here we prove the inequality $\widehat{\mathfrak{R}}_{\tS_k}^*(L_{\phi} \circ \mathcal{F}_k) \leq \mu \widehat{\mathfrak{R}}_{\tS_k}^*(\mathcal{F}_k)$ following the idea in~\citet{mohri2018foundations} (Lemma~5.7, p.93), and we omit the index $k$ and the symbol $\tS_k$ for notation simplicity in the following.
    
    First we fix a sample $(\tvx_1,\dots,\tvx_{m})$, then by defintion,
    \begin{align*}
        \widehat{\mathfrak{R}}^*(L_{\phi} \circ f) & =  \eE_{\boldsymbol{\sigma}} \left[ \frac{1}{m} \sum_{j \in [J]} w_{j} \sup_{\tf \in \wtF} \left( \sum_{i \in I_{j}} \sigma_{i} L_1(y_i, \tf(\tvx_i)) \right) \right] \\
        & = \frac{1}{m} \sum_{j \in [J]} w_{j} \eE_{\sigma_1,\dots,\sigma_{n_j-1}} \left[ \eE_{\sigma_{n_j}} \left[ \sup_{\tf \in \wtF} u_{n_j - 1} (\tf) + \sigma_{n_j} L_1(y_{n_j}, \tf(\tvx_{n_j}))  \right ]\right] , \qquad (\text{denote~} n_j = |I_j| \text{~for simplicity})
    \end{align*}
    where $u_{n_j- 1} (\tf) = \sum_{i=1}^{n_j} \sigma_i L_1(y_i, \tf(\tvx_i))$. By the definition of the supremum, for any $\epsilon > 0$, there exists $\tf^1, \tf^2 \in \wtF$ such that
    \begin{align*}
        u_{n_j - 1} (\tf^1) +  L_1(y_{n_j}, \tf^1(\tvx_{n_j})) \leq (1 - \epsilon) \left[ \sup_{\tf \in \wtF} u_{n_j - 1} (\tf) +  L_1(y_{n_j}, \tf(\tvx_{n_j})) \right] ,
    \end{align*}
    and 
    \begin{align*}
        u_{n_j - 1} (\tf^2) -  L_1(y_{n_j}, \tf^2(\tvx_{n_j})) \leq (1 - \epsilon) \left[ \sup_{\tf \in \wtF} u_{n_j - 1} (\tf) -  L_1(y_{n_j}, \tf(\tvx_{n_j})) \right] .
    \end{align*}
    Thus, for any $\epsilon > 0$, by definition of $\eE_{\sigma_{n_j}}$,
    \begin{align*}
        & (1 - \epsilon) \eE_{\sigma_{n_j}} \left[ \sup_{\tf \in \wtF} u_{n_j - 1} (\tf) + \sigma_{n_j} L_1(y_{n_j}, \tf(\tvx_{n_j}))  \right ] \\
        & = (1 - \epsilon) \left[ \frac{1}{2} \sup_{\tf \in \wtF} u_{n_j - 1} (\tf) +  L_1(y_{n_j}, \tf(\tvx_{n_j}))  \right ] + \left[ \frac{1}{2} \sup_{\tf \in \wtF} u_{n_j - 1} (\tf) -  L_1(y_{n_j}, \tf(\tvx_{n_j}))  \right ] \\
        & \leq \frac{1}{2} \left[ u_{n_j - 1} (\tf^1) +  L_1(y_{n_j}, \tf^1(\tvx_{n_j})) \right] + \frac{1}{2} \left[ u_{n_j - 1} (\tf^2) -  L_1(y_{n_j}, \tf^2(\tvx_{n_j})) \right].
    \end{align*}
    Let $s = \sgn(\tf^1(\tvx_{n_j}) - \tf^2(\tvx_{n_j}))$. Then, the previous inequality implies
    \begin{align*}
        & (1 - \epsilon) \eE_{\sigma_{n_j}} \left[ \sup_{\tf \in \wtF} u_{n_j - 1} (\tf) + \sigma_{n_j} L_1(y_{n_j}, \tf(\tvx_{n_j}))  \right ] &~ \\
        & \leq \frac{1}{2} \left[ u_{n_j - 1} (\tf^1) + u_{n_j - 1} (\tf^2) + s \mu (\tf^1(\tvx_{n_j}) - \tf^2(\tvx_{n_j})) \right] & (\text{Lipschitz property}) \\
        & = \frac{1}{2} \left[ u_{n_j - 1} (\tf^1) + s \mu \tf^1(\tvx_{n_j}) \right] + \frac{1}{2} \left[ u_{n_j - 1} (\tf^2) - s \mu \tf^2(\tvx_{n_j}) \right] & (\text{rearranging}) \\
        & \leq \frac{1}{2} \sup_{\tf \in \wtF} \left[ u_{n_j - 1} (\tf) + s \mu \tf(\tvx_{n_j}) \right] + \frac{1}{2} \sup_{\tf \in \wtF} \left[ u_{n_j - 1} (\tf) - s \mu \tf(\tvx_{n_j}) \right] & (\text{definition of sup}) \\
        & = \eE_{\sigma_{n_j}} \left[ u_{n_j - 1} (\tf) + \sigma_{n_j} \mu \tf(\tvx_{n_j}) \right] . & (\text{definition of~}\eE_{\sigma_{n_j}}) \\
    \end{align*}
    Since the inequality holds for any $\epsilon > 0$, we have
    \begin{align*}
        \eE_{\sigma_{n_j}} \left[ \sup_{\tf \in \wtF} u_{n_j - 1} (\tf) + \sigma_{n_j} L_1(y_{n_j}, \tf(\tvx_{n_j}))  \right ] \leq \eE_{\sigma_{n_j}} \left[ u_{n_j - 1} (\tf) + \sigma_{n_j} \mu \tf(\tvx_{n_j}) \right].
    \end{align*}
    Proceeding in the same way for all other $\sigma_i$ ($i \in [I_j], i \neq n_j$) proves that 
    \begin{align*}
         \eE_{\boldsymbol{\sigma}} \left[ \sup_{\tf \in \wtF} \left( \sum_{i \in I_{j}} \sigma_{i} L_1(y_i, \tf(\tvx_i)) \right) \right] \leq  \eE_{\boldsymbol{\sigma}} \left[ \sup_{\tf \in \wtF} \left( \sum_{i \in I_{j}} \sigma_{i} \mu \tf(\tvx_i) \right) \right]. 
    \end{align*}
    By proceeding other $j \in [J]$, we can obtain the following 
    \begin{align*}
        \widehat{\mathfrak{R}}^*(L_{\phi} \circ f) = \frac{1}{m} \sum_{j \in [J]} w_{j} \eE_{\boldsymbol{\sigma}} \left[ \sup_{\tf \in \wtF} \left( \sum_{i \in I_{j}} \sigma_{i} L_1(y_i, \tf(\tvx_i)) \right) \right] \leq \frac{1}{m} \sum_{j \in [J]} w_{j} \eE_{\boldsymbol{\sigma}} \left[ \sup_{\tf \in \wtF} \left( \sum_{i \in I_{j}} \sigma_{i} \mu \tf(\tvx_i) \right) \right] = \mu \widehat{\mathfrak{R}}^*(\tf) .
    \end{align*}
\end{proof}

\subsubsection{Proof of Theorem~\ref{thm:learning_guarantee_pa}}

\LearningGuaranteePa*
\begin{proof}
    Since the base loss $\ell(z)$ is bounded by $B$ and the loss $L_{pa}(\vx^+, \vx^-, f_k) = \ell \left( f_k(\vx^+) - f_k(\vx^-) \right)$, the loss $L_{\phi} = L_{pa}$ is bounded by $B$. Then, applying Theorem~\ref{thm:base_theorem_macroauc}, and combining Lemma~\ref{lem:app_contraction_pairwise} and Lemma~\ref{lem:app_frac_rade_kernel_hypothesis_pairwise}, we can obtain that for any $\delta > 0$, the following generalization bound holds with probability at least $1 - \delta$ over the draw of an i.i.d. sample $S$ of size $n$:
    \begin{align*}
        R_{pa} (f) \leq \widehat{R}_{pa} (f) + \frac{4 \rho r \Lambda}{\sqrt{n}} \left (\frac{1}{K} \sum_{k=1}^K \sqrt{\frac{1}{\tau_k}} \right ) + 3 B \sqrt{\frac{\log(\frac{2}{\delta})}{2n}} \left ( \sqrt{\frac{1}{K} \sum_{k=1}^K \frac{1}{\tau_k}} \right ).
    \end{align*}
    Finally, we can get the desired result by combining the above inequality and Lemma~\ref{lem:relationship_risks} (i.e., $R_{0/1} (f) \leq R_{pa} (f)$).
\end{proof}

\subsubsection{Proof of Corollary~\ref{cor:learning_guarantee_pa_balanced}}
\label{sec:app_learning_guarantee_pa_balanced}
\begin{restatable}[\textbf{Learning guarantee of $\mathcal{A}^{pa}$ in balanced case}]
    {corollary}{LearningGuaranteePaBalanced}
\label{cor:learning_guarantee_pa_balanced}
    Assume the loss $L_{\phi} = L_{pa}$, where $L_{pa}$ is defined in Eq.\eqref{eq:surrogate_pa}. Besides, {\rm Assumption~\ref{assump_common}} holds and suppose $S$ is balanced. Then, for any $\delta > 0$, with probability at least $1 - \delta$ over the draw of an i.i.d. sample $S$ of size $n$, the following generalization bound holds for any $f \in \mathcal{F}$: 
    \begin{align}
        R_{0/1} (f) \leq R_{pa} (f) \leq & \widehat{R}_{pa} (f) + \frac{4 \sqrt{2} \rho r \Lambda}{\sqrt{n}} \ + \nonumber\\
        & 3 \sqrt{2} B \sqrt{\frac{\log(\frac{2}{\delta})}{2n}} .
    \end{align}
\end{restatable}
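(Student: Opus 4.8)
The plan is to derive this corollary as a direct specialization of Theorem~\ref{thm:learning_guarantee_pa} to the balanced regime. First I would recall that, by the definition of a label-wise class balanced dataset, $\tau_k = \tfrac12$ for every $k \in [K]$. Substituting this value into the two imbalance-dependent factors occurring in the general-case bound gives $\frac{1}{K}\sum_{k=1}^K \sqrt{1/\tau_k} = \frac{1}{K}\sum_{k=1}^K \sqrt{2} = \sqrt{2}$ and, likewise, $\sqrt{\frac{1}{K}\sum_{k=1}^K 1/\tau_k} = \sqrt{2}$. Plugging these into the bound of Theorem~\ref{thm:learning_guarantee_pa} turns the complexity term $\frac{4\rho r\Lambda}{\sqrt{n}}\bigl(\frac{1}{K}\sum_k\sqrt{1/\tau_k}\bigr)$ into $\frac{4\sqrt{2}\,\rho r\Lambda}{\sqrt{n}}$ and the deviation term $3B\sqrt{\frac{\log(2/\delta)}{2n}}\bigl(\sqrt{\frac{1}{K}\sum_k 1/\tau_k}\bigr)$ into $3\sqrt{2}\,B\sqrt{\frac{\log(2/\delta)}{2n}}$, while the chain $R_{0/1}(f) \le R_{pa}(f) \le \widehat{R}_{pa}(f) + \cdots$ is inherited verbatim from Lemma~\ref{lem:relationship_risks} and Theorem~\ref{thm:learning_guarantee_pa}. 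This already yields the claimed inequality.

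As an internal consistency check on the constants, I would also re-run the two underlying ingredients directly under $\tau_k = \tfrac12$. In Lemma~\ref{lem:app_frac_rade_kernel_hypothesis_pairwise}, the problem transformation gives $\chi_f(G_k) = (1-\tau_k)n = n/2$ and $m_k = |S_k^+||S_k^-| = n^2/4$, so the per-label term $2\Lambda r\sqrt{\chi_f(G_k)/m_k}$ equals $2\Lambda r\sqrt{(n/2)/(n^2/4)} = \frac{2\sqrt{2}\,\Lambda r}{\sqrt{n}}$, whence $\widehat{\mathfrak{R}}_{\tS}^*(\wtF) \le \frac{2\sqrt{2}\,\Lambda r}{\sqrt{n}}$; composing with the contraction Lemma~\ref{lem:app_contraction_pairwise} for $L_{pa} = \ell(\tf_k(\tvx))$ (Lipschitz constant $\mu = \rho$ since $\ell$ is $\rho$-Lipschitz) and doubling as in Theorem~\ref{thm:base_theorem_macroauc} produces $2\widehat{\mathfrak{R}}_{\tS}^*(L_{pa}\circ\mathcal{F}) \le \frac{4\sqrt{2}\,\rho r\Lambda}{\sqrt{n}}$. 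For the deviation term, $\frac{1}{K}\sum_k \frac{\chi_f(G_k)}{2m_k} = \frac{n/2}{2\cdot n^2/4} = \frac{1}{n}$, and since $L_{pa}$ is bounded by $M = B$ (because $\ell$ is bounded by $B$), the last term of Theorem~\ref{thm:base_theorem_macroauc} becomes $3B\sqrt{\frac{\log(2/\delta)}{n}} = 3\sqrt{2}\,B\sqrt{\frac{\log(2/\delta)}{2n}}$, matching the stated constants exactly.

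There is no genuine conceptual obstacle here; the work is purely bookkeeping. The one place to be careful is keeping the scalar multipliers straight: the factor $\mu = \rho$ from the contraction inequality, the factor $r\Lambda$ (i.e., $\kappa(\vx,\vx)\le r^2$ together with $\|\vw_k\|\le\Lambda$) inside the Rademacher bound, the bound $M = B$ on $L_{pa}$, and the factor $2$ in front of the Rademacher complexity in the base theorem — so that the two $\sqrt{2}$ factors land on the correct constants $4\sqrt{2}\rho r\Lambda$ and $3\sqrt{2}B$ and nowhere else.
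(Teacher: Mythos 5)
Your proposal is correct and follows exactly the paper's own route: the paper proves this corollary by specializing Theorem~\ref{thm:learning_guarantee_pa} with $\tau_k = \tfrac12$, which is precisely your first paragraph, and your arithmetic for the two $\sqrt{2}$ factors checks out. The additional consistency check against Lemma~\ref{lem:app_frac_rade_kernel_hypothesis_pairwise} and Theorem~\ref{thm:base_theorem_macroauc} is not needed but is also correct.
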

\begin{proof}
    It is straightforward to get the result by applying the Theorem~\ref{thm:learning_guarantee_pa} by plugging $\tau_k = \frac{1}{2}$.
\end{proof}

\subsubsection{Proof of Corollary~\ref{cor:learning_guarantee_pa_extreme_imbalanced}}
\label{sec:app_learning_guarantee_pa_extreme_imbalanced}

\LearningGuaranteePaExtremeImbalanced*
\begin{proof}
    It is straightforward to get the result by applying the Theorem~\ref{thm:learning_guarantee_pa} by plugging $\tau_k = \frac{1}{n}$.
\end{proof}

\subsection{Proof of Theorem~\ref{thm:learning_guarantee_u1} and~\ref{thm:learning_guarantee_u2}, Corollary~\ref{cor:learning_guarantee_u_balanced},~\ref{cor:learning_guarantee_u1_extreme_imbalanced} and~\ref{cor:learning_guarantee_u2_extreme_imbalanced}}

\subsubsection{The fractional Rademacher complexity of the hypothesis space}

\begin{definition}[\textbf{The fractional Rademacher complexity of the hypothesis space w.r.t. (reweighted) univariate losses}]
    Given a dataset $S$ (and its corresponding constructed dataset $\tS$), assume the loss function $L_{\phi} = L_{u}(\vx^+, \vx^-, f_k) = a^+ (S_k) \ell(f_k(\vx^+)) + a^- (S_k) \ell(-f_k(\vx^-))$ for each label $k \in [K]$, where $\ell(t)$ is the base loss function and the reweighting function $a^+ (S_k)$ (or $a^- (S_k)$) indicates its dependency on $S_k$. Then, define the empirical fractional Rademacher complexity of the hypothesis space $\mathcal{F}$ w.r.t. $\tS$ and $L_{u}$ as follows:
    \begin{align*}
        \widehat{\mathfrak{R}}_{\tS, L_u}^*(\mathcal{F}) = \frac{1}{K} \sum_{k=1}^K \eE_{\boldsymbol{\sigma}} \left[ \frac{1}{m_k} \sum_{j \in [J_k]} \omega_{kj} \sup_{f \in \mathcal{F}} \left( \sum_{i \in I_{kj}} (\sigma_{ki}^+ a^+ (S_k) f_k(\vx_{ki}^+) + \sigma_{ki}^- a^- (S_k) f_k(\vx_{ki}^-) )  \right) \right].
    \end{align*}
\end{definition}

\begin{lemma}[\textbf{The fractional Rademacher complexity of kernel-based hypothesis space w.r.t. (reweighted) univariate losses}]
\label{lem:app_frac_rade_kernel_hypothesis_univariate_general}
Suppose (1) and (2) in {\rm Assumption~\ref{assump_common}} hold and the loss function $L_{u}(\vx^+, \vx^-, f_k) = a^+ (S_k) \ell(f_k(\vx^+)) + a^- (S_k) \ell(-f_k(\vx^-))$. Then, for the kernel-based hypothesis space~\eqref{eq:kernel_hypothesis}, its empirical fractional Rademacher complexity w.r.t. the dataset $\tS$ and loss function $L_u$, can be bounded as bellow:

    \begin{align*}
        \widehat{\mathfrak{R}}_{\tS, L_u}^*(\mathcal{F}) \leq \frac{B r}{\sqrt{n}} \left( \frac{1}{K} \sum_{k=1}^K \sqrt{\frac{1}{\tau_k}} \left( a^+(S_k) + a^-(S_k) \right) \right) .
    \end{align*}
    
\end{lemma}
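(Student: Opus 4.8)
The plan is to follow the proof of Lemma~\ref{lem:app_frac_rade_kernel_hypothesis_pairwise} almost verbatim; the only new features are the two mutually independent Rademacher families $\sigma_{ki}^{+}$ and $\sigma_{ki}^{-}$, and the reweighting constants $a^{+}(S_k),a^{-}(S_k)$, which depend on $S_k$ only and hence factor out of every sum over $i$ and $j$ before any probabilistic step.

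First I would fix a label $k$ and an index $j\in[J_k]$ and write $m_{kj}=|I_{kj}|$. For the kernel class~\eqref{eq:kernel_hypothesis} we have $f_k(\vx)=\langle\vw_k,\Phi(\vx)\rangle$ with $\|\vw_k\|\le B$, so by linearity and the Cauchy--Schwarz inequality,
\begin{align*}
\sup_{f\in\mathcal{F}}\sum_{i\in I_{kj}}&\bigl(\sigma_{ki}^{+}a^{+}(S_k)f_k(\vx_{ki}^{+})+\sigma_{ki}^{-}a^{-}(S_k)f_k(\vx_{ki}^{-})\bigr)\\
&= B\Bigl\|\sum_{i\in I_{kj}}\bigl(\sigma_{ki}^{+}a^{+}(S_k)\Phi(\vx_{ki}^{+})+\sigma_{ki}^{-}a^{-}(S_k)\Phi(\vx_{ki}^{-})\bigr)\Bigr\|.
\end{align*}
Then I would take $\eE_{\boldsymbol{\sigma}}$, move the expectation inside a square root with Jensen's inequality, and expand the squared norm: since the $2m_{kj}$ signs $\{\sigma_{ki}^{+},\sigma_{ki}^{-}\}_{i\in I_{kj}}$ are mutually independent, every cross term has zero expectation, leaving $\sum_{i\in I_{kj}}\bigl((a^{+}(S_k))^{2}\kappa(\vx_{ki}^{+},\vx_{ki}^{+})+(a^{-}(S_k))^{2}\kappa(\vx_{ki}^{-},\vx_{ki}^{-})\bigr)\le m_{kj}r^{2}\bigl((a^{+}(S_k))^{2}+(a^{-}(S_k))^{2}\bigr)$ by $\kappa(\vx,\vx)\le r^{2}$. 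Relaxing $\sqrt{(a^{+})^{2}+(a^{-})^{2}}\le a^{+}+a^{-}$ (both nonnegative) gives $\eE_{\boldsymbol{\sigma}}[\,\cdot\,]\le Br\,(a^{+}(S_k)+a^{-}(S_k))\sqrt{m_{kj}}$ for the inner expectation.

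Next, summing over $j$ as in Lemma~\ref{lem:app_frac_rade_kernel_hypothesis_pairwise}: factor out $\chi_{f}(G_k)$, observe that $\{\omega_{kj}/\chi_{f}(G_k)\}_{j\in[J_k]}$ is a probability vector, apply Jensen for the concave square root, and use the fractional-cover identity $\sum_{j\in[J_k]}\omega_{kj}m_{kj}=m_k$ to obtain $\frac{1}{m_k}\sum_{j\in[J_k]}\omega_{kj}\,\eE_{\boldsymbol{\sigma}}[\,\cdot\,]\le Br\,(a^{+}(S_k)+a^{-}(S_k))\sqrt{\chi_{f}(G_k)/m_k}$. Finally I would plug in the Macro-AUC construction of Section~\ref{sec:app_problem_transformation}, where $\chi_{f}(G_k)=(1-\tau_k)n$ and $m_k=n^{2}\tau_k(1-\tau_k)$, so that $\sqrt{\chi_{f}(G_k)/m_k}=\tfrac{1}{\sqrt{n}}\sqrt{1/\tau_k}$, and average over $k\in[K]$ to get the claimed bound on $\widehat{\mathfrak{R}}_{\tS, L_u}^{*}(\mathcal{F})$.

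I do not expect a genuine obstacle here: the argument is a routine adaptation of the pairwise case. The only points that require care are (i) verifying that the two independent sign families contribute additively in the second moment, so the bound picks up $(a^{+})^{2}+(a^{-})^{2}$ rather than $(a^{+}+a^{-})^{2}$ (which we then relax), and (ii) treating the data-dependent factors $a^{\pm}(S_k)$ as constants that can be pulled out of the sums before any Rademacher average is taken. Specializing to $a^{+}(S_k)=|S_k^{+}|/n,\ a^{-}(S_k)=|S_k^{-}|/n$ (for $L_{u_1}$) and $a^{+}(S_k)=a^{-}(S_k)=1$ (for $L_{u_2}$), and then combining with the corresponding contraction inequality and Theorem~\ref{thm:base_theorem_macroauc}, yields the complexity terms needed for Theorems~\ref{thm:learning_guarantee_u1} and~\ref{thm:learning_guarantee_u2}.
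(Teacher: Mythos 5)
Your proposal is correct, and its overall architecture coincides with the paper's proof: Cauchy--Schwarz against the ball $\|\vw_k\|\leq B$, Jensen to pass to the second moment, orthogonality of the Rademacher signs, the kernel bound $\kappa(\vx,\vx)\leq r^2$, and the fractional-cover identities $\sum_{j}\omega_{kj}m_{kj}=m_k$ and $\chi_f(G_k)=\max\{|S_k^+|,|S_k^-|\}$, finally specialized via $m_k=|S_k^+||S_k^-|$ to get the $\frac{1}{\sqrt{n}}\sqrt{1/\tau_k}$ factor. The one genuine difference is where the factor $a^+(S_k)+a^-(S_k)$ is produced. The paper first splits the supremum by sub-additivity into a positive-instance term and a negative-instance term (its $\clubsuit$ and $\spadesuit$), bounds each separately, and adds the two resulting bounds, which yields $a^++a^-$ directly. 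You instead keep both sign families inside a single supremum, collapse it to a single RKHS norm, and let the mutual independence of the $2m_{kj}$ signs annihilate all cross terms in the expanded second moment; this produces the slightly tighter intermediate constant $\sqrt{(a^+)^2+(a^-)^2}$, which you then relax to $a^++a^-$ to match the statement. Your route therefore shows the lemma actually holds with $\sqrt{(a^+(S_k))^2+(a^-(S_k))^2}$ in place of $a^+(S_k)+a^-(S_k)$ (an improvement by up to a factor of $\sqrt{2}$, e.g.\ for $L_{u_2}$ where $a^+=a^-=1$), at the small cost of having to state explicitly that the $\sigma_{ki}^+$ and $\sigma_{ki}^-$ families are mutually independent, which is consistent with how the complexity is defined. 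Both arguments are routine and equally valid, and the downstream results (the contraction lemma and Theorems~\ref{thm:learning_guarantee_u1} and~\ref{thm:learning_guarantee_u2}) are unaffected since only the $a^++a^-$ form is used there.
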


\begin{proof}
    By the definition of $\widehat{\mathfrak{R}}_{\tS, L_u}^*(\mathcal{F})$ and let $a_k^+$ (or $a_k^-$) denote $a^+(S_k)$ (or $a^-(S_k)$) for notation simplicity, we can have
    \begin{align*}
        \widehat{\mathfrak{R}}_{\tS, L_u}^*(\mathcal{F}) & = \frac{1}{K} \sum_{k=1}^K \eE_{\boldsymbol{\sigma}} \left[ \frac{1}{m_k} \sum_{j \in [J_k]} \omega_{kj} \sup_{f \in \mathcal{F}} \left( \sum_{i \in I_{kj}} \left (\sigma_{ki}^+ a^+ (S_k) f_k(\vx_{ki}^+) + \sigma_{ki}^- a^- (S_k) f_k(\vx_{ki}^-) \right)  \right) \right]  \\
        & = \frac{1}{K} \sum_{k=1}^K \eE_{\boldsymbol{\sigma}} \left[ \frac{1}{m_k} \sum_{j \in [J_k]} \omega_{kj} \sup_{\|\vw_k\| \leq B} \left( \sum_{i \in I_{kj}} (\sigma_{ki}^+ a_k^+ \langle \vw_k, \Phi(\vx_{ki}^+) \rangle + \sigma_{ki}^- a_k^- \langle \vw_k, \Phi(\vx_{ki}^-) \rangle )  \right) \right]  \\
        & \overset{\circled{1}}{\leq} \frac{1}{K} \sum_{k=1}^K \eE_{\boldsymbol{\sigma}} \left[ \frac{1}{m_k} \sum_{j \in [J_k]} \omega_{kj} \left ( \sup_{\|\vw_k\| \leq B} \left( \sum_{i \in I_{kj}} \sigma_{ki}^+ a_k^+ \langle \vw_k, \Phi(\vx_{ki}^+) \rangle \right) + \sup_{\|\vw_k\| \leq B} \left( \sum_{i \in I_{kj}} \sigma_{ki}^- a_k^- \langle \vw_k, \Phi(\vx_{ki}^-) \rangle   \right) \right) \right] \\
        & \overset{def}{=} \clubsuit + \spadesuit
    \end{align*}
    where the inequality $\circled{1}$ is due to sub-additivity of the supremum function (i.e., $\sup (a + b) \leq \sup a + \sup b$), and
    \begin{align*}
        \clubsuit & = \frac{1}{K} \sum_{k=1}^K \eE_{\boldsymbol{\sigma}} \left[ \frac{1}{m_k} \sum_{j \in [J_k]} \omega_{kj} \sup_{\|\vw_k\| \leq B} \left( \sum_{i \in I_{kj}} \sigma_{ki}^+ a_k^+ \langle \vw_k, \Phi(\vx_{ki}^+) \rangle \right) \right] , \\
        \spadesuit & = \frac{1}{K} \sum_{k=1}^K \eE_{\boldsymbol{\sigma}} \left[ \frac{1}{m_k} \sum_{j \in [J_k]} \omega_{kj} \sup_{\|\vw_k\| \leq B} \left( \sum_{i \in I_{kj}} \sigma_{ki}^- a_k^- \langle \vw_k, \Phi(\vx_{ki}^-) \rangle \right) \right] .
    \end{align*}
    Next, we will (upper) bound $\clubsuit$ and $\spadesuit$, respectively.
    
    Firstly, for $\clubsuit$, we can have
    \begin{align*}
        \clubsuit & = \frac{1}{K} \sum_{k=1}^K \eE_{\boldsymbol{\sigma}} \left[ \frac{1}{m_k} \sum_{j \in [J_k]} \omega_{kj} \sup_{\|\vw_k\| \leq B} \left( \sum_{i \in I_{kj}} \sigma_{ki}^+ a_k^+ \langle \vw_k, \Phi(\vx_{ki}^+) \rangle \right) \right] \\
        & \leq \frac{1}{K} \sum_{k=1}^K \eE_{\boldsymbol{\sigma}} \left[ \frac{a_k^+}{m_k} \sum_{j \in [J_k]} \omega_{kj} \sup_{\|\vw_k\| \leq B} \| \vw_k \| \left\| \sum_{i \in I_{kj}} \sigma_{ki}^+  \Phi(\vx_{ki}^+) \right\| \right] \qquad (\text{Cauchy–Schwarz inequality}) \\
        & = \frac{1}{K} \sum_{k=1}^K \frac{B a_k^+}{m_k} \sum_{j \in [J_k]} \omega_{kj} \eE_{\boldsymbol{\sigma}} \left[ \left\| \sum_{i \in I_{kj}} \sigma_{ki}^+ \Phi(\vx_{ki}^+) \right\| \right] \qquad (\text{the definition of the sup and linearity of expectation}) \\
        & \leq \frac{1}{K} \sum_{k=1}^K \frac{B a_k^+}{m_k} \sum_{j \in [J_k]} \omega_{kj} \left( \eE_{\boldsymbol{\sigma}} \left[ \left\| \sum_{i \in I_{kj}} \sigma_{ki}^+ \Phi(\vx_{ki}^+) \right\|^2 \right] \right)^{\frac{1}{2}} \qquad (\text{Jensen's inequality}) \\
        & = \frac{1}{K} \sum_{k=1}^K \frac{B a_k^+}{m_k} \sum_{j \in [J_k]} \omega_{kj} \left( \eE_{\boldsymbol{\sigma}} \left[ \sum_{p \in I_{kj}, q \in I_{kj}} \sigma_{kp}^+ \sigma_{kq}^+ \langle \Phi(\vx_{kp}^+), \Phi(\vx_{kq}^+) \rangle \right] \right)^{\frac{1}{2}} \\
        & = \frac{1}{K} \sum_{k=1}^K \frac{B a_k^+}{m_k} \sum_{j \in [J_k]} \omega_{kj} \left( \sum_{i \in I_{kj}} \langle \Phi(\vx_{ki}^+), \Phi(\vx_{ki}^+) \rangle \right)^{\frac{1}{2}} \quad(\forall p \neq q, \eE[\sigma_{kp}\sigma_{kq}] = 0 \text{~and~} \eE[\sigma_{ki}\sigma_{ki}] = 1) \\
        & \leq \frac{1}{K} \sum_{k=1}^K \frac{B r a_k^+}{m_k} \sum_{j \in [J_k]} \omega_{kj} \sqrt{m_{kj}}
        \qquad \qquad (\langle \Phi(\vx_{ki}^+), \Phi(\vx_{ki}^+) \rangle = \kappa(\vx_{ki}^+, \vx_{ki}^+) \leq r^2 \text{~and let~} m_{kj} = |I_{kj}|) \\
        & = \frac{1}{K} \sum_{k=1}^K \frac{B r a_k^+ \chi_{f} (G_k)}{m_k} \sum_{j \in [J_k]} \frac{\omega_{kj}}{\chi_{f} (G_k)} \sqrt{m_{kj}} \\
        & \leq \frac{1}{K} \sum_{k=1}^K \frac{B r a_k^+ \sqrt{\chi_{f} (G_k)}}{m_k} \sqrt{\sum_{j \in [J_k]} \omega_{kj} m_{kj}} \qquad \qquad (\sum_{j \in [J_k]} \frac{\omega_{kj}}{\chi_{f} (G_k)} = 1 \text{~and Jensen's inequality}) .
    \end{align*}
    Since $\sum_{j \in [J_k]} \omega_{kj} m_{kj} = m_k , \chi_{f}(G_k) = \max\{|S_k^+|, |S_k^-|\}, m_k = |S_k^+| |S_k^-|$ and $\tau_k = \min\{|S_k^+|, |S_k^-|\}$ hold, it comes
    \begin{align*}
        \clubsuit \leq \frac{1}{K} \sum_{k=1}^K \frac{B r a_k^+ \sqrt{\chi_{f} (G_k)}}{m_k} \sqrt{\sum_{j \in [J_k]} \omega_{kj} m_{kj}} \leq \frac{B r}{\sqrt{n}} \left( \frac{1}{K} \sum_{k=1}^K a_k^+ \sqrt{\frac{1}{\tau_k}} \right) .
    \end{align*}
    Similarly to the proof of $\clubsuit$, we can obtain the upper bound for $\spadesuit$:
    \begin{align*}
        \spadesuit \leq \frac{B r}{\sqrt{n}} \left( \frac{1}{K} \sum_{k=1}^K a_k^- \sqrt{\frac{1}{\tau_k}} \right).
    \end{align*}
    Thus, we can obtain the final result:
    \begin{align*}
        \widehat{\mathfrak{R}}_{\tS, L_u}^*(\mathcal{F}) \leq \frac{B r}{\sqrt{n}} \left( \frac{1}{K} \sum_{k=1}^K \sqrt{\frac{1}{\tau_k}} \left( a^+(S_k) + a^-(S_k) \right) \right) .
    \end{align*}
\end{proof}

\begin{proposition} [The fractional Rademacher complexity of the kernel-based hypothesis space w.r.t. $L_{u_1}$ and $L_{u_2}$]
\label{pro:app_frac_rade_kernel_hypothesis_univariate_specific}
For the surrogate loss functions $L_{u_1}$ and $L_{u_2}$, which are defined in Eq.\eqref{eq:surrogate_u1} and Eq.\eqref{eq:surrogate_u2} respectively, we have
    \begin{align*}
        \widehat{\mathfrak{R}}_{\tS, L_{u_1}}^*(\mathcal{F}) & \leq \frac{B r}{\sqrt{n}} \left( \frac{1}{K} \sum_{k=1}^K \sqrt{\frac{1}{\tau_k}} \right) , \\
        \widehat{\mathfrak{R}}_{\tS, L_{u_2}}^*(\mathcal{F}) & \leq \frac{2 B r}{\sqrt{n}} \left( \frac{1}{K} \sum_{k=1}^K \sqrt{\frac{1}{\tau_k}} \right) .
    \end{align*}
\end{proposition}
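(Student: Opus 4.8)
The plan is to derive this proposition as an immediate specialization of Lemma~\ref{lem:app_frac_rade_kernel_hypothesis_univariate_general}, which already handles the generic reweighted univariate loss $L_u(\vx^+, \vx^-, f_k) = a^+(S_k)\,\ell(f_k(\vx^+)) + a^-(S_k)\,\ell(-f_k(\vx^-))$ and gives the bound $\widehat{\mathfrak{R}}_{\tS, L_u}^*(\mathcal{F}) \leq \frac{Br}{\sqrt{n}}\left(\frac{1}{K}\sum_{k=1}^K \sqrt{\frac{1}{\tau_k}}\,(a^+(S_k) + a^-(S_k))\right)$. So the only task is to read off the reweighting coefficients for $L_{u_1}$ and $L_{u_2}$ from their definitions in Eq.\eqref{eq:surrogate_u1} and Eq.\eqref{eq:surrogate_u2}, substitute, and simplify.

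First I would treat $L_{u_1}$. Comparing Eq.\eqref{eq:surrogate_u1} with the generic form, we have $a^+(S_k) = \frac{|S_k^+|}{n}$ and $a^-(S_k) = \frac{|S_k^-|}{n}$. Since the positive and negative index sets partition the $n$ training instances for each label, $|S_k^+| + |S_k^-| = n$, hence $a^+(S_k) + a^-(S_k) = \frac{|S_k^+| + |S_k^-|}{n} = 1$. Plugging this into Lemma~\ref{lem:app_frac_rade_kernel_hypothesis_univariate_general} yields $\widehat{\mathfrak{R}}_{\tS, L_{u_1}}^*(\mathcal{F}) \leq \frac{Br}{\sqrt{n}}\left(\frac{1}{K}\sum_{k=1}^K \sqrt{\frac{1}{\tau_k}}\right)$, which is the first claimed inequality.

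Next I would treat $L_{u_2}$. Comparing Eq.\eqref{eq:surrogate_u2} with the generic form, we have $a^+(S_k) = a^-(S_k) = 1$, so $a^+(S_k) + a^-(S_k) = 2$. Substituting into Lemma~\ref{lem:app_frac_rade_kernel_hypothesis_univariate_general} gives $\widehat{\mathfrak{R}}_{\tS, L_{u_2}}^*(\mathcal{F}) \leq \frac{2Br}{\sqrt{n}}\left(\frac{1}{K}\sum_{k=1}^K \sqrt{\frac{1}{\tau_k}}\right)$, which is the second claimed inequality.

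There is essentially no obstacle here: all the technical work — the Cauchy--Schwarz step, the Jensen step, the Rademacher variance computation, and the identification $\chi_f(G_k) = \max\{|S_k^+|,|S_k^-|\}$, $m_k = |S_k^+||S_k^-|$ that converts $\sqrt{\chi_f(G_k)/m_k}$ into $\frac{1}{\sqrt{n}}\sqrt{1/\tau_k}$ — is already absorbed into Lemma~\ref{lem:app_frac_rade_kernel_hypothesis_univariate_general}. The only thing to be careful about is that the reweighting coefficients of $L_{u_1}$ genuinely depend on $S$ (hence the need for the $S$-dependent generalization-risk definition in Eq.\eqref{eq:surrogate_expected_risk}), but this does not affect the Rademacher bound since $a^\pm(S_k)$ are treated as fixed constants once $S$ is fixed. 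Thus the proof is a two-line substitution into the preceding lemma.
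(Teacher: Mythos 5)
Your proposal is correct and matches the paper's own proof exactly: the paper also obtains this proposition by plugging the specific reweighting coefficients $a^{\pm}(S_k)$ of $L_{u_1}$ (namely $|S_k^+|/n$ and $|S_k^-|/n$, summing to $1$) and $L_{u_2}$ (namely $1$ and $1$, summing to $2$) into Lemma~\ref{lem:app_frac_rade_kernel_hypothesis_univariate_general}. Your extra remark that $|S_k^+|+|S_k^-|=n$ is the one small computation the paper leaves implicit, so nothing is missing.
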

\begin{proof}
    The proof is straightforward based on Lemma~\ref{lem:app_frac_rade_kernel_hypothesis_univariate_general} by plugging in the specific reweighted values (i.e., $a^+(S_k)$ and $a^-(S_k)$) of the surrogate univariate loss functions.
\end{proof}

\subsubsection{The contraction inequality}

\begin{lemma}[\textbf{Contraction inequality for the (reweighted) univariate loss $L_u$}]
\label{lem:app_contraction_univariate}
    For a dataset $S$ (and its corresponding constructed dataset $\tS$), assume the loss function $L_{\phi} = L_{u}(\vx^+, \vx^-, f_k) = a^+ (S_k) \ell(f_k(\vx^+)) + a^- (S_k) \ell(-f_k(\vx^-))$ for each $k \in [K]$, where the base loss function $\ell(t)$ is $\rho$-Lipschitz and the reweighting function $a^+ (S_k)$ (or $a^- (S_k)$) indicates its dependency on $S_k$. Then, the following inequality holds
    \begin{align*}
        \widehat{\mathfrak{R}}_{\tS}^*(L_{\phi} \circ \mathcal{F}) \leq  2 \rho \widehat{\mathfrak{R}}_{\tS, L_u}^*(\mathcal{F}) .
    \end{align*}
\end{lemma}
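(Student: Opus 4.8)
The plan is to imitate the proof of Lemma~\ref{lem:app_contraction_pairwise}, with one extra twist: $L_u$ is a \emph{sum} of two Lipschitz compositions — one evaluated at $\vx^+$ with weight $a^+(S_k)$, one at $\vx^-$ with weight $a^-(S_k)$, sharing a single Rademacher sign — so I first split it into a ``positive part'' and a ``negative part'' and handle each separately; this split is exactly where the factor $2$ comes from. As a reduction, note that both sides decompose as $\tfrac1K$-averages over $k\in[K]$, i.e. $\widehat{\mathfrak{R}}_{\tS}^*(L_\phi\circ\mathcal{F})=\tfrac1K\sum_k\widehat{\mathfrak{R}}_{\tS_k}^*(L_\phi\circ\mathcal{F}_k)$ and similarly for $\widehat{\mathfrak{R}}_{\tS,L_u}^*(\mathcal{F})$, so it suffices to prove, for each fixed $k$, that $\widehat{\mathfrak{R}}_{\tS_k}^*(L_\phi\circ\mathcal{F}_k)\le 2\rho\,\widehat{\mathfrak{R}}_{\tS_k,L_u}^*(\mathcal{F}_k)$. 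From here I fix $k$, suppress it from the notation, and write $a^+=a^+(S_k)\ge 0$, $a^-=a^-(S_k)\ge 0$, $\vx_i=(\vx_i^+,\vx_i^-)$.

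\textbf{Split and scalar contraction.} For each independent set $I_j$ and each realization of the signs, subadditivity of the supremum ($\sup(g+h)\le\sup g+\sup h$) gives
\[
\sup_{f\in\mathcal{F}}\sum_{i\in I_j}\sigma_i\big(a^+\ell(f(\vx_i^+))+a^-\ell(-f(\vx_i^-))\big)\le \sup_{f}\sum_{i\in I_j}\sigma_i a^+\ell(f(\vx_i^+)) + \sup_{f}\sum_{i\in I_j}\sigma_i a^-\ell(-f(\vx_i^-)).
\]
Averaging with weights $\omega_j/m_k$ and taking $\eE_{\boldsymbol\sigma}$ yields $\widehat{\mathfrak{R}}_{\tS}^*(L_\phi\circ\mathcal{F})\le A^+ + A^-$ with $A^\pm$ the corresponding one-sided expressions. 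Since $a^\pm\ge0$ are constants (depending only on $S_k$) they pull out of the supremum, and since $t\mapsto\ell(t)$ and $t\mapsto\ell(-t)$ are $\rho$-Lipschitz, the Ledoux–Talagrand peeling argument already carried out in the proof of Lemma~\ref{lem:app_contraction_pairwise} (removing one $\sigma_i$ at a time) gives $\eE_{\boldsymbol\sigma}\sup_f\sum_{i\in I_j}\sigma_i a^+\ell(f(\vx_i^+))\le\rho\,\eE_{\boldsymbol\sigma}\sup_f\sum_{i\in I_j}\sigma_i a^+ f(\vx_i^+)$ and the analogue for the negative half. Hence $A^+\le\rho P$ and $A^-\le\rho N$, where $P=\eE_{\boldsymbol\sigma}[\tfrac1{m_k}\sum_j\omega_j\sup_f\sum_{i\in I_j}\sigma_i a^+ f(\vx_i^+)]$ and $N=\eE_{\boldsymbol\sigma}[\tfrac1{m_k}\sum_j\omega_j\sup_f\sum_{i\in I_j}\sigma_i a^- f(\vx_i^-)]$.

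\textbf{Symmetrization back to the combined complexity.} Rename the Rademacher family inside $P$ as $(\sigma_i^+)$ and adjoin an independent copy $(\sigma_i^-)$. For every fixed realization of $(\sigma_i^+)$, every $j$, and every fixed $f_0$,
\[
\eE_{\boldsymbol\sigma^-}\Big[\sup_f\sum_{i\in I_j}\big(\sigma_i^+ a^+ f(\vx_i^+)+\sigma_i^- a^- f(\vx_i^-)\big)\Big]\ \ge\ \eE_{\boldsymbol\sigma^-}\Big[\sum_{i\in I_j}\big(\sigma_i^+ a^+ f_0(\vx_i^+)+\sigma_i^- a^- f_0(\vx_i^-)\big)\Big]=\sum_{i\in I_j}\sigma_i^+ a^+ f_0(\vx_i^+),
\]
because each $\sigma_i^-$ has mean zero. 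Taking the supremum over $f_0$ on the right, then the $\omega_j/m_k$-weighted sum over $j$ (this step is term-wise in $j$, so overlaps among the sets $I_j$ are harmless), and finally $\eE_{\boldsymbol\sigma^+}$, gives $P\le\widehat{\mathfrak{R}}_{\tS,L_u}^*(\mathcal{F})$ (for this fixed $k$); the symmetric computation, integrating out $(\sigma_i^+)$ instead, gives $N\le\widehat{\mathfrak{R}}_{\tS,L_u}^*(\mathcal{F})$. Combining the two steps, $\widehat{\mathfrak{R}}_{\tS}^*(L_\phi\circ\mathcal{F})\le\rho(P+N)\le2\rho\,\widehat{\mathfrak{R}}_{\tS,L_u}^*(\mathcal{F})$, and averaging over $k$ completes the proof.

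\textbf{Main obstacle.} The scalar contraction step is routine — it is verbatim the argument already displayed for $L_{pa}$ — and the sub-additivity split is immediate. The part that genuinely requires care is the last step: the definition of $\widehat{\mathfrak{R}}_{\tS,L_u}^*(\mathcal{F})$ places \emph{two} independent Rademacher families inside a single supremum, whereas the split produced two \emph{separate} suprema each carrying one family, and one must argue that inserting an extra mean-zero term inside a supremum only increases the expectation, uniformly over the (generally overlapping) independent sets $I_j$. This comparison is precisely what legitimately produces the constant $2$, and it is the only place where the multi-term structure of $L_u$ (as opposed to the single-term $L_{pa}$) matters.
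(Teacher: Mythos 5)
Your proof is correct, but it takes a genuinely different route from the paper's. The paper proves the lemma by a single joint peeling argument in the style of its proof of Lemma~\ref{lem:app_contraction_pairwise}: it keeps the two summands of $L_u$ together, picks near-maximizers $f^1,f^2$, applies the Lipschitz property simultaneously to $\ell(f(\vx^+))$ and $\ell(-f(\vx^-))$ with two correlated signs $s^+,s^-$, and then converts the resulting pair of suprema into $2\eE_{\sigma^+,\sigma^-}[\cdots]$ --- i.e.\ the factor $2$ is introduced at the moment the single Rademacher variable $\sigma_{n_j}$ is replaced by two independent ones. You instead split first by sub-additivity of the supremum, contract each half with the standard single-family contraction already displayed for $L_{pa}$ (applied to the $\rho$-Lipschitz maps $t\mapsto\ell(t)$ and $t\mapsto\ell(-t)$), and then absorb each one-sided quantity $P$, $N$ into the two-family complexity $\widehat{\mathfrak{R}}_{\tS,L_u}^*(\mathcal{F})$ via the mean-zero comparison $\eE_{\boldsymbol{\sigma}^-}\sup_f(\cdot)\geq\sup_{f_0}\eE_{\boldsymbol{\sigma}^-}(\cdot)$; the factor $2$ then comes from summing the two bounds. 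Both routes give the same constant, and your desymmetrization step is airtight (the supremum of the combined expression dominates its value at any fixed $f_0$, the $\sigma_i^-$ terms vanish in expectation, and the $\omega_{kj}/m_k$-weighted sum over $j$ preserves the inequality term by term). If anything, your version makes the provenance of the factor $2$ more transparent: in the paper's final display the passage from $\tfrac12\sup+\tfrac12\sup$ to $2\eE_{\sigma^+,\sigma^-}\sup$ is written as an equality although it only accounts for two of the four sign patterns, whereas your argument never needs that step. The one hypothesis you should state explicitly is $a^+(S_k),a^-(S_k)\geq 0$, which is what lets you pull these weights through the supremum and through the scalar contraction; it holds for both $L_{u_1}$ and $L_{u_2}$.
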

\begin{proof}
    Since $\widehat{\mathfrak{R}}_{\tS}^*(L_{\phi} \circ \mathcal{F}) = \frac{1}{K} \sum_{k=1}^K \widehat{\mathfrak{R}}_{\tS_k}^*(L_{\phi} \circ \mathcal{F}_k)$ and $\widehat{\mathfrak{R}}_{\tS}^*(\mathcal{F}, L_u) = \frac{1}{K} \sum_{k=1}^K \widehat{\mathfrak{R}}_{\tS, L_u}^*(\mathcal{F})$, we first prove $\widehat{\mathfrak{R}}_{\tS_k}^*(L_{\phi} \circ \mathcal{F}_k) \leq 2 \rho \widehat{\mathfrak{R}}_{\tS, L_u}^*(\mathcal{F})$ and then can get the desired result.
    
    Here we prove the inequality $\widehat{\mathfrak{R}}_{\tS_k}^*(L_{\phi} \circ \mathcal{F}_k) \leq \rho \widehat{\mathfrak{R}}_{\tS_k,L_u}^*(\mathcal{F}_k)$ following the idea in~\citet{mohri2018foundations} (Lemma~5.7, p.93), and we omit the index $k$ and the symbol $\tS_k$ or $S$ for notation simplicity in the following.
    
    First we fix a sample $(\tvx_1=(\vx_1^+, \vx_1^-),\dots,\tvx_{m}=(\vx_m^+, \vx_m^-))$, then by defintion,
    \begin{align*}
        \widehat{\mathfrak{R}}^*(L_{\phi} \circ f) & =  \eE_{\boldsymbol{\sigma}} \left[ \frac{1}{m} \sum_{j \in [J]} w_{j} \sup_{f \in \mathcal{F}} \left( \sum_{i \in I_{j}} \sigma_{i} L_{u}(\vx_i^+, \vx_i^-, f) \right) \right] \\
        & = \frac{1}{m} \sum_{j \in [J]} w_{j} \eE_{\sigma_1,\dots,\sigma_{n_j-1}} \left[ \eE_{\sigma_{n_j}} \left[ \sup_{f \in \mathcal{F}} u_{n_j - 1} (f) + \sigma_{n_j} L_{u}(\vx_{n_j}^+, \vx_{n_j}^-, f)  \right ]\right] , \qquad (\text{denote~} n_j = |I_j| \text{~for simplicity})
    \end{align*}
    where $u_{n_j- 1} (f) = \sum_{i=1}^{n_j} \sigma_i L_{u}(\vx_i^+, \vx_i^-, f)$. By the definition of the supremum, for any $\epsilon > 0$, there exists $f^1, f^2 \in \mathcal{F}$ such that
    \begin{align*}
        u_{n_j - 1} (f^1) + L_{u_2}(\vx_{n_j}^+, \vx_{n_j}^-, f^1) \leq (1 - \epsilon) \left[ \sup_{f \in \mathcal{F}} u_{n_j - 1} (f) +  L_{u}(\vx_{n_j}^+, \vx_{n_j}^-, f)  \right ] ,
    \end{align*}
    and 
    \begin{align*}
        u_{n_j - 1} (f^2) - L_{u_2}(\vx_{n_j}^+, \vx_{n_j}^-, f^2) \leq (1 - \epsilon) \left[ \sup_{f \in \mathcal{F}} u_{n_j - 1} (f) -  L_{u}(\vx_{n_j}^+, \vx_{n_j}^-, f)  \right ] .
    \end{align*}
    Thus, for any $\epsilon > 0$, by definition of $\eE_{\sigma_{n_j}}$,
    \begin{align*}
        & (1 - \epsilon) \eE_{\sigma_{n_j}} \left[ \sup_{f \in \mathcal{F}} u_{n_j - 1} (f) + \sigma_{n_j} L_{u}(\vx_{n_j}^+, \vx_{n_j}^-, f)  \right ] \\
        & = (1 - \epsilon) \left[ \frac{1}{2} \sup_{f \in \mathcal{F}} u_{n_j - 1} (f) +  L_{u}(\vx_{n_j}^+, \vx_{n_j}^-, f)  \right ] + \left[ \frac{1}{2} \sup_{f \in \mathcal{F}} u_{n_j - 1} (f) -  L_{u}(\vx_{n_j}^+, \vx_{n_j}^-, f)  \right ] \\
        & \leq \frac{1}{2} \left[ u_{n_j - 1} (f^1) +  L_{u}(\vx_{n_j}^+, \vx_{n_j}^-, f^1)) \right] + \frac{1}{2} \left[ u_{n_j - 1} (f^2) -  L_{u}(\vx_{n_j}^+, \vx_{n_j}^-, f^2) \right]. \\
        & = \frac{1}{2} \left[ u_{n_j - 1} (f^1) +  a^+ \ell(f^1(\vx_{n_j}^+)) + a^- \ell(-f^1(\vx_{n_j}^-)) \right] + \frac{1}{2} \left[ u_{n_j - 1} (f^2) -  a^+ \ell(f^2(\vx_{n_j}^+)) - a^- \ell(-f^2(\vx_{n_j}^-)) \right].
    \end{align*}
    Let $s^+ = \sgn(f^1(\vx_{n_j}^+) - f^2(\vx_{n_j}^+))$ and $s^- = \sgn(f^1(\vx_{n_j}^-) - f^2(\vx_{n_j}^-))$. Then, the previous inequality implies
    \begin{align*}
        & (1 - \epsilon) \eE_{\sigma_{n_j}} \left[ \sup_{f \in \mathcal{F}} u_{n_j - 1} (f) + \sigma_{n_j} L_{u}(\vx_{n_j}^+, \vx_{n_j}^-, f)  \right ]  \\
        & \leq \frac{1}{2} \left[ u_{n_j - 1} (f^1) + u_{n_j - 1} (f^2) + s^+ a^+ \rho (f^1(\vx_{n_j}^+) - f^2(\vx_{n_j}^+)) + s^- a^- \rho (f^1(\vx_{n_j}^-) - f^2(\vx_{n_j}^-))  \right] \qquad (\text{Lipschitz property}) \\
        & = \frac{1}{2} \left[ u_{n_j - 1} (f^1) + s^+ a^+ \rho f^1(\vx_{n_j}^+) + s^- a^- \rho f^1(\vx_{n_j}^-) \right] + \frac{1}{2} \left[ u_{n_j - 1} (f^2) - s^+ a^+ \rho f^2(\vx_{n_j}^+) - s^- a^- \rho f^2(\vx_{n_j}^-)
         \right] (\text{rearranging}) \\
        & \leq \frac{1}{2} \sup_{f \in \mathcal{F}} \left[ u_{n_j - 1} (f) + s^+ a^+ \rho f(\vx_{n_j}^+) + s^- a^- \rho f(\vx_{n_j}^-) \right] + \frac{1}{2} \sup_{f \in \mathcal{F}} \left[ u_{n_j - 1} (f) - s^+ a^+ \rho f(\vx_{n_j}^+) - s^- a^- \rho f(\vx_{n_j}^-) \right] (\text{def of sup}) \\
        & = 2 \eE_{\sigma_{n_j}^+,\sigma_{n_j}^-} \left[ u_{n_j - 1} (\tf) + \sigma_{n_j}^+ a^+ \rho f(\vx_{n_j}^+) + \sigma_{n_j}^- a^- \rho f(\vx_{n_j}^-) \right] . \qquad \qquad (\text{definition of~}\eE_{\sigma_{n_j}^+,\sigma_{n_j}^-}) \\
    \end{align*}
    Since the inequality holds for any $\epsilon > 0$, we have
    \begin{align*}
        \eE_{\sigma_{n_j}} \left[ \sup_{f \in \mathcal{F}} u_{n_j - 1} (f) + \sigma_{n_j} L_{u}(\vx_{n_j}^+, \vx_{n_j}^-, f)  \right ] \leq 2 \eE_{\sigma_{n_j}^+,\sigma_{n_j}^-} \left[ u_{n_j - 1} (\tf) + \sigma_{n_j}^+ a^+ \rho f(\vx_{n_j}^+) + \sigma_{n_j}^- a^- \rho f(\vx_{n_j}^-) \right].
    \end{align*}
    Proceeding in the same way for all other $\sigma_i$ ($i \in [I_j], i \neq n_j$) proves that 
    \begin{align*}
         \eE_{\boldsymbol{\sigma}} \left[ \frac{1}{m} \sum_{j \in [J]} w_{j} \sup_{f \in \mathcal{F}} \left( \sum_{i \in I_{j}} \sigma_{i} L_{u}(\vx_i^+, \vx_i^-, f) \right) \right] \leq  \eE_{\boldsymbol{\sigma}} \left[ \frac{1}{m} \sum_{j \in [J]} w_{j} \sup_{f \in \mathcal{F}} 2 \rho \left( \sum_{i \in I_{j}} (\sigma_{i}^+ a^+ f(\vx_{i}^+) + \sigma_{i}^- a^- f(\vx_{i}^-) )  \right) \right]. 
    \end{align*}
    By proceeding other $j \in [J]$, we can obtain the following 
    \begin{align*}
        \widehat{\mathfrak{R}}^*(L_{\phi} \circ f) & = \frac{1}{m} \sum_{j \in [J]} w_{j} \eE_{\boldsymbol{\sigma}} \left[ \sup_{f \in \mathcal{F}} \left( \sum_{i \in I_{j}} \sigma_{i} L_{u}(\vx_i^+, \vx_i^-, f) \right) \right] \\
        & \leq \frac{1}{m} \sum_{j \in [J]} w_{j} \eE_{\boldsymbol{\sigma}} \left[ \sup_{f \in \mathcal{F}} \left( \sum_{i \in I_{j}} 2 \rho (\sigma_{i}^+ a^+ f(\vx_{i}^+) + \sigma_{i}^- a^- f(\vx_{i}^-) )  \right) \right] \\
        & = 2 \rho \widehat{\mathfrak{R}}_{L_u}^*(f) .
    \end{align*}
\end{proof}

\subsubsection{Proof of Theorem~\ref{thm:learning_guarantee_u1}}

\LearningGuaranteeUone*
\begin{proof}
    Since the base loss $\ell(z)$ is bounded by $B$ and the loss $L_{u_1}(\vx^+, \vx^-, f_k) = \frac{|S_k^+|}{n} \ell \left ( f_k(\vx^+) \right ) + \frac{|S_k^-|}{n} \ell \left ( -f_k(\vx^-) \right)$, the loss $L_{\phi} = \frac{1}{\tau_S^*} L_{u_1}$ is bounded by $\frac{B}{\tau_S^*}$. Then, applying Theorem~\ref{thm:base_theorem_macroauc}, and combining Lemma~\ref{lem:app_contraction_univariate} and Proposition~\ref{pro:app_frac_rade_kernel_hypothesis_univariate_specific}, we can obtain that for any $\delta > 0$, the following generalization bound holds with probability at least $1 - \delta$ over the draw of an i.i.d. sample $S$ of size $n$:
    \begin{align*}
        \eE_S \left[ \frac{1}{\tau_S^*} \widehat{R}_{u_1} (f) \right] \leq \frac{1}{\tau_S^*} \widehat{R}_{u_1} (f) + \frac{4 \rho r \Lambda}{\tau_S^* \sqrt{n}} \left (\frac{1}{K} \sum_{k=1}^K \sqrt{\frac{1}{\tau_k}} \right )  +  \frac{3 B}{\tau_S^*} \sqrt{\frac{\log(\frac{2}{\delta})}{2n}} \left (\frac{1}{K} \sum_{k=1}^K \sqrt{\frac{1}{\tau_k}} \right ) .
    \end{align*}
    Finally, we can get the desired result by combining the above inequality and Lemma~\ref{lem:relationship_risks} (i.e., $R_{0/1} (f) \leq \eE_S \left[ \frac{1}{\tau_S^*} \widehat{R}_{u_1} (f) \right]$).
\end{proof}

\subsubsection{Proof of Theorem~\ref{thm:learning_guarantee_u2}}

\LearningGuaranteeUtwo*
\begin{proof}
    Since the base loss $\ell(z)$ is bounded by $B$ and the loss $L_{u_2}(\vx^+, \vx^-, f_k) = \ell \left ( f_k(\vx^+) \right ) + \ell \left ( -f_k(\vx^-) \right)$, the loss $L_{\phi} = L_{u_2}$ is bounded by $2 B$. Then, applying Theorem~\ref{thm:base_theorem_macroauc}, and combining Lemma~\ref{lem:app_contraction_univariate} and Proposition~\ref{pro:app_frac_rade_kernel_hypothesis_univariate_specific}, we can obtain that for any $\delta > 0$, the following generalization bound holds with probability at least $1 - \delta$ over the draw of an i.i.d. sample $S$ of size $n$:
    \begin{align*}
        R_{u_2}(f) =  \eE_S \left[ \widehat{R}_{u_2} (f) \right] \leq \widehat{R}_{u_2} (f) + \frac{8 \rho r \Lambda}{\sqrt{n}} \left (\frac{1}{K} \sum_{k=1}^K \sqrt{\frac{1}{\tau_k}} \right ) + 
        6 B \sqrt{\frac{\log(\frac{2}{\delta})}{2n}} \left ( \sqrt{\frac{1}{K} \sum_{k=1}^K \frac{1}{\tau_k}} \right )
    \end{align*}
    Finally, we can get the desired result by combining the above inequality and Lemma~\ref{lem:relationship_risks} (i.e., $R_{0/1} (f) \leq R_{u_2}(f)$).
\end{proof}

\subsubsection{Proof of Corollary~\ref{cor:learning_guarantee_u_balanced}}

\LearningGuaranteeUBalanced*
\begin{proof}
    It is straightforward to get the result by applying the Theorem~\ref{thm:learning_guarantee_u1} and Theorem~\ref{thm:learning_guarantee_u2} by plugging $\tau_k = \frac{1}{2}$.
\end{proof}

\subsubsection{Proof of Corollary~\ref{cor:learning_guarantee_u1_extreme_imbalanced}}

\LearningGuaranteeUOneExtremeImbalanced*
\begin{proof}
    It is straightforward to get the result by applying the Theorem~\ref{thm:learning_guarantee_u1} by plugging $\tau_k = \frac{1}{n}$.
\end{proof}

\subsubsection{Proof of Corollary~\ref{cor:learning_guarantee_u2_extreme_imbalanced}}
\label{sec:app_learning_guarantee_u2_extreme_imbalanced}

\begin{restatable}[\textbf{Learning guarantee of $\mathcal{A}^{u_2}$ in extremely imbalanced case}]
    {corollary}{LearningGuaranteeUTwoExtremeImbalanced}
\label{cor:learning_guarantee_u2_extreme_imbalanced}
    Assume the loss $L_{\phi} = L_{u_2}$, where $L_{u_2}$ is defined in Eq.\eqref{eq:surrogate_u2}. Besides, {\rm Assumption~\ref{assump_common}} holds and suppose $S$ is extremely imbalanced. Then, for any $\delta > 0$, with probability at least $1 - \delta$ over the draw of an i.i.d. sample $S$ of size $n$, the following generalization bound holds for any $f \in \mathcal{F}$: 
    \begin{align*}
        R_{0/1} (f) \leq R_{u_2} (f) \leq & \widehat{R}_{u_2} (f) + 8 \rho r \Lambda + 6 B \sqrt{\frac{\log(\frac{2}{\delta})}{2}} .
    \end{align*}
\end{restatable}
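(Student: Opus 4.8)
The plan is to obtain this corollary as an immediate specialization of the general-case learning guarantee for $\mathcal{A}^{u_2}$, namely Theorem~\ref{thm:learning_guarantee_u2}, to the extremely imbalanced regime. By definition, a dataset $S$ is extremely imbalanced precisely when $\tau_k = \frac{1}{n}$ for every $k \in [K]$, so the only work is to evaluate, at $\tau_k = \frac1n$, the two label-wise imbalance factors that appear in that theorem; everything substantive (the new McDiarmid-type inequality of Theorem~\ref{thm:new_mcdiarmid}, the base theorem of Macro-AUC, the kernel fractional Rademacher complexity bound, and the contraction inequality for univariate losses) has already been packaged into Theorem~\ref{thm:learning_guarantee_u2}.

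First I would substitute $\tau_k = \frac{1}{n}$ into the arithmetic-mean factor, getting $\frac{1}{K}\sum_{k=1}^K \sqrt{1/\tau_k} = \frac{1}{K}\sum_{k=1}^K \sqrt{n} = \sqrt{n}$, and likewise into the quadratic-mean factor, getting $\sqrt{\frac{1}{K}\sum_{k=1}^K 1/\tau_k} = \sqrt{\frac{1}{K}\cdot Kn} = \sqrt{n}$. Plugging these into the bound of Theorem~\ref{thm:learning_guarantee_u2}, the $\frac{1}{\sqrt{n}}$ prefactors cancel against these $\sqrt{n}$'s: the complexity term becomes $\frac{8\rho r\Lambda}{\sqrt n}\cdot\sqrt n = 8\rho r\Lambda$ and the concentration term becomes $6B\sqrt{\frac{\log(2/\delta)}{2n}}\cdot\sqrt n = 6B\sqrt{\frac{\log(2/\delta)}{2}}$, which is exactly the claimed inequality. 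The chain $R_{0/1}(f)\le R_{u_2}(f)$ is inherited verbatim from Theorem~\ref{thm:learning_guarantee_u2} (equivalently from Lemma~\ref{lem:relationship_risks}), so no separate argument for it is needed.

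I do not anticipate a genuine obstacle here, since this is a degenerate-parameter reading of an already-proved bound. The one point worth checking carefully is that the constants $8$ and $6$ — which originate from $L_{u_2}$ being bounded by $2B$ together with the factor-$2$ in the contraction lemma for the reweighted univariate loss — are unaffected by the substitution; they are, because setting $\tau_k = \frac1n$ only rescales the imbalance factors and touches neither the base-loss bound $B$ nor the Lipschitz constant $\rho$. Hence the proof will be a one-line appeal to Theorem~\ref{thm:learning_guarantee_u2} with $\tau_k = \frac1n$, paralleling the proofs of Corollary~\ref{cor:learning_guarantee_pa_extreme_imbalanced} and Corollary~\ref{cor:learning_guarantee_u1_extreme_imbalanced}.
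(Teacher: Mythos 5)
Your proposal is correct and matches the paper's own proof exactly: the paper also obtains this corollary by plugging $\tau_k = \frac{1}{n}$ into Theorem~\ref{thm:learning_guarantee_u2}, and your substitution of the two imbalance factors (each evaluating to $\sqrt{n}$ and cancelling the $\frac{1}{\sqrt{n}}$ prefactors) is the entire content of that argument.
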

\begin{proof}
    It is straightforward to get the result by applying the Theorem~\ref{thm:learning_guarantee_u2} and Theorem~\ref{thm:learning_guarantee_u2} by plugging $\tau_k = \frac{1}{n}$.
\end{proof}

\section{Additional Experimental Results}
\label{sec:app_experiments}

\subsection{Label-wise class imbalance illustrations of benchmark datasets}
\label{sec:app_label_wise_class_imbalance_illustration}


The label-wise class imbalance levels of all the benchmark datasets are illustrated in Figure~\ref{fig:imbalance_benchmarks_full}.

\begin{figure}[h]
        \centering
        \begin{subfigure}[CAL500]
            {\includegraphics[scale=0.25]{images/cal500.eps}}
        \end{subfigure}
        \begin{subfigure}[emotions]
            {\includegraphics[scale=0.25]{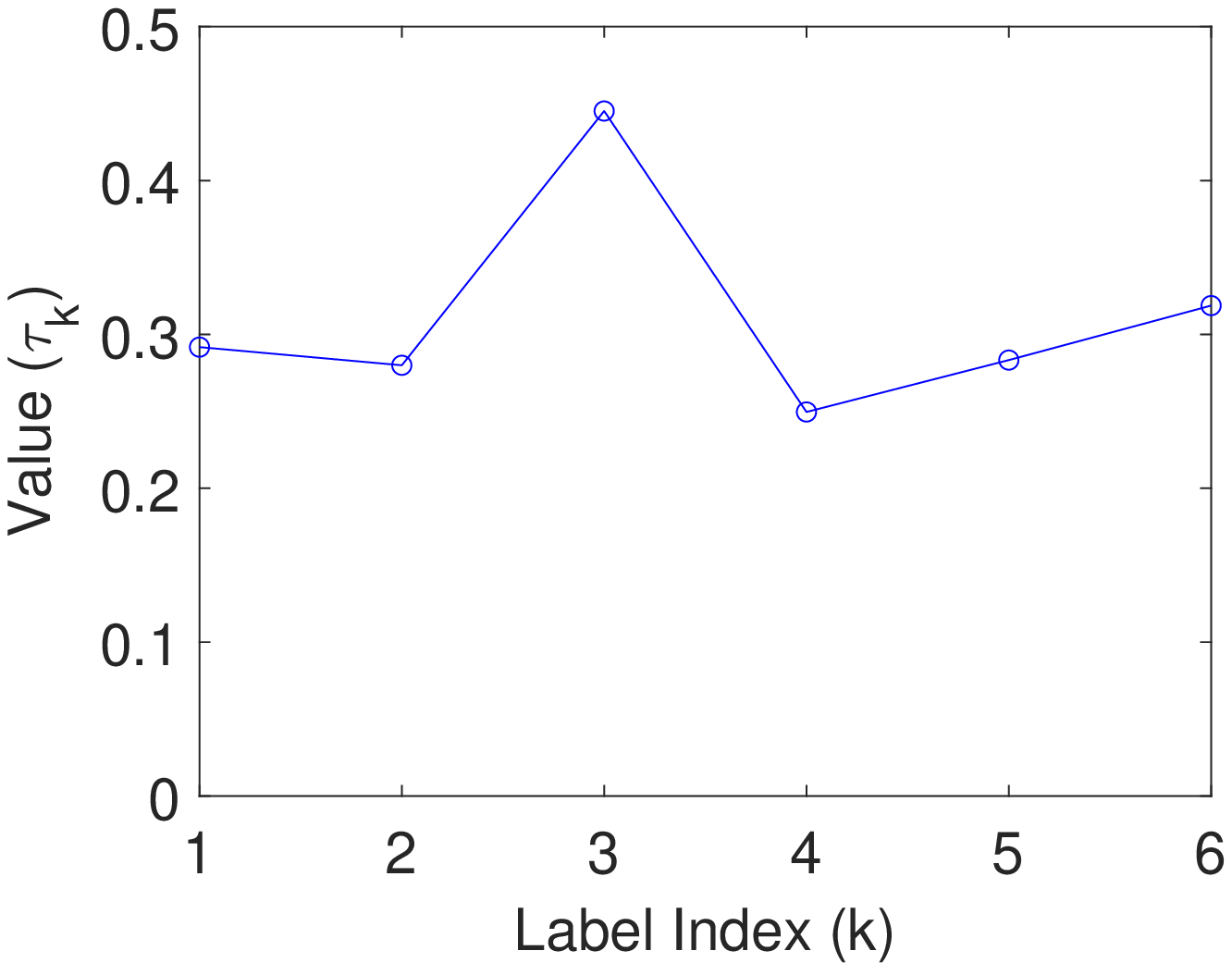}}
        \end{subfigure}
        \begin{subfigure}[image]
            {\includegraphics[scale=0.25]{images/image.eps}}
        \end{subfigure}
        \begin{subfigure}[scene]
            {\includegraphics[scale=0.25]{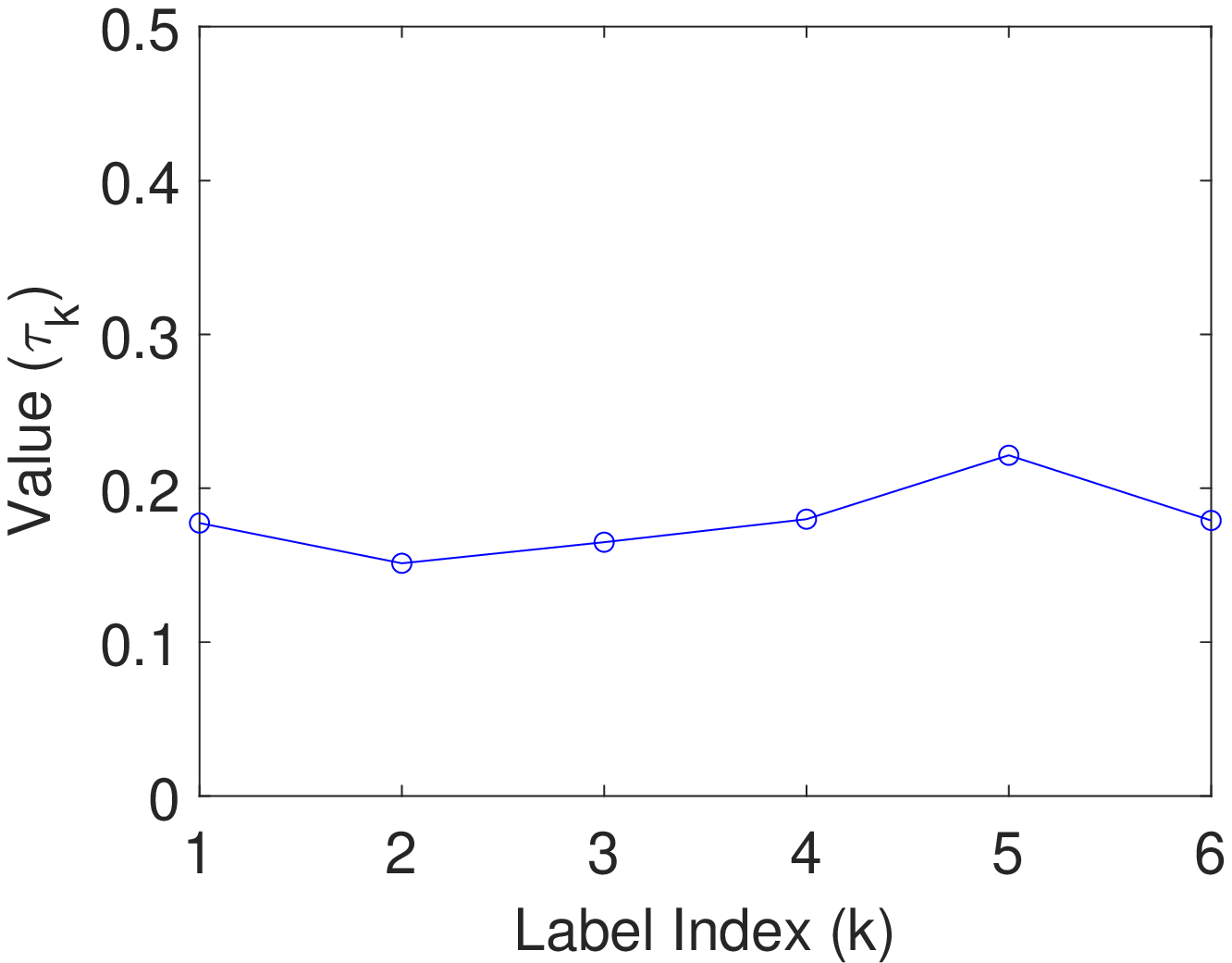}}
        \end{subfigure}
        \begin{subfigure}[yeast]
            {\includegraphics[scale=0.25]{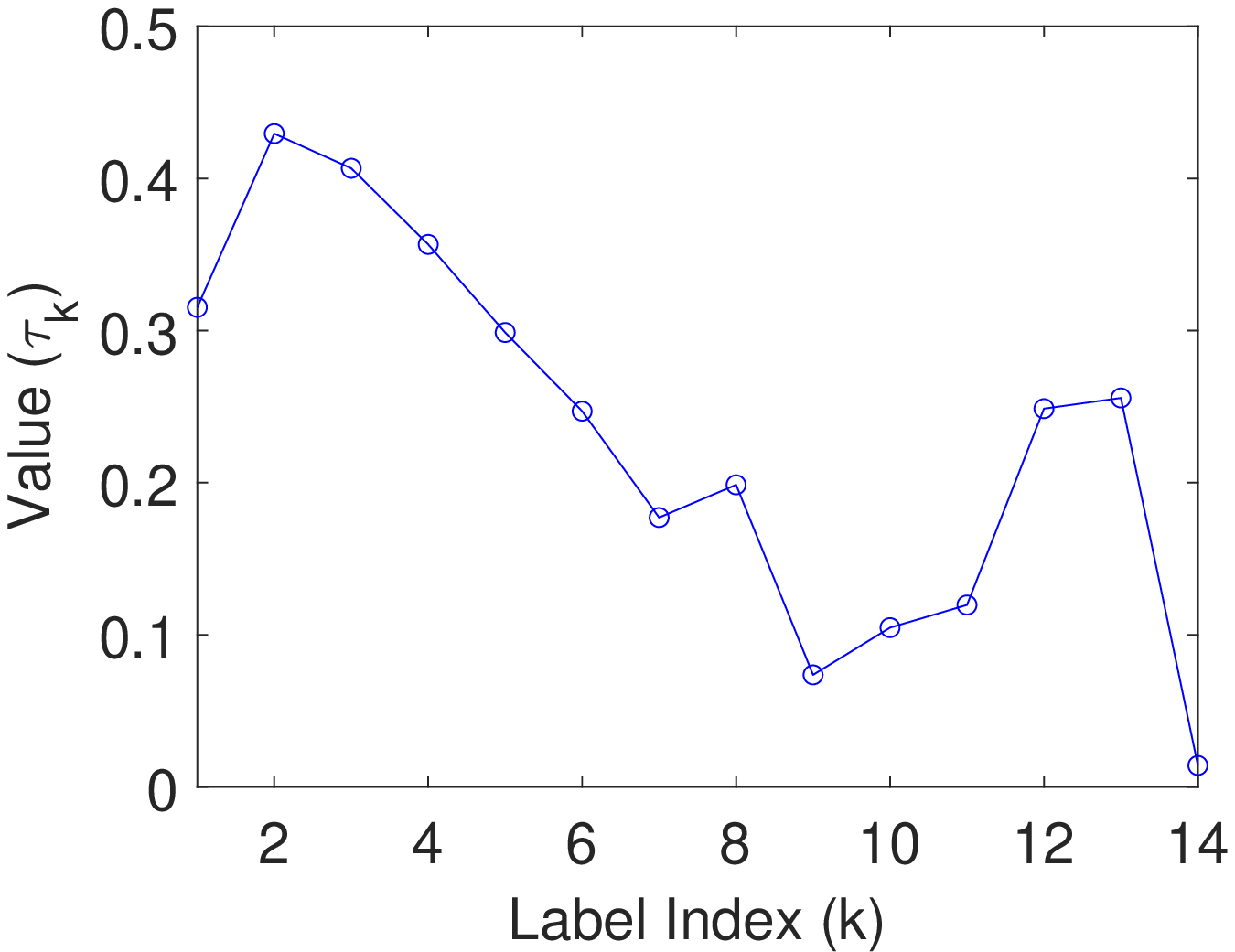}}
        \end{subfigure}
        \begin{subfigure}[enron]
            {\includegraphics[scale=0.25]{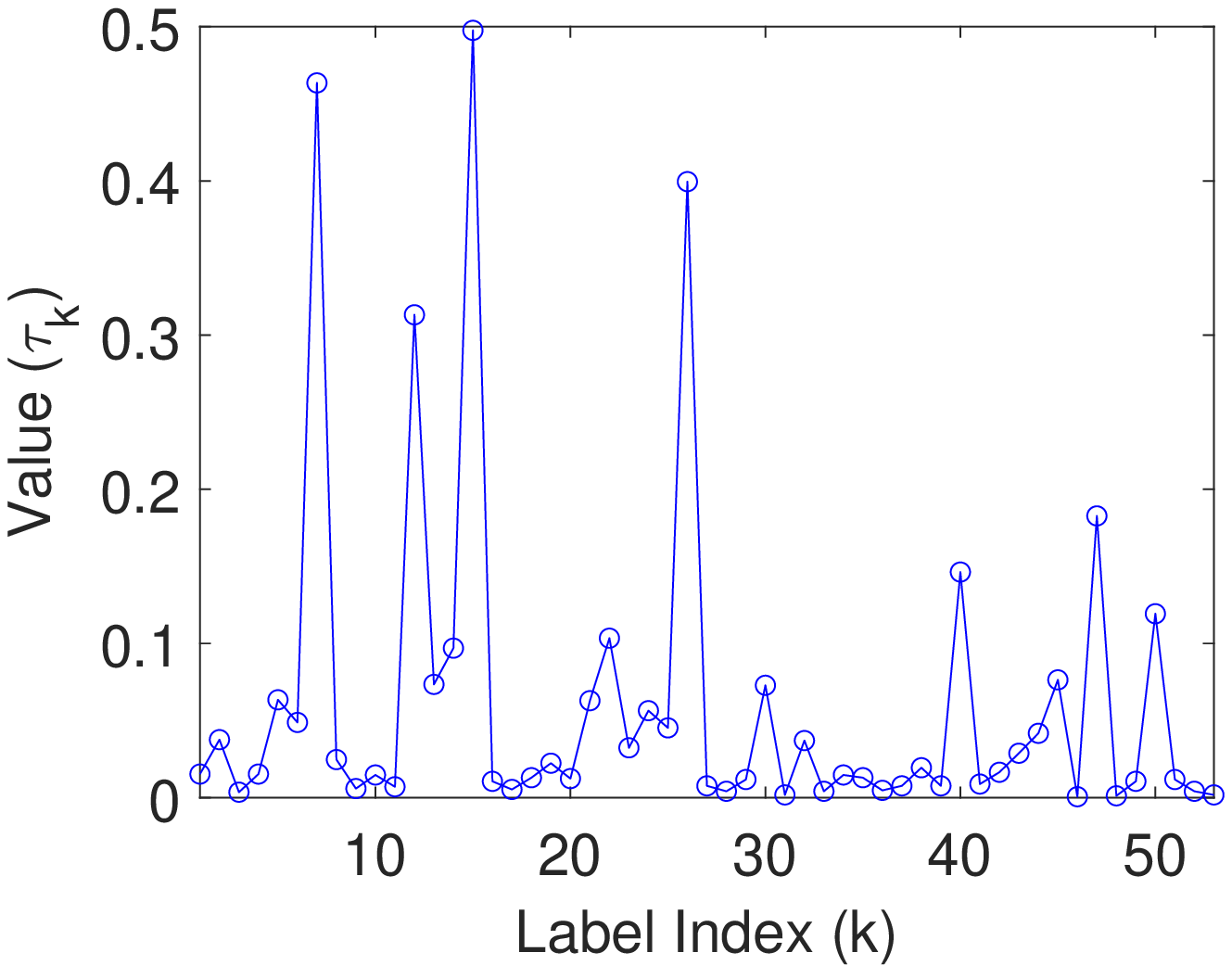}}
        \end{subfigure}
        \begin{subfigure}[rcv1-s1]
            {\includegraphics[scale=0.25]{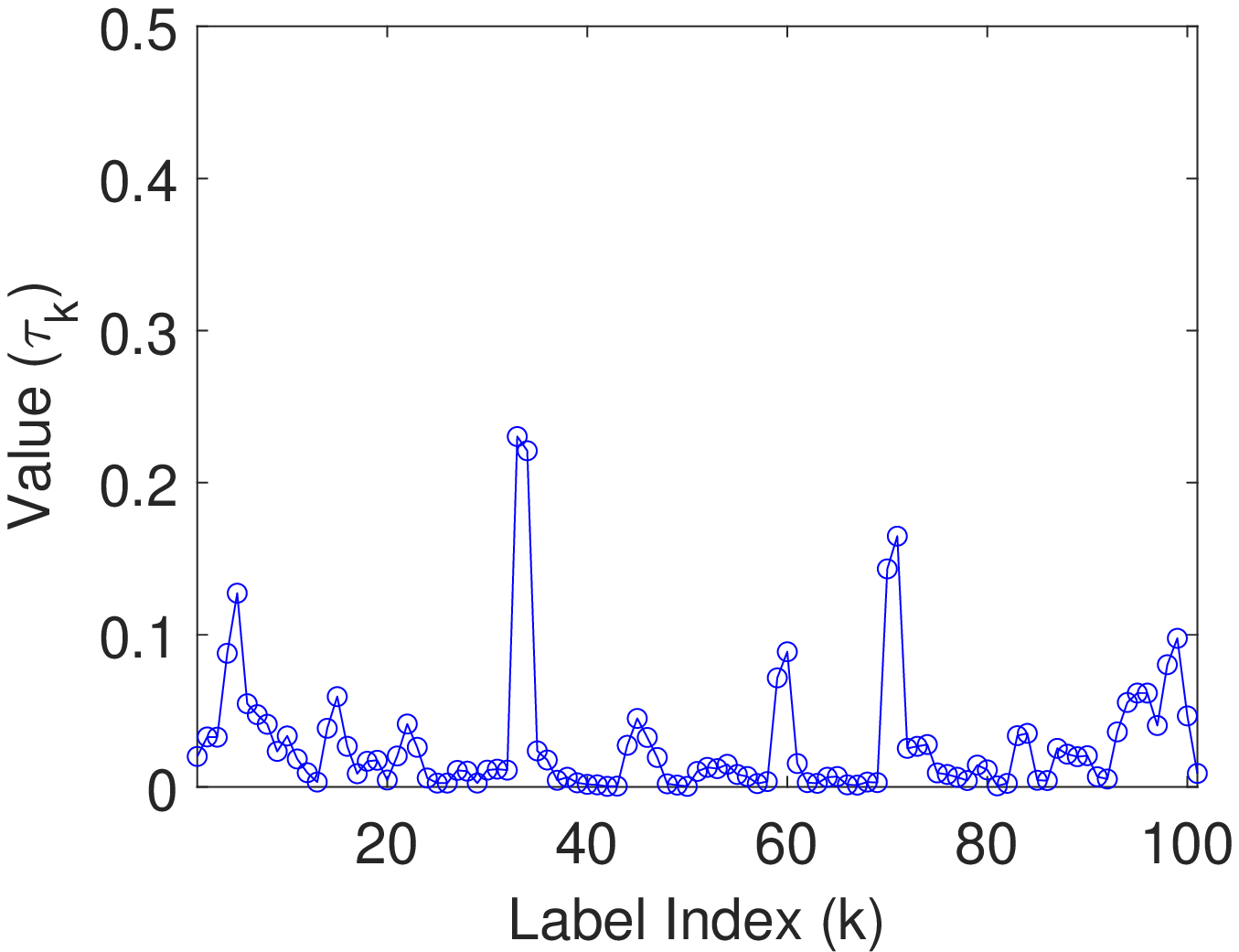}}
        \end{subfigure}
        \begin{subfigure}[bibtex]
            {\includegraphics[scale=0.25]{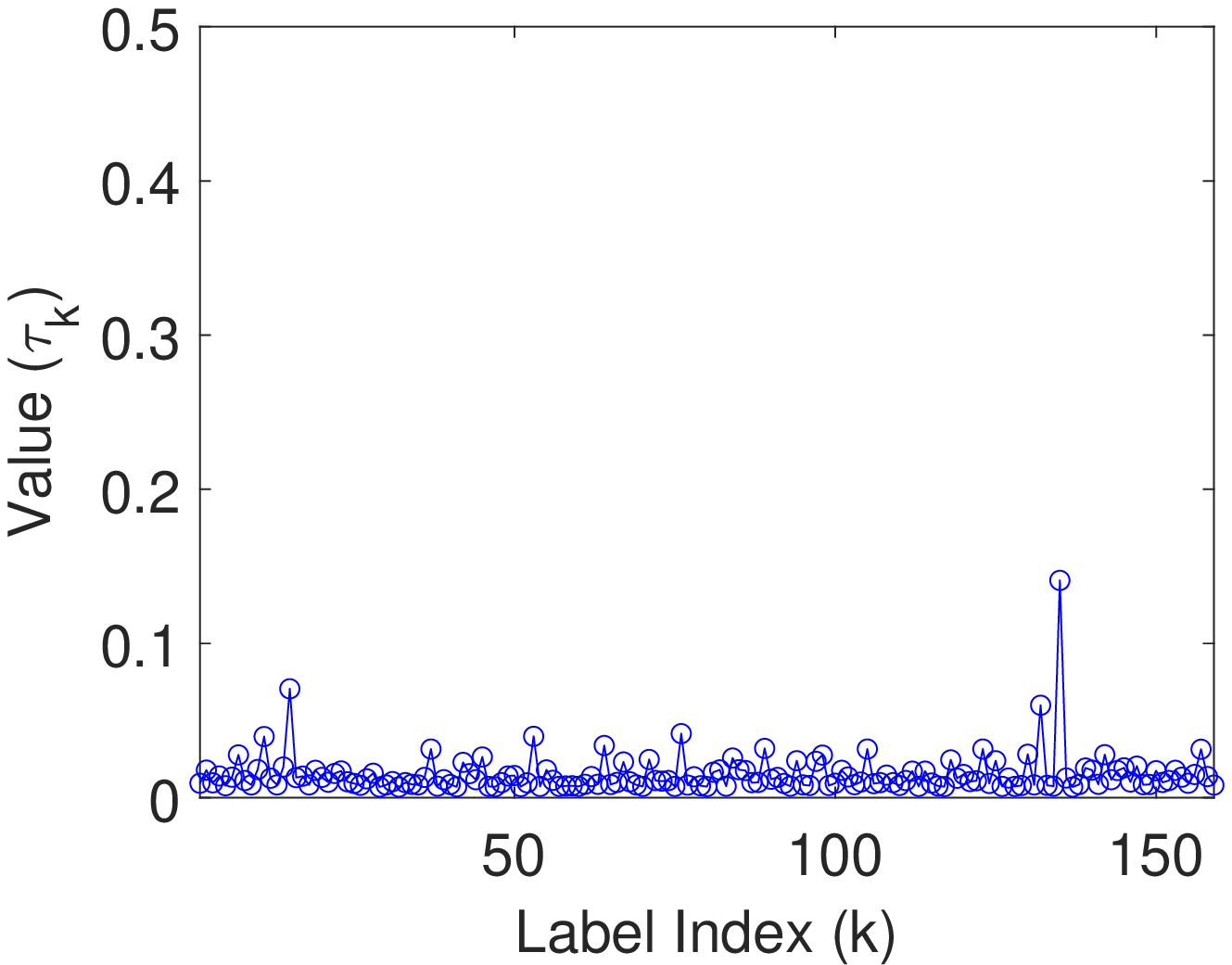}}
        \end{subfigure}
        \begin{subfigure}[corel5k]
            {\includegraphics[scale=0.25]{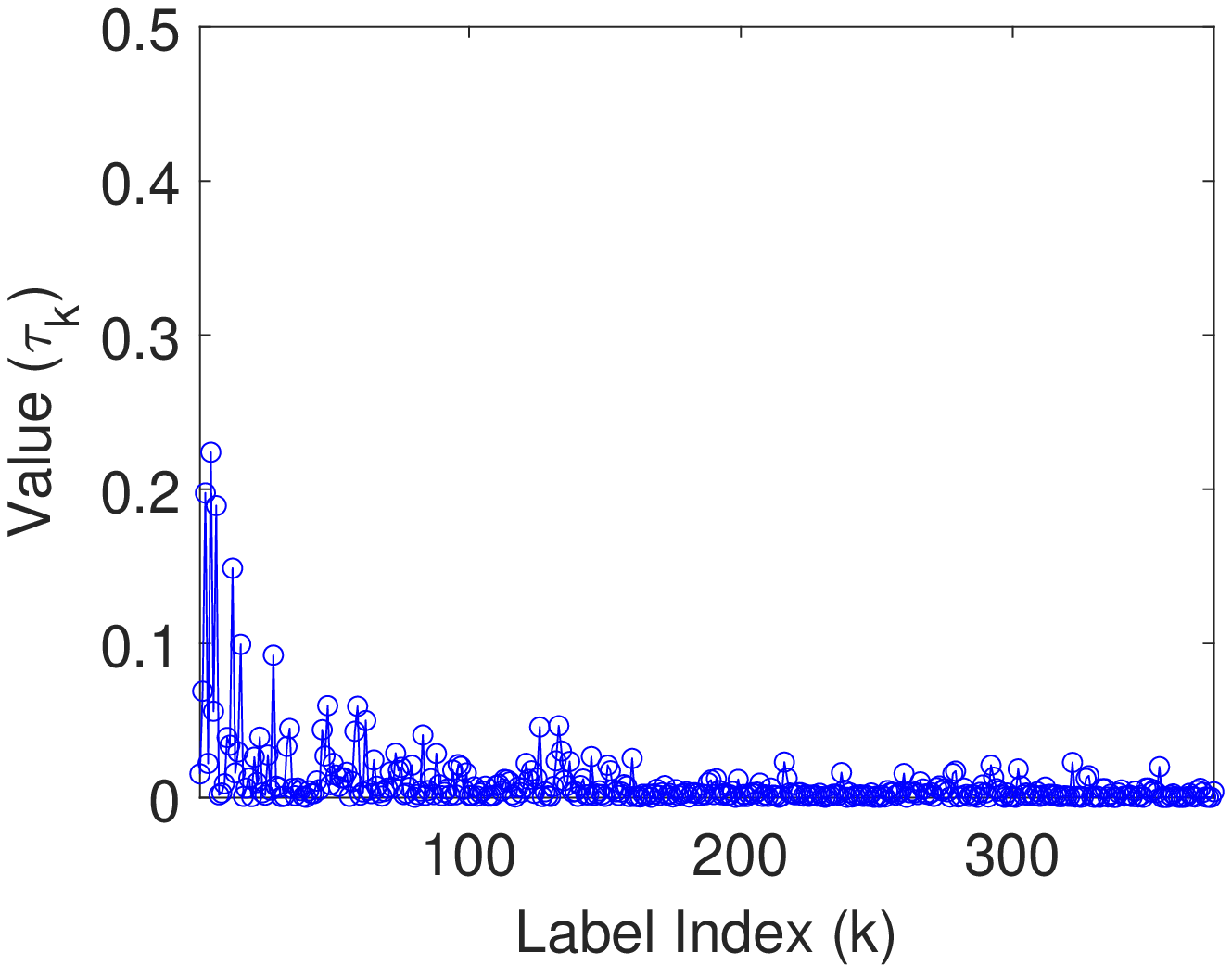}}
        \end{subfigure}
        \begin{subfigure}[delicious]
            {\includegraphics[scale=0.25]{images/delicious.eps}}
        \end{subfigure}
        \caption{The illustration of the label-wise class imbalance of each benchmark dataset.}
        \label{fig:imbalance_benchmarks_full}
\end{figure}

\subsection{Experimental results about the absolute value of bounds}
\label{sec:app_absolute_value_bounds}

Here we report the mean upper bound values of three algorithms for the benchmark datasets in Table~\ref{tab:empirical_bounds}. From the experimental results, we can observe that the absolute values might not reflect the true generalization risk (i.e., bigger than $1$), but they can still offer valuable insights into these algorithms by comparing the order of dependent factors under the same framework.
\begin{table*}[h!]
\scriptsize
\caption{The mean upper bound values of three algorithms for the benchmark datasets. We set $\delta = 0.01$.}
\label{tab:empirical_bounds}
\vskip 0.15in
\begin{center}
\begin{small}
\begin{tabular}{lccc}
\toprule
Dataset & $\mathcal{A}^{pa}$ &  $\mathcal{A}^{u_1}$ & $\mathcal{A}^{u_2}$ \\
\midrule
    CAL500 & $13.0$ & $7082.4$ & $36.3$ \\
    emotions & $13.6$ & $56.8$ & $33.7$ \\
    image & $13.1$ & $191.0$ & $30.7$ \\
    scene & $7.4$ & $146.2$ & $19.6$ \\
    yeast & $3.6$ & $375.2$ & $12.1$ \\
    enron & - & $37016.0$ & $69.0$ \\
    rcv1-s1 & - & $96284.0$ & $20.2$  \\
    bibtex & - & $207.4$ & $55.2$  \\
    corel5k & - & $97359.0$ & $3.1$  \\
    delicious & - & $293000.0$ & $82.8$ \\
\bottomrule
\end{tabular}
\end{small}
\end{center}
\vskip -0.1in
\end{table*}

\section{Additional Related Work}
\label{sec:app_addtional_related_work}

Recently, there have been some works on the McDiarmid-type concentration inequality for data with graph dependence in quite general settings, e.g., ~\citet{zhang2019mcdiarmid}.
However, to the best of our knowledge, it is not clear to apply the conclusion of the existing work~\cite{zhang2019mcdiarmid} (i.e., Theorem 3.6 for the general dependency graph on Page 5) to get the explicit generalization bound in the Macro-AUC maximization of MLC, as the forest complexity of the dependency graph might be non-trivial to estimate in this case. In contrast, our proposed new McDiarmid-type concentration inequality (i.e., Theorem~\ref{thm:new_mcdiarmid}) is easy to apply in this case.

Note that our proposed concentration inequality (i.e., Theorem~\ref{thm:new_mcdiarmid}) is not a corollary of the existing one (i.e., Theorem 3.6 in~\cite{zhang2019mcdiarmid}). They are complementary to each other with different assumptions. Although~\citet{zhang2019mcdiarmid} is general for the general dependency graph, ours consider the particular case with additional assumptions by constraining the function 
 and the dependency graph (i.e., the assumptions (1) and (2) in Theorem~\ref{thm:new_mcdiarmid}), which cannot be induced by~\citet{zhang2019mcdiarmid}.


\end{document}